\documentclass[11pt,a4paper]{article}
\usepackage[margin=1in]{geometry}
\usepackage{times}
\usepackage{tikz}
\usepackage{dsfont}
\usepackage{url}
\usepackage{amsmath}
\usepackage{amsfonts}
\usepackage{amssymb}
\usepackage{booktabs}
\usepackage{algorithm}
\usepackage{algpseudocode}
\usepackage{etoolbox}
\usepackage{verbatim}
\usepackage{multirow}
\usepackage{rotating}
\usepackage{graphicx}
\usepackage{authblk}
\usepackage[numbers,sort&compress]{natbib}
\usepackage[colorlinks=true,linkcolor=blue,citecolor=blue,urlcolor=blue]{hyperref}
\usepackage{bm}
\usepackage{microtype}
\setlength{\emergencystretch}{5em}
\tolerance=2000
\hbadness=2000

\hyphenation{op-tical net-works semi-conduc-tor IEEE-Xplore Archety-pal Min-i-miza-tion Lin-earized non-neg-a-tive iden-ti-fi-ca-tion de-com-po-si-tion}

\newtheorem{assumption}{Assumption}

\newcommand{\mat}[1]{\mathbf{#1}} % Matrix
\renewcommand{\vec}[1]{\mathbf{#1}} % Vector
\newcommand{\T}{\mathcal{T}} % Mask Operator
\newcommand{\X}{\mat{X}}
\newcommand{\W}{\mat{W}}
\newcommand{\N}{\mat{N}}
\newcommand{\M}{\mat{M}}
\newcommand{\V}{\mat{V}}
\newcommand{\Phimap}{\mathbf{\Phi}}

\newtheorem{theorem}{Theorem}
\newtheorem{corollary}[theorem]{Corollary}
\newtheorem{definition}[theorem]{Definition}
\newtheorem{lemma}[theorem]{Lemma}
\newtheorem{proposition}[theorem]{Proposition}
\newtheorem{remark}[theorem]{Remark}
\newenvironment{proof}[1][Proof]{\textbf{#1.} }{\ \rule{0.5em}{0.5em}}

\newcommand{\invertedcommas}[1]{``#1''}
\newcommand{\bPi}{\mathbf \Pi}

\renewcommand{\H}{\mathbf H}
\newcommand{\PP}{\mathbf P}
\newcommand{\0}{\mathbf 0}

\newcommand{\A}{\mathbf A}
\newcommand{\B}{\mathbf B}
\newcommand{\C}{\mathbf C}
\newcommand{\DD}{\mathbf D}

\newcommand{\R}{\mathbf R}

\newcommand{\conv}{\mathrm{Conv}}
\newcommand{\Cone}{\mathrm{Cone}}

\def\eq{\begin{equation}}
\def\qe{\end{equation}}
\def\bz{\mathbf z}
\def\bU{\mathbf U}
\def\conv{\mathrm{Conv}}
\def\bX{\mathbf X}
\def\bH{\mathbf H}

\def\G{{\mathbf G}}

\def\sT{{\sf T}}

\def\conv{{\rm conv}}

 %expectation

\def\one{\mathbf{1}}

  %stationary disTribution

\def\bq{\boldsymbol{q}}
\newcommand\norm[1]{\lVert{#1}\rVert}

\def\W{{\mathbf W}}

\def\one{{\bf 1}}

\def\conv{{\rm{conv}}}
\def\aff{{\rm{aff}}}

\def\reals{\mathds R}

\def\bM{{\boldsymbol M}}
\def\bE{{\boldsymbol E}}
\def\by{{\boldsymbol y}}

\def\bQ{{\boldsymbol Q}}

\def\Id{{\bf I}}

\def\bH{{\boldsymbol H}}

\def\bPi{{\boldsymbol \Pi}}
\def\be{{\boldsymbol e}}

\def\bx{{\boldsymbol x}}
\def\bz{{\boldsymbol z}}
\def\ba{{\boldsymbol a}}

\def\bX{{\boldsymbol X}}
\def\bY{{\boldsymbol Y}}
\def\bZ{{\boldsymbol Z}}
\def\bP{{\boldsymbol P}}
\def\bv{{\boldsymbol v}}
\def\bu{{\boldsymbol u}}

\def\bW{{\boldsymbol W}}
\def\hbH{\widehat{\boldsymbol H}}
\def\tbH{\widetilde{\boldsymbol H}}

\def\bU{{\boldsymbol U}}
\def\bV{{\boldsymbol V}}
\def\bR{{\boldsymbol R}}
\def\bA{{\boldsymbol A}}

\def\balpha{{\boldsymbol \alpha}}

\def\ext{{\rm {ext}}}

\def\bSigma{{\boldsymbol \Sigma}}

   %% To be modified

\def\Treg[#1]{T^{{\rm reg},#1}}
\def\GW[#1]{{\rm GW}(#1)}
\def\MGW[#1]{{\rm MGW}(#1)}

\def\cuD{\mathcal{D}}
\def\cuDsym{\mathcal{D}_{\mathrm{sym}}}

\def\cuL{\mathcal{L}}

\def\balpha{{\boldsymbol \alpha}}

\begin{document}

\title{Time series forecasting from partial observations \\ via Non-negative Matrix Factorization}

\author[1,3]{Yohann De Castro\thanks{Email: \texttt{yohann.de-castro@ec-lyon.fr}}}
\author[2]{Luca Mencarelli\thanks{Email: \texttt{luca.mencarelli@unipi.it}}}
\affil[1]{Institut Camille Jordan, Ecole Centrale Lyon, France}
\affil[2]{Dipartimento di Informatica, Università di Pisa, Italia}
\affil[3]{Institut Universitaire de France}

\date{}

\maketitle

\begin{abstract}
In modern time series problems, one aims at forecasting multiple time series with possible missing and noisy values. In this paper, we introduce the Sliding Mask Method (SMM) for forecasting multiple nonnegative time series by means of nonnegative matrix completion: observed noisy values and forecast/missing values are collected into matrix form, and learning is achieved by representing its rows as a convex combination of a small number of nonnegative vectors, referred to as the archetypes. We introduce two estimates, the mask Archetypal Matrix factorization (mAMF) and the mask normalized Nonnegative Matrix Factorization (mNMF) which can be combined with the SMM method.  We prove that these estimates recover the true archetypes with an error proportional to the noise. We use a proximal alternating linearized method (PALM) to compute the archetypes and the convex combination weights. We compared our estimators with state-of-the-art methods (Transformers, LSTM, SARIMAX...) in multiple time series forecasting on real data and obtain that our method outperforms them in most of the experiments.%
\end{abstract}

\medskip
\noindent\textbf{Keywords:} Time series recovery; Nonnegative matrix factorization; Archetypal matrix factorization; Projected gradient; Proximal alternating linearized minimization.
\medskip

\section{Introduction}
This article investigates forecasting multiple nonnegative time series with missing or noisy entries. We observe $N\geq 1$ time series $\M^{(1)},\ldots, \M^{(N)}\in\mathds R^T$ over a period of time of length $T\geq 1$ and we would like to forecast the next $F\geq 1$ future values by means of matrix completion, see Figure~\ref{fig:arche_model}. We define a matrix $\M\in\R^{N\times T}$ whose rows are denoted by~$\M^{(i)}$ and columns by~$\M_j$. The forecast columns are $\hat{\M}_{T+k}$ for $k=1,\ldots,F$.

\begin{figure*}[!t]
    \centering
    \resizebox{0.8\textwidth}{!}{
    \begin{tikzpicture}

\newcommand{\locone}{-0.5}

\filldraw [fill=green!40!white,draw=green!40!black] (\locone,0-0.5) rectangle (\locone+3,3-0.5);

\filldraw [fill=gray!30] (\locone,0) rectangle (\locone+0.5,0.5);
\filldraw [fill=gray!30] (\locone,1) rectangle (\locone+0.5,1.5);
\filldraw [fill=gray!30] (\locone+0.5,1.5) rectangle (\locone+1,2);
\filldraw [fill=gray!30] (\locone+1,0.5) rectangle (\locone+1.5,1);
\filldraw [fill=gray!30] (\locone+1.5,2) rectangle (\locone+2,2.5);
\filldraw [fill=gray!30] (\locone+2.5,-0.5) rectangle (\locone+3,0);

\draw [step=0.5, very thick, color=white] (\locone+0,0-0.5) grid (\locone+3,3-0.5);
\draw [thick] (\locone,0-0.5) rectangle (\locone+3,3-0.5);

\node at (\locone+0.25,2.8) {\footnotesize\color{black}$\M_1$};
\node at (\locone+0.75,2.8) {\footnotesize\color{black}$\M_2$};
\node at (\locone+1.75,2.8) {\footnotesize\color{black}$\cdots$};
\node at (\locone+2.75,2.8) {\footnotesize\color{black}$\M_T$};
\node at (\locone+1.5,-0.9) {\small data with missing values};

\node at (\locone+0.25,0.25) {\color{black}?};
\node at (\locone+0.25,1.25) {\color{black}?};
\node at (\locone+0.75,1.75) {\color{black}?};
\node at (\locone+1.25,0.75) {\color{black}?};
\node at (\locone+1.75,2.25) {\color{black}?};
\node at (\locone+2.75,-0.25) {\color{black}?};

\node at (\locone+4,1) {\Large\color{gray}$\Longrightarrow$};
\node at (\locone+4,1.5) {\small Modeling};

\newcommand{\loctwo}{4.5}

\filldraw [fill=red!25,draw=green!40!black] (\loctwo,0-0.5) rectangle (\loctwo+1.5,3-0.5);
\draw [step=0.5, very thick, color=white] (\loctwo+0,0-0.5) grid (\loctwo+1.5,3-0.5);
\draw [thick] (\loctwo,0-0.5) rectangle (\loctwo+1.5,3-0.5);

\node at (\loctwo+0.75,-0.9) {\small forecasts};

\node at (\loctwo,2.8) {\footnotesize\color{black}$\hat{\M}_{T+1}$};
\node at (\loctwo+0.75,2.8) {\footnotesize\color{black}$\cdots$};
\node at (\loctwo+1.5,2.8) {\footnotesize\color{black}$\hat{\M}_{T+F}$};

\renewcommand{\locone}{10.5}
\renewcommand{\loctwo}{13.5}

\node at (\locone-1.2,1) {\color{black}$\Big[\M\,\mathcal F_{N\times F}\Big]=$};

\filldraw [fill=green!40!white,draw=green!40!black] (\locone,0-0.5) rectangle (\locone+3,3-0.5);

\filldraw [fill=gray!30] (\locone,0) rectangle (\locone+0.5,0.5);
\filldraw [fill=gray!30] (\locone,1) rectangle (\locone+0.5,1.5);
\filldraw [fill=gray!30] (\locone+0.5,1.5) rectangle (\locone+1,2);
\filldraw [fill=gray!30] (\locone+1,0.5) rectangle (\locone+1.5,1);
\filldraw [fill=gray!30] (\locone+1.5,2) rectangle (\locone+2,2.5);
\filldraw [fill=gray!30] (\locone+2.5,-0.5) rectangle (\locone+3,0);

\draw [step=0.5, very thick, color=white] (\locone+0,0-0.5) grid (\locone+4,3-0.5);

\node at (\locone+0.25,0.25) {\color{black}$\diamondsuit$};
\node at (\locone+0.25,1.25) {\color{black}$\diamondsuit$};
\node at (\locone+0.75,1.75) {\color{black}$\diamondsuit$};
\node at (\locone+1.25,0.75) {\color{black}$\diamondsuit$};
\node at (\locone+1.75,2.25) {\color{black}$\diamondsuit$};
\node at (\locone+2.75,-0.25) {\color{black}$\diamondsuit$};

\filldraw [fill=red!25,draw=green!40!black] (\loctwo,0-0.5) rectangle (\loctwo+1.5,3-0.5);
\draw [step=0.5, very thick, color=white] (\loctwo-0.5,0-0.5) grid (\loctwo+1.5,3-0.5);
\draw [thick] (\locone,0-0.5) rectangle (\locone+4.5,3-0.5);

\node at (\loctwo+0.25,2.25) {\color{black}$\diamondsuit$};
\node at (\loctwo+0.25,1.75) {\color{black}$\diamondsuit$};
\node at (\loctwo+0.25,1.25) {\color{black}$\diamondsuit$};
\node at (\loctwo+0.25,0.75) {\color{black}$\diamondsuit$};
\node at (\loctwo+0.25,0.25) {\color{black}$\diamondsuit$};
\node at (\loctwo+0.25,-0.25) {\color{black}$\diamondsuit$};

\node at (\loctwo+0.75,2.25) {\color{black}$\diamondsuit$};
\node at (\loctwo+0.75,1.75) {\color{black}$\diamondsuit$};
\node at (\loctwo+0.75,1.25) {\color{black}$\diamondsuit$};
\node at (\loctwo+0.75,0.75) {\color{black}$\diamondsuit$};
\node at (\loctwo+0.75,0.25) {\color{black}$\diamondsuit$};
\node at (\loctwo+0.75,-0.25) {\color{black}$\diamondsuit$};

\node at (\loctwo+1.25,2.25) {\color{black}$\diamondsuit$};
\node at (\loctwo+1.25,1.75) {\color{black}$\diamondsuit$};
\node at (\loctwo+1.25,1.25) {\color{black}$\diamondsuit$};
\node at (\loctwo+1.25,0.75) {\color{black}$\diamondsuit$};
\node at (\loctwo+1.25,0.25) {\color{black}$\diamondsuit$};
\node at (\loctwo+1.25,-0.25) {\color{black}$\diamondsuit$};

\end{tikzpicture}
    }
    \caption{{\tt [Left]} Noisy multiple time series observations (green) with possible missing entries (question mark) from $N\geq 1$ time series and their~$F$ forecast values in red. {\tt [Right]} Matrix completion problem under consideration: missing and forecast values~$\mathcal F_{N\times F}$ (gray and red diamonds) are not~observed.% and one aim at estimating them by means of matrix completion. %These unknown values are arbitrarily set to $0$ and will not be taken into account in estimation or computation.
    }
    \label{fig:arche_model}
\end{figure*}

\medskip

The matrix completion problem depicted in Figure~\ref{fig:arche_model} is ill-posed; standard low-rank techniques cannot recover the missing future values without structural assumptions. To address this, we introduce a deterministic transformation $\Phimap$ based on a sliding window approach, referred to as the \emph{Sliding Mask Method (SMM)}.

\begin{itemize}
    \item \textbf{Stride Parameter ($P$):} We define a scalar $P \geq 1$, which determines the stride (or step size) of the sliding window. While often chosen to match a suspected seasonality in the data (e.g., $P=7$ for weekly cycles), $P$ is a user-defined hyperparameter and does not strictly require intrinsic signal periodicity.
    
    \item \textbf{Block Construction:} We partition the total time horizon $T+F$ into $B$ blocks of length $P$. To ensure integer division, we pad the end of the time series with at most $P-1$ placeholder columns (which are treated as unobserved). Thus, $B = \lceil (T+F)/P \rceil$.
    
    \item \textbf{Sliding Window Transformation:} The output matrix is constructed by stacking windows of length $WP$, where $W$ is the number of consecutive sub-blocks per window. This transforms the original $N \times (T+F)$ matrix into a larger matrix where rows represent local time-segments.
    
    \item \textbf{Forecasting as Completion:} We assume $P \geq F$, i.e., the stride is at least as large as the forecast horizon. This guarantees that every forecast column is contained in the last window only, so all unobserved future values are gathered into the bottom-right $N \times F$ block $\mathcal F_{N\times F}$ of the SMM matrix (Figure~\ref{fig:blocks}). The resulting structured pattern recasts the temporal forecasting problem as a matrix completion problem.
\end{itemize}

\begin{figure*}[!t]
    \centering
\resizebox{0.8\textwidth}{!}{%
    \begin{tikzpicture}
    
\newcommand{\locone}{0}
\newcommand{\loctwo}{3}

\node at (\locone+2,+4.75) {\color{black}$\Big[\M\,\mathcal F_{N \times F}\Big]$};
\node at (\locone+2,-2.75) {\color{black}$N\times(T+F)$};

\filldraw [fill=green!40!white,draw=green!40!black] (\locone,0-0.5) rectangle (\locone+3,3-0.5);

\filldraw [fill=gray!30] (\locone,0) rectangle (\locone+0.5,0.5);
\filldraw [fill=gray!30] (\locone,1) rectangle (\locone+0.5,1.5);
\filldraw [fill=gray!30] (\locone+0.5,1.5) rectangle (\locone+1,2);
\filldraw [fill=gray!30] (\locone+1,0.5) rectangle (\locone+1.5,1);
\filldraw [fill=gray!30] (\locone+1.5,2) rectangle (\locone+2,2.5);
\filldraw [fill=gray!30] (\locone+2.5,-0.5) rectangle (\locone+3,0);

\draw [step=0.5, very thick, color=white] (\locone+0,0-0.5) grid (\locone+4,3-0.5);

\node at (\locone+0.25,0.25) {\color{black}$e$};
\node at (\locone+0.25,1.25) {\color{black}$c$};
\node at (\locone+0.75,1.75) {\color{black}$b$};
\node at (\locone+1.25,0.75) {\color{black}$d$};
\node at (\locone+1.75,2.25) {\color{black}$a$};
\node at (\locone+2.75,-0.25) {\color{black}$f$};

\filldraw [fill=red!25,draw=green!40!black] (\loctwo,0-0.5) rectangle (\loctwo+1.5,3-0.5);
\draw [step=0.5, very thick, color=white] (\loctwo-0.5,0-0.5) grid (\loctwo+1.5,3-0.5);
\draw [thick] (\locone,0-0.5) rectangle (\locone+4.5,3-0.5);

\node at (\loctwo+0.25,2.25) {\color{black}$g$};
\node at (\loctwo+0.25,1.75) {\color{black}$j$};
\node at (\loctwo+0.25,1.25) {\color{black}$m$};
\node at (\loctwo+0.25,0.75) {\color{black}$p$};
\node at (\loctwo+0.25,0.25) {\color{black}$s$};
\node at (\loctwo+0.25,-0.25) {\color{black}$v$};

\node at (\loctwo+0.75,2.25) {\color{black}$h$};
\node at (\loctwo+0.75,1.75) {\color{black}$k$};
\node at (\loctwo+0.75,1.25) {\color{black}$n$};
\node at (\loctwo+0.75,0.75) {\color{black}$q$};
\node at (\loctwo+0.75,0.25) {\color{black}$t$};
\node at (\loctwo+0.75,-0.25) {\color{black}$w$};

\node at (\loctwo+1.25,2.25) {\color{black}$i$};
\node at (\loctwo+1.25,1.75) {\color{black}$l$};
\node at (\loctwo+1.25,1.25) {\color{black}$o$};
\node at (\loctwo+1.25,0.75) {\color{black}$r$};
\node at (\loctwo+1.25,0.25) {\color{black}$u$};
\node at (\loctwo+1.25,-0.25) {\color{black}$x$};

\newcommand{\locthree}{6.5}

\node at (\locthree+1.5,-2.75) {\color{black}$N(B+1-W)\times WP$};

\filldraw [fill=green!40!white,draw=green!40!black] (\locthree,3.5) rectangle (\locthree+0.5,4);
\filldraw [fill=green!40!white,draw=green!40!black] (\locthree+0.5,3.5) rectangle (\locthree+1,4);
\filldraw [fill=green!40!white,draw=green!40!black] (\locthree+1,3.5) rectangle (\locthree+1.5,4);
\filldraw [fill=gray!30] (\locthree+1.5,3.5) rectangle (\locthree+2,4);
\node at (\locthree+1.75,3.75) {\color{black}$a$};
\filldraw [fill=green!40!white,draw=green!40!black] (\locthree+2,3.5) rectangle (\locthree+2.5,4);
\filldraw [fill=green!40!white,draw=green!40!black] (\locthree+2.5,3.5) rectangle (\locthree+3,4);

\filldraw [fill=gray!30] (\locthree,3) rectangle (\locthree+0.5,3.5);
\node at (\locthree+0.25,3.25) {\color{black}$a$};
\filldraw [fill=green!40!white,draw=green!40!black] (\locthree+0.5,3) rectangle (\locthree+1,3.5);
\filldraw [fill=green!40!white,draw=green!40!black] (\locthree+1,3) rectangle (\locthree+1.5,3.5);
\filldraw [fill=red!25,draw=green!40!black] (\locthree+1.5,3) rectangle (\locthree+2,3.5);
\node at (\locthree+1.75,3.25) {\color{black}$g$};
\filldraw [fill=red!25,draw=green!40!black] (\locthree+2,3) rectangle (\locthree+2.5,3.5);
\node at (\locthree+2.25,3.25) {\color{black}$h$};
\filldraw [fill=red!25,draw=green!40!black] (\locthree+2.5,3) rectangle (\locthree+3,3.5);
\node at (\locthree+2.75,3.25) {\color{black}$i$};

\filldraw [fill=green!40!white,draw=green!40!black] (\locthree,2.5) rectangle (\locthree+0.5,3);
\filldraw [fill=gray!30] (\locthree+0.5,2.5) rectangle (\locthree+1,3);
\node at (\locthree+0.75,2.75) {\color{black}$b$};
\filldraw [fill=green!40!white,draw=green!40!black] (\locthree+1,2.5) rectangle (\locthree+1.5,3);
\filldraw [fill=green!40!white,draw=green!40!black] (\locthree+1.5,2.5) rectangle (\locthree+2,3);
\filldraw [fill=green!40!white,draw=green!40!black] (\locthree+2,2.5) rectangle (\locthree+2.5,3);
\filldraw [fill=green!40!white,draw=green!40!black] (\locthree+2.5,2.5) rectangle (\locthree+3,3);

\filldraw [fill=green!40!white,draw=green!40!black] (\locthree,2) rectangle (\locthree+0.5,2.5);
\filldraw [fill=green!40!white,draw=green!40!black] (\locthree+0.5,2) rectangle (\locthree+1,2.5);
\filldraw [fill=green!40!white,draw=green!40!black] (\locthree+1,2) rectangle (\locthree+1.5,2.5);
\filldraw [fill=red!25,draw=green!40!black] (\locthree+1.5,2) rectangle (\locthree+2,2.5);
\filldraw [fill=red!25,draw=green!40!black] (\locthree+2,2) rectangle (\locthree+2.5,2.5);
\filldraw [fill=red!25,draw=green!40!black] (\locthree+2.5,2) rectangle (\locthree+3,2.5);
\node at (\locthree+1.75,2.25) {\color{black}$j$};
\node at (\locthree+2.25,2.25) {\color{black}$k$};
\node at (\locthree+2.75,2.25) {\color{black}$l$};

\filldraw [fill=gray!30] (\locthree,1.5) rectangle (\locthree+0.5,2);
\node at (\locthree+0.25,1.75) {\color{black}$c$};
\filldraw [fill=green!40!white,draw=green!40!black] (\locthree+0.5,1.5) rectangle (\locthree+1,2);
\filldraw [fill=green!40!white,draw=green!40!black] (\locthree+1,1.5) rectangle (\locthree+1.5,2);
\filldraw [fill=green!40!white,draw=green!40!black] (\locthree+1.5,1.5) rectangle (\locthree+2,2);
\filldraw [fill=green!40!white,draw=green!40!black] (\locthree+2,1.5) rectangle (\locthree+2.5,2);
\filldraw [fill=green!40!white,draw=green!40!black] (\locthree+2.5,1.5) rectangle (\locthree+3,2);

\filldraw [fill=green!40!white,draw=green!40!black] (\locthree,1) rectangle (\locthree+0.5,1.5);
\filldraw [fill=green!40!white,draw=green!40!black] (\locthree+0.5,1) rectangle (\locthree+1,1.5);
\filldraw [fill=green!40!white,draw=green!40!black] (\locthree+1,1) rectangle (\locthree+1.5,1.5);
\filldraw [fill=red!25,draw=green!40!black] (\locthree+1.5,1) rectangle (\locthree+2,1.5);
\filldraw [fill=red!25,draw=green!40!black] (\locthree+2,1) rectangle (\locthree+2.5,1.5);
\filldraw [fill=red!25,draw=green!40!black] (\locthree+2.5,1) rectangle (\locthree+3,1.5);
\node at (\locthree+1.75,1.25) {\color{black}$m$};
\node at (\locthree+2.25,1.25) {\color{black}$n$};
\node at (\locthree+2.75,1.25) {\color{black}$o$};

\filldraw [fill=green!40!white,draw=green!40!black] (\locthree,0.5) rectangle (\locthree+0.5,1);
\filldraw [fill=green!40!white,draw=green!40!black] (\locthree+0.5,0.5) rectangle (\locthree+1,1);
\filldraw [fill=gray!30] (\locthree+1,0.5) rectangle (\locthree+1.5,1);
\node at (\locthree+1.25,0.75) {\color{black}$d$};
\filldraw [fill=green!40!white,draw=green!40!black] (\locthree+1.5,0.5) rectangle (\locthree+2,1);
\filldraw [fill=green!40!white,draw=green!40!black] (\locthree+2,0.5) rectangle (\locthree+2.5,1);
\filldraw [fill=green!40!white,draw=green!40!black] (\locthree+2.5,0.5) rectangle (\locthree+3,1);

\filldraw [fill=green!40!white,draw=green!40!black] (\locthree,0) rectangle (\locthree+0.5,0.5);
\filldraw [fill=green!40!white,draw=green!40!black] (\locthree+0.5,0) rectangle (\locthree+1,0.5);
\filldraw [fill=green!40!white,draw=green!40!black] (\locthree+1,0) rectangle (\locthree+1.5,0.5);
\filldraw [fill=red!25,draw=green!40!black] (\locthree+1.5,0) rectangle (\locthree+2,0.5);
\filldraw [fill=red!25,draw=green!40!black] (\locthree+2,0) rectangle (\locthree+2.5,0.5);
\filldraw [fill=red!25,draw=green!40!black] (\locthree+2.5,0) rectangle (\locthree+3,0.5);
\node at (\locthree+1.75,0.25) {\color{black}$p$};
\node at (\locthree+2.25,0.25) {\color{black}$q$};
\node at (\locthree+2.75,0.25) {\color{black}$r$};

\filldraw [fill=gray!30] (\locthree,-0.5) rectangle (\locthree+0.5,0);
\node at (\locthree+0.25,-0.25) {\color{black}$e$};
\filldraw [fill=green!40!white,draw=green!40!black] (\locthree+0.5,-0.5) rectangle (\locthree+1,0);
\filldraw [fill=green!40!white,draw=green!40!black] (\locthree+1,-0.5) rectangle (\locthree+1.5,0);
\filldraw [fill=green!40!white,draw=green!40!black] (\locthree+1.5,-0.5) rectangle (\locthree+2,0);
\filldraw [fill=green!40!white,draw=green!40!black] (\locthree+2,-0.5) rectangle (\locthree+2.5,0);
\filldraw [fill=green!40!white,draw=green!40!black] (\locthree+2.5,-0.5) rectangle (\locthree+3,0);

\filldraw [fill=green!40!white,draw=green!40!black] (\locthree,-1) rectangle (\locthree+0.5,-0.5);
\filldraw [fill=green!40!white,draw=green!40!black] (\locthree+0.5,-1) rectangle (\locthree+1,-0.5);
\filldraw [fill=green!40!white,draw=green!40!black] (\locthree+1,-1) rectangle (\locthree+1.5,-0.5);
\filldraw [fill=red!25,draw=green!40!black] (\locthree+1.5,-1) rectangle (\locthree+2,-0.5);
\filldraw [fill=red!25,draw=green!40!black] (\locthree+2,-1) rectangle (\locthree+2.5,-0.5);
\filldraw [fill=red!25,draw=green!40!black] (\locthree+2.5,-1) rectangle (\locthree+3,-0.5);
\node at (\locthree+1.75,-0.75) {\color{black}$s$};
\node at (\locthree+2.25,-0.75) {\color{black}$t$};
\node at (\locthree+2.75,-0.75) {\color{black}$u$};

\filldraw [fill=green!40!white,draw=green!40!black] (\locthree,-1.5) rectangle (\locthree+0.5,-1);
\filldraw [fill=green!40!white,draw=green!40!black] (\locthree+0.5,-1.5) rectangle (\locthree+1,-1);
\filldraw [fill=green!40!white,draw=green!40!black] (\locthree+1,-1.5) rectangle (\locthree+1.5,-1);
\filldraw [fill=green!40!white,draw=green!40!black] (\locthree+1.5,-1.5) rectangle (\locthree+2,-1);
\filldraw [fill=green!40!white,draw=green!40!black] (\locthree+2,-1.5) rectangle (\locthree+2.5,-1);
\filldraw [fill=gray!30] (\locthree+2.5,-1.5) rectangle (\locthree+3,-1);
\node at (\locthree+2.75,-1.25) {\color{black}$f$};

\filldraw [fill=green!40!white,draw=green!40!black] (\locthree,-2) rectangle (\locthree+0.5,-1.5);
\filldraw [fill=green!40!white,draw=green!40!black] (\locthree+0.5,-2) rectangle (\locthree+1,-1.5);
\filldraw [fill=gray!30] (\locthree+1,-2) rectangle (\locthree+1.5,-1.5);
\node at (\locthree+1.25,-1.75) {\color{black}$f$};
\filldraw [fill=red!25,draw=green!40!black] (\locthree+1.5,-2) rectangle (\locthree+2,-1.5);
\filldraw [fill=red!25,draw=green!40!black] (\locthree+2,-2) rectangle (\locthree+2.5,-1.5);
\filldraw [fill=red!25,draw=green!40!black] (\locthree+2.5,-2) rectangle (\locthree+3,-1.5);
\node at (\locthree+1.75,-1.75) {\color{black}$v$};
\node at (\locthree+2.25,-1.75) {\color{black}$w$};
\node at (\locthree+2.75,-1.75) {\color{black}$x$};

\draw [step=0.5, very thick, color=white] (\locthree-0.5,-2) grid (\locthree+3,4);
\draw [thick] (\locthree,-2) rectangle (\locthree+3,4);

\renewcommand{\locthree}{12.5}

\filldraw [fill=green!40!white,draw=green!40!black] (\locthree,3.5) rectangle (\locthree+0.5,4);
\filldraw [fill=green!40!white,draw=green!40!black] (\locthree+0.5,3.5) rectangle (\locthree+1,4);
\filldraw [fill=green!40!white,draw=green!40!black] (\locthree+1,3.5) rectangle (\locthree+1.5,4);
\filldraw [fill=gray!30] (\locthree+1.5,3.5) rectangle (\locthree+2,4);
\node at (\locthree+1.75,3.75) {\color{black}$a$};
\filldraw [fill=green!40!white,draw=green!40!black] (\locthree+2,3.5) rectangle (\locthree+2.5,4);
\filldraw [fill=green!40!white,draw=green!40!black] (\locthree+2.5,3.5) rectangle (\locthree+3,4);

\filldraw [fill=gray!30] (\locthree,0.5) rectangle (\locthree+0.5,1);
\node at (\locthree+0.25,0.75) {\color{black}$a$};
\filldraw [fill=green!40!white,draw=green!40!black] (\locthree+0.5,0.5) rectangle (\locthree+1,1);
\filldraw [fill=green!40!white,draw=green!40!black] (\locthree+1,0.5) rectangle (\locthree+1.5,1);
\filldraw [fill=red!25,draw=green!40!black] (\locthree+1.5,0.5) rectangle (\locthree+2,1);
\node at (\locthree+1.75,0.75) {\color{black}$g$};
\filldraw [fill=red!25,draw=green!40!black] (\locthree+2,0.5) rectangle (\locthree+2.5,1);
\node at (\locthree+2.25,0.75) {\color{black}$h$};
\filldraw [fill=red!25,draw=green!40!black] (\locthree+2.5,0.5) rectangle (\locthree+3,1);
\node at (\locthree+2.75,0.75) {\color{black}$i$};

\filldraw [fill=green!40!white,draw=green!40!black] (\locthree,3) rectangle (\locthree+0.5,3.5);
\filldraw [fill=gray!30] (\locthree+0.5,3) rectangle (\locthree+1,3.5);
\node at (\locthree+0.75,3.25) {\color{black}$b$};
\filldraw [fill=green!40!white,draw=green!40!black] (\locthree+1,3) rectangle (\locthree+1.5,3.5);
\filldraw [fill=green!40!white,draw=green!40!black] (\locthree+1.5,3) rectangle (\locthree+2,3.5);
\filldraw [fill=green!40!white,draw=green!40!black] (\locthree+2,3) rectangle (\locthree+2.5,3.5);
\filldraw [fill=green!40!white,draw=green!40!black] (\locthree+2.5,3) rectangle (\locthree+3,3.5);

\filldraw [fill=green!40!white,draw=green!40!black] (\locthree,0) rectangle (\locthree+0.5,0.5);
\filldraw [fill=green!40!white,draw=green!40!black] (\locthree+0.5,0) rectangle (\locthree+1,0.5);
\filldraw [fill=green!40!white,draw=green!40!black] (\locthree+1,0) rectangle (\locthree+1.5,0.5);
\filldraw [fill=red!25,draw=green!40!black] (\locthree+1.5,0) rectangle (\locthree+2,0.5);
\filldraw [fill=red!25,draw=green!40!black] (\locthree+2,0) rectangle (\locthree+2.5,0.5);
\filldraw [fill=red!25,draw=green!40!black] (\locthree+2.5,0) rectangle (\locthree+3,0.5);
\node at (\locthree+1.75,0.25) {\color{black}$j$};
\node at (\locthree+2.25,0.25) {\color{black}$k$};
\node at (\locthree+2.75,0.25) {\color{black}$l$};

\filldraw [fill=gray!30] (\locthree,2.5) rectangle (\locthree+0.5,3);
\node at (\locthree+0.25,2.75) {\color{black}$c$};
\filldraw [fill=green!40!white,draw=green!40!black] (\locthree+0.5,2.5) rectangle (\locthree+1,3);
\filldraw [fill=green!40!white,draw=green!40!black] (\locthree+1,2.5) rectangle (\locthree+1.5,3);
\filldraw [fill=green!40!white,draw=green!40!black] (\locthree+1.5,2.5) rectangle (\locthree+2,3);
\filldraw [fill=green!40!white,draw=green!40!black] (\locthree+2,2.5) rectangle (\locthree+2.5,3);
\filldraw [fill=green!40!white,draw=green!40!black] (\locthree+2.5,2.5) rectangle (\locthree+3,3);

\filldraw [fill=green!40!white,draw=green!40!black] (\locthree,-0.5) rectangle (\locthree+0.5,0);
\filldraw [fill=green!40!white,draw=green!40!black] (\locthree+0.5,-0.5) rectangle (\locthree+1,0);
\filldraw [fill=green!40!white,draw=green!40!black] (\locthree+1,-0.5) rectangle (\locthree+1.5,0);
\filldraw [fill=red!25,draw=green!40!black] (\locthree+1.5,-0.5) rectangle (\locthree+2,0);
\filldraw [fill=red!25,draw=green!40!black] (\locthree+2,-0.5) rectangle (\locthree+2.5,0);
\filldraw [fill=red!25,draw=green!40!black] (\locthree+2.5,-0.5) rectangle (\locthree+3,0);
\node at (\locthree+1.75,-0.25) {\color{black}$m$};
\node at (\locthree+2.25,-0.25) {\color{black}$n$};
\node at (\locthree+2.75,-0.25) {\color{black}$o$};

\filldraw [fill=green!40!white,draw=green!40!black] (\locthree,2) rectangle (\locthree+0.5,2.5);
\filldraw [fill=green!40!white,draw=green!40!black] (\locthree+0.5,2) rectangle (\locthree+1,2.5);
\filldraw [fill=gray!30] (\locthree+1,2) rectangle (\locthree+1.5,2.5);
\node at (\locthree+1.25,2.25) {\color{black}$d$};
\filldraw [fill=green!40!white,draw=green!40!black] (\locthree+1.5,2) rectangle (\locthree+2,2.5);
\filldraw [fill=green!40!white,draw=green!40!black] (\locthree+2,2) rectangle (\locthree+2.5,2.5);
\filldraw [fill=green!40!white,draw=green!40!black] (\locthree+2.5,2) rectangle (\locthree+3,2.5);

\filldraw [fill=green!40!white,draw=green!40!black] (\locthree,-1) rectangle (\locthree+0.5,-0.5);
\filldraw [fill=green!40!white,draw=green!40!black] (\locthree+0.5,-1) rectangle (\locthree+1,-0.5);
\filldraw [fill=green!40!white,draw=green!40!black] (\locthree+1,-1) rectangle (\locthree+1.5,-0.5);
\filldraw [fill=red!25,draw=green!40!black] (\locthree+1.5,-1) rectangle (\locthree+2,-0.5);
\filldraw [fill=red!25,draw=green!40!black] (\locthree+2,-1) rectangle (\locthree+2.5,-0.5);
\filldraw [fill=red!25,draw=green!40!black] (\locthree+2.5,-1) rectangle (\locthree+3,-0.5);
\node at (\locthree+1.75,-0.75) {\color{black}$p$};
\node at (\locthree+2.25,-0.75) {\color{black}$q$};
\node at (\locthree+2.75,-0.75) {\color{black}$r$};

\filldraw [fill=gray!30] (\locthree,1.5) rectangle (\locthree+1,2);
\node at (\locthree+0.25,1.75) {\color{black}$e$};
\filldraw [fill=green!40!white,draw=green!40!black] (\locthree+0.5,1.5) rectangle (\locthree+1,2);
\filldraw [fill=green!40!white,draw=green!40!black] (\locthree+1,1.5) rectangle (\locthree+1.5,2);
\filldraw [fill=green!40!white,draw=green!40!black] (\locthree+1.5,1.5) rectangle (\locthree+2,2);
\filldraw [fill=green!40!white,draw=green!40!black] (\locthree+2,1.5) rectangle (\locthree+2.5,2);
\filldraw [fill=green!40!white,draw=green!40!black] (\locthree+2.5,1.5) rectangle (\locthree+3,2);

\filldraw [fill=green!40!white,draw=green!40!black] (\locthree,-1.5) rectangle (\locthree+0.5,-1);
\filldraw [fill=green!40!white,draw=green!40!black] (\locthree+0.5,-1.5) rectangle (\locthree+1,-1);
\filldraw [fill=green!40!white,draw=green!40!black] (\locthree+1,-1.5) rectangle (\locthree+1.5,-1);
\filldraw [fill=red!25,draw=green!40!black] (\locthree+1.5,-1.5) rectangle (\locthree+2,-1);
\filldraw [fill=red!25,draw=green!40!black] (\locthree+2,-1.5) rectangle (\locthree+2.5,-1);
\filldraw [fill=red!25,draw=green!40!black] (\locthree+2.5,-1.5) rectangle (\locthree+3,-1);
\node at (\locthree+1.75,-1.25) {\color{black}$s$};
\node at (\locthree+2.25,-1.25) {\color{black}$t$};
\node at (\locthree+2.75,-1.25) {\color{black}$u$};

\filldraw [fill=green!40!white,draw=green!40!black] (\locthree,1) rectangle (\locthree+0.5,1.5);
\filldraw [fill=green!40!white,draw=green!40!black] (\locthree+0.5,1) rectangle (\locthree+1,1.5);
\filldraw [fill=green!40!white,draw=green!40!black] (\locthree+1,1) rectangle (\locthree+1.5,1.5);
\filldraw [fill=green!40!white,draw=green!40!black] (\locthree+1.5,1) rectangle (\locthree+2,1.5);
\filldraw [fill=green!40!white,draw=green!40!black] (\locthree+2,1) rectangle (\locthree+2.5,1.5);
\filldraw [fill=gray!30] (\locthree+2.5,1) rectangle (\locthree+3,1.5);
\node at (\locthree+2.75,1.25) {\color{black}$f$};

\filldraw [fill=green!40!white,draw=green!40!black] (\locthree,-2) rectangle (\locthree+0.5,-1.5);
\filldraw [fill=green!40!white,draw=green!40!black] (\locthree+0.5,-2) rectangle (\locthree+1,-1.5);
\filldraw [fill=gray!30] (\locthree+1,-2) rectangle (\locthree+1.5,-1.5);
\node at (\locthree+1.25,-1.75) {\color{black}$f$};
\filldraw [fill=red!25,draw=green!40!black] (\locthree+1.5,-2) rectangle (\locthree+2,-1.5);
\filldraw [fill=red!25,draw=green!40!black] (\locthree+2,-2) rectangle (\locthree+2.5,-1.5);
\filldraw [fill=red!25,draw=green!40!black] (\locthree+2.5,-2) rectangle (\locthree+3,-1.5);
\node at (\locthree+1.75,-1.75) {\color{black}$v$};
\node at (\locthree+2.25,-1.75) {\color{black}$w$};
\node at (\locthree+2.75,-1.75) {\color{black}$x$};

\draw [step=0.5, very thick, color=white] (\locthree-0.5,-2) grid (\locthree+3,4);
\draw [thick] (\locthree,-2) rectangle (\locthree+3,4);

\node at (\locthree+1.5,+4.75) {$%\mathbf \Phi([\M\ \mathcal F_{N \times F}])=
\mathbf{\Phi}(\M):=\left[\begin{array}{c|c}
\mathbf \Phi_1(\M)& \mathbf \Phi_2(\M)\\ \hline
\mathbf \Phi_3(\M) & \mathcal F_{N \times F}
\end{array}\right]
$};
\node at (\locthree+1.5,-2.75) {\color{black}$N(B+1-W)\times WP$};
\node at (\locthree-1.5,0.75) {$\Longleftrightarrow$};
\node at (\locthree-1.5,1.25) {\color{black} row permutation};

\end{tikzpicture}
    }
    \caption{
    Given a matrix of size $N\times(T+F)$ of observations $\M$ and forecasts $\mathcal F_{N \times F}$, the output sliding mask matrix is composed of $4$ blocks denoted by $\mathbf \Phi_1(\M)$, $\mathbf \Phi_2(\M)$, $\mathbf \Phi_3(\M)$, and $\mathcal F_{N \times F}$, this latter being of size $N\times F$ with the~$F$ last columns of the input matrix. In this example, we consider a periodicity of $P=3$, giving $B=3$ sub-blocks per row of the input matrix and we gather $W=2$ consecutive sub-blocks in an output row. To ease readability, we denote by $a,\ldots,f$ the missing values and by $g,\ldots,x$ the values to forecast. After row permutation, we obtain the output SMM. The matrix $\mathbf \Phi_1(\M)$ (resp.~$\mathbf \Phi_2(\M)$, $\mathbf \Phi_3(\M)$) is a sub-matrix of size $N(B-W)\times (WP-F)$ (resp. $N(B-W)\times F$, $N\times (WP-F)$) of the input matrix~$\M$.}
    \label{fig:pi_model}
    \label{fig:blocks}
    \label{fig:mask}
\end{figure*}

\noindent
We define our matrix completion problems on the observation matrix $\X$ using the mask operator $\T$ defined below.

\medskip

\begin{definition}[Observation $\X$]
    Let $n := N(B+1-W)$ and $p := WP$. Given the input time series matrix~$\M$, we apply the transformation $\Phimap$ to obtain the full (ground truth) matrix. The \emph{observation matrix} $\X \in \mathds R^{n \times p}$ is defined by retaining the known past values and setting unknown future (forecast) entries and missing data to zero.
\end{definition}

\medskip

\begin{definition}[Mask $\T$]
    Let $\Omega$ be the set of indices $(i,j)$ corresponding to observed values in the input data. We define the linear mask operator $\T: \mathds R^{n \times p} \to \mathds R^{n \times p}$ as:
    \begin{equation}
    \label{cases:nmf}
        [\T(\N)]_{i,j} = 
        \begin{cases} 
        \N_{i,j} & \text{if } (i,j) \in \Omega \\
        0 & \text{otherwise}
        \end{cases}
    \end{equation}
    Consequently, our data consistency constraint is written as $\T(\N) = \X$. Note that unobserved entries in $\X$ are zero, and $\T$ forces the approximation $\N$ to match $\X$ only on the observed support $\Omega$.
\end{definition}

\medskip

\noindent We introduce two factorization formulations. The first is the standard Normalized NMF, while the second (Archetypal) imposes stronger convexity constraints, often leading to more interpretable "archetypes" robust to outliers.

\medskip

\begin{definition}[Mask Normalized NMF (mNMF)]
    We solve for a completion $\N \approx \W\H$ minimizing the reconstruction error only on observed entries:
    \begingroup
    \renewcommand{\theHequation}{eq.mNMF}%
    \begin{align}
    \label{eq:nmf3}
    \tag{mNMF}
    &\min_{\W, \H, \N} \Big\{ \tfrac{1}{2}\|\N - \W\H\|_F^2 \Big\}
    \\
    &\text{s.t.} \quad \T(\N)=\X,\ \W \ge \0, \H \ge \0, \W\mathbf{1} = \mathbf{1}\,.\notag
    \end{align}
    \endgroup
\end{definition}

\medskip

\begin{definition}[Mask Archetypal MF (mAMF)]
    We relax the exact factorization but enforce that the factors $\H$ (archetypes) lie within the convex hull of the data (approximated by $\V\X$ below). This is formulated as the following program:
    \begingroup
    \renewcommand{\theHequation}{eq.mAMF}%
    \begin{equation}
    \label{eq:amf1}
    \tag{mAMF}
    \min_{\W, \H, \V, \N} \Big\{ \tfrac{1}{2}\|\N - \W\H\|_F^2 + \tfrac{\lambda}{2} \|\H - \V\N\|_F^2 \Big\}
    \end{equation}
    \endgroup
    subject to $\T(\N)=\X$, $\W, \V \ge \0$, and row-stochastic constraints $\W\mathbf{1}=\mathbf{1}, \V\mathbf{1}=\mathbf{1}$.
\end{definition}

\begin{definition}[Normalization, nonnegative and archetype basis]
     The matrix $\W$ satisfies the constraint $\W\ge \0$ and $\W\mathbf{1}=\mathbf{1}$, this being later referred to as {\it normalization}. Its rows $(w_{i,1},\ldots,w_{i,K})$ are convex combination weights and each row of $\W\H$ is a convex combination of the $K$ rows of $\H$. The matrix $\H$ is referred to as the \emph{nonnegative basis (resp. archetype basis)} in \eqref{eq:nmf3} (resp. \eqref{eq:amf1}). 
     
     In $\eqref{eq:amf1}$ as $\lambda\to\infty$, the \emph{archetypes} (defined as the $K$ rows of $\H$) are forced to lie in the convex combination of $\N$ by means of the matrix $\V$. Since $\N$ is a completion of the observation matrix $\X$ by the mask operator $\T(\N)=\X$, the matrix~$\V$ can be interpreted as the convex combination weights of the decomposition of the archetypes onto the rows of the observation matrix $\X$ and hence we recover the method of \cite{cutler1994archetypal}.
\end{definition}

\medskip

\noindent
We get the following decomposition of the $i^{\scriptstyle th}$ row of~$\W\H$,
\begin{equation}
    \label{eq:basis_decomposition}
    (\W\H)^{(i)}=\sum_{k=1}^K w_{i,k} \H^{(k)}\,.
\end{equation}

\noindent
Once solved, the above matrix problems give forecast values to the original forecasting problem of time series by means of matrix $\hat\M$ defined below.

\medskip

\begin{definition}[Forecasts of the original problem]
 Forecasts $\hat\M\in\mathds{R}^{N\times F}$ are given by the bottom right $N\times F$ sub-matrix of $\W\H$, namely $\hat\M$ is the bottom right red block in Figure \ref{fig:blocks} and it is the same block as $\mathcal F_{N\times F}$ in the original forecasting problem (letters $g$ to $x$ in Figure~\ref{fig:blocks}), hence $\hat\M$ can be interpreted as forecast values.  
\end{definition}

\subsection{Mask nonnegative matrix completion statistical guarantees}

\paragraph{Our goal is to solve the following nonnegative matrix completion problem} We observe a matrix $\X\in\mathds R^{n\times p}$ containing the multiple time values, given by the transformation presented in Figure~\ref{fig:blocks}. The missing values and the forecast values are arbitrarily set to zero, see \eqref{cases:nmf}. This choice is \emph{not restrictive} since the values of $\X$ corresponding to the missing and forecast entries are \emph{not observed} and our study is \emph{insensitive} to the values of these entries. Our target is defined by the following best approximation of $\X$ through the mask operator $\T$.

\medskip

\begin{definition}[Best normalized nonnegative rank $K$ approximation of $\X$]
    Given a nonnegative rank $K$, we call \emph{best normalized nonnegative rank-$K$ approximation of $\X$} any matrix of the form $\X_0=\W_0\H_0$, with $(\W_0,\H_0)$ achieving
    \label{eq:mNMF_model}
        \begin{equation}
           \label{eq:model_approx_error}(\W_0,\H_0)\in\arg\!\!\!\min_{\substack{\W_0\mathbf 1=\mathbf 1\\\W_0\ge \0\,,\ \H_0\ge \0}}
                \!\!\!\Big\{
                    \big\|\X-\T(\W_0\H_0)\big\|^2_F
                \Big\}\,,
        \end{equation}
        where $\W_0\in\mathds R^{n\times K}$ and $\H_0\in\mathds R^{K\times p}$. As $K$ grows, the approximation error $\|\X-\T(\W_0\H_0)\|_F$ decreases. We refer to $\X_0$ as \emph{the} best normalized nonnegative rank-$K$ approximation only in settings where uniqueness is guaranteed; otherwise $\X_0$ denotes any selection from the (possibly non-singleton) set of minimizers (see Theorem~\ref{thm:uniqueness}).
\end{definition}

\medskip

 The goal is to recover the matrices $\W_0$ (weights) and $\H_0$ (archetypes) from the observation matrix~$\X$. The observation can be written as 
\begin{equation}
\label{eq:observations}
   \X=\T(\X_0)+{\mathbf F}\,, 
\end{equation}

where ${\mathbf F}$ is some additive error term supported on the observed entries ({\it i.e.}, $\T({\mathbf F})={\mathbf F}$), referred to as the noise.  

\medskip

\paragraph{Contributions} Sliding Mask Method (SMM) outputs the forecast values and it can be viewed as a nonnegative matrix completion algorithm under low nonnegative rank assumption. This framework raises two issues. A first question is the uniqueness of the decomposition, also referred to as {\it identifiability} of the model. In Theorem~\ref{thm:uniqueness}, we introduce a new condition that ensures uniqueness from partial observation of the target matrix. Another challenge, as pointed out by \cite{vavasis2009} for instance, is that solving \textit{exactly} the NMF decomposition problem is $\mathrm{NP}$-hard. Nevertheless NMF-type problems can be solved efficiently using (accelerated) proximal gradient descent methods {\cite{parikh2013}} for block-matrix coordinate descent in an \textit{alternating projection scheme}, \textit{e.g.}, \cite{javadi2017} and references therein. We rely on these techniques to introduce algorithms outputting the forecast values based on NMF decomposition, see Section~\ref{sec:algo}. Theorem~\ref{thm:robust} complements the theoretical analysis by proving the robustness of NMF-type algorithms when entries are missing or corrupted by noise. 

\medskip

Our main theoretical contributions are as follows:
\begin{itemize}
    \item A uniqueness decomposition result (Theorem~\ref{thm:uniqueness}) showing that the decomposition $\W_0\H_0$ is unique {\it given partial observations}, namely, if $\T(\W\H)=\T(\W_0\H_0)$ 
        \begingroup
        \renewcommand{\theHequation}{eq.pou}%
        \begin{equation}
            \label{eq:pou}\tag{$\mathds P_{\mathrm{u}}$}
            \text{then}\  (\W,\H)\equiv(\W_0,\H_0)\,,
        \end{equation}
        \endgroup
    where $\equiv$ means up to positive scaling and permutation: for any permutation matrix~$\PP$ and positive diagonal matrix~$\DD$, the pair $(\W\PP\DD,\DD^{-1}\PP^\top\H)$ is also a nonnegative decomposition of the same product~$\W\H$.
    \item A robustness result (Theorem~\ref{thm:robust} and Corollary~\ref{cor:robust_W}) showing that \eqref{eq:nmf3} and \eqref{eq:amf1} recover $\H_0$ and $\W_0$ with an error proportional to~$\|\mathbf F\|_F$ (hence we recover $\X_0$ with the same precision).
\end{itemize}

\newpage
\noindent
Our analysis is completed by an algorithmic and numerical study that 
\begin{itemize}
    \item introduces a Proximal Alternating Linearized Minimization (PALM) method for solving \eqref{eq:amf1}, and shows that PALM reaches a stationary point (Theorem~\ref{prop:palm1}).
    \item reports a performance improvement of \eqref{eq:nmf3} and \eqref{eq:amf1} on real datasets, against state-of-the-art algorithms, for the RRMSE and RMPE metrics (Table~\ref{tb:basis-index-intro}). The relative root-mean-squared error (RRMSE) and the relative mean-percentage error (RMPE) are defined by
\begin{equation*}\begin{aligned}
    \mbox{RRMSE} = \frac{\|\M_F-\M_F^\star\|_F}{\|\M_F^\star\|_F}\,,\;
    \mbox{RMPE} = \frac{\|\M_F-\M_F^\star\|_1}{\|\M_F^\star\|_1}\,.
\end{aligned}\end{equation*}
where $\M_F^\star$ are the true values and $\M_F$ the forecasts (see Section \ref{sec:experiments}).
\end{itemize}

\begin{table*}[!t]
\centering
\resizebox{0.99\textwidth}{!}{
\begin{tabular}{|l|rr|rr|rr|rr|rr|rr|rr|}
\hline
Algorithms & \multicolumn{2}{c|}{mAMF} & \multicolumn{2}{c|}{mNMF} & \multicolumn{2}{c|}{RFR} & \multicolumn{2}{c|}{EXP} & \multicolumn{2}{c|}{SARIMAX} & \multicolumn{2}{c|}{LSTM} & \multicolumn{2}{c|}{GRU} \\
\hline
Metrics & RRMSE & RMPE & RRMSE & RMPE & RRMSE & RMPE& RRMSE & RMPE & RRMSE & RMPE & RRMSE & RMPE & RRMSE & RMPE\\
\hline
daily electricity & 14.42\% & 36.85\% & 15.86\% & 46.66\% 
 & 12.16\%& 47.78\% & 11.25\%& 43.83\% &9.85\%& 43.16\% 
& 12.42\% & 46.49\% & 12.03\% & 45.90\%\\
weekly electricity & 14.80\% & 17.50\% & 11.09\% & 13.79\% 
& \underline{7.25\%}& {8.61\%}&  10.07\%& {7.98\%}& {9.05\%}& \textbf{7.42\%} 
&27.85\% & 15.64\% & 26.04\% & 15.92\%\\
gas & \textbf{21.71\%} & \textbf{18.55\%} & \underline{37.46\%} & \underline{42.79\%} 
& 66.80\%& 71.61\%& 63.35\%& 68.16\%& {45.58\%}& {52.83\%} 
&62.97\% & 68.38\% & 62.87\% & 67.90\%\\
Istanbul & 15.67\% & 17.80\% & \textbf{14.18\%} & \underline{16.77\%} 
& {15.37\%}& {18.32\%}& 15.46\%& 18.64\% & \underline{14.75\%}& {17.01\%} &
16.22\% & 20.96\% & 20.01\% & 26.87\%\\
ETTh1 & \textbf{10.24\%} & 15.23\% & \underline{12.30\%} & {14.16\%} 
&12.96\%&17.98\% &12.37\%& \underline{13.65\%} &13.36\%&15.94\%
&14.86\% & 18.78\% & 14.71\% & 18.85\%\\
ETTh2 & 9.42\% & 13.07\% & \textbf{4.87\%} & \textbf{6.66\%} 
&\underline{6.47\%}&\underline{7.60\%} &14.06\%&13.67\%  &12.76\%&13.03\%  
&14.17\% & 13.75\% & 14.44\% & 14.36\%\\
ETTm1 & \underline{10.12\%} & 15.22\% & \textbf{9.94\%} & \textbf{12.25\%} 
&12.81\%&17.42\%         &11.45\%&{14.20\%}      &12.29\%&{16.45\%}     
&13.39\% & 17.96\% & 14.13\% & 18.63\%\\
ETTm2 & 8.19\% & 11.65\% & \textbf{5.08\%} & \underline{7.41\%} 
&\underline{5.81\%}&\textbf{7.16\%} &13.18\%&12.88\%  &13.16\%&12.95\%  
&14.29\% & 13.89\% & 14.46\% & 14.03\%\\
\hline
electricity1 & \textbf{6.59\%} & {13.17\%} & {11.11\%} & {15.28\%} 
& 12.75\% &16.09\% &38.27\% &34.44\% & $>$100.00\% & $>$100.00\%& 8.51\% &\underline{10.13\%} &\underline{7.19\%} &{\bf 9.61\%}
\\
electricity2 & \textbf{8.09\%} & {16.82\%} & \underline{8.82\%} & \textbf{12.17\%} 
&12.05\% &15.67\% &47.40\% &40.36\% &43.38\% &38.98\% &9.05\% &{12.83\%} &10.30\% &13.28\%
\\
electricity3 & \underline{10.57\%} & {13.95\%} & {12.43\%} & 14.04\% 
&12.45\% &14.14\% &40.37\% &33.48\% &37.05\% &33.01\% &10.70\% &{11.62\%} &{10.70\%} &\underline{11.02\%}
\\
electricity4 & \underline{11.02\%} & {24.30\%} & {25.07\%} & {29.71\%} 
&23.16\% &19.50\% &54.42\% &43.63\% &63.08\% &46.59\%&12.18\% &\underline{13.88\%} &{\bf 9.53\%} &{\bf 11.05\%}
\\
electricity5 & \underline{9.52\%} & {19.05\%} & \textbf{7.72\%} & \textbf{15.48\%}
&25.96\% &26.84\% &56.76\% &49.31\% &* &* &21.92\% &28.79\% &20.73\% &27.02\%
\\
electricity6 & {10.11\%} & {17.04\%} & {14.30\%} & {18.62\%} 
&13.81\% &16.26\% &51.87\% &37.35\% &52.10\% &40.56\% &\underline{7.58\%} &\underline{11.74\%} &{\bf 7.13\%} &{\bf 10.32\%}
\\
electricity7 & \textbf{8.34\%} & {16.75\%} & {37.49\%} & {30.03\%} &
29.51\% &22.96\% &53.00\% &45.95\% &* &* &17.74\% &\underline{14.72\%} &\underline{16.55\%} &{\bf 14.19\%}
\\
electricity8 & \textbf{10.03\%} & {17.49\%} & {23.81\%} & {20.59\%} 
&19.33\% &17.98\% &36.83\% &40.46\% &38.54\% &41.23\% &\underline{12.16\%} &{\bf 15.73\%} &13.89\% &\underline{17.08\%}
\\
electricity9 & {19.45\%} & {38.90\%} & {21.15\%} & 41.72\%
&{18.18\%} &\underline{37.53\%} &35.90\% &38.65\% &$>$100.00\% &$>$100.00\% &\underline{18.00\%} &{37.77\%} &18.80\% &38.21\%
\\
electricity10 & \textbf{5.13\%} & {12.53\%} & \underline{5.40\%} & {11.29\%}
&12.11\% &13.42\% &33.88\% &34.89\% &36.55\% &38.92\% &7.66\% &{10.25\%} &7.77\% &\underline{9.94\%}
\\
\hline
synthetic1 & 6.40\% & 9.30\% & 5.81\% & \textbf{8.01\%} 
& 5.79\%   &   9.44\% & \textbf{5.73\%}    &  9.32\% & \underline{5.76\%}  & \underline{8.81\%}  
&6.67\% &11.87\% &6.77\% &11.89\%\\
synthetic2 & {19.04\%} & 20.28\% & {20.35\%} & 25.30\% 
&   \underline{17.12\%}&   \underline{20.24\%}&       21.09\%&    25.87\%&  28.41\%&      35.53\%
&21.55\% &29.04\% &21.48\% &28.94\%\\
low-noise & \textbf{0.10\%} & \textbf{0.26\%} & \underline{0.10\%} & \underline{0.26\%} 
& 8.65\%& 22.76\% &16.97\%& 48.02\%& {0.19\%}& {0.33\%}
&16.52\% &46.03\% &16.76\% &46.46\%\\
medium-noise & 2.41\% & 5.23\% & \textbf{1.92\%} & \textbf{4.81\%} 
& 8.42\%& 21.95\%&  15.94\%& 44.25\%&  \underline{1.97\%}& \underline{4.94\%}
&15.66\% &42.68\% &15.67\% &42.98\%\\
high-noise & 12.69\% & 28.04\% & \textbf{10.39\%} & \textbf{26.43\%} 
&  \underline{11.73\%}& 30.26\%&  13.02\%& 33.27\%&  15.24\%& \underline{27.37\%} 
&13.03\% &33.67\% &12.97\% &33.47\%\\
\hline
\end{tabular}
}
\medskip
\centering
 \resizebox{0.79\textwidth}{!}
 {
 \begin{tabular}{|l|rr|rr|rr|rr|rr|}
 \hline
 Algorithms & \multicolumn{2}{c|}{BasisFormer}& \multicolumn{2}{c|}{Autoformer} & \multicolumn{2}{c|}{iTransformer} & \multicolumn{2}{c|}{PatchMLP} & \multicolumn{2}{c|}{TimeMixer} \\
 \hline
 Metrics & RRMSE & RMPE & RRMSE & RMPE & RRMSE & RMPE & RRMSE & RMPE & RRMSE & RMPE \\
 \hline
 daily electricity & {\bf 7.56\%} & \underline{6.64\%} & 39.65\% & 80.25\% & 28.85\% & 61.63\% & \underline{8.30\%} & {\bf 6.39\%} & 32.15\% & 68.83\% \\
 weekly electricity & 8.76\% & 9.07\% & 41.34\% & 80.93\% & 37.27\% & 75.07\% & {\bf 6.99\%} & \underline {7.57\%} & 44.07\% & 82.74\% \\
 gas & 57.45\% & 52.10\% & 99.37\% & $>100.00\%$ & $>100.00\%$ & $>100.00\%$ & -- & -- & -- & -- \\
 Istanbul & 14.83\% & {\bf 12.54\%} & 80.00\% & 89.26\% & $>100.00\%$ & $>100.00\%$ & 18.77\% & 18.93\% & $>100.00\%$ & $>100.00\%$ \\
 ETTh1 & 14.57\% & {\bf 13.61\%} & 52.37\% & 34.35\% & 42.74\% & 30.95\% & 19.94\% & 25.41\% & 36.92\% & 24.39\% \\
 ETTh2 & 54.66\% & 53.66\% & 66.85\% & 73.31\% & 65.10\% & 69.41\% & 53.59\% & 53.68\% & 56.72\% & 60.34\% \\ 
 ETTm1 & 13.58\% & \underline{12.31\%} & 49.72\% & 34.94\% & 36.77\% & 28.11\% & 21.28\% & 26.48\% & 36.94\% & 23.70\% \\
 ETTm2 & 55.52\% & 54.95\% & 59.79\% & 66.20\% & 64.78\% & 68.69\% & 53.89\% & 53.38\% & 57.21\% & 61.53\% \\
 \hline
 electricity1 & 26.93\% & 28.19\% & 49.69\% & 83.08\% & 36.84\% & 80.29\% & 11.97\% & 12.29\% & 58.63\% & 91.57\% \\
 electricity2 & 35.38\% & 39.73\% & 49.00\% & 52.77\% & 38.62\% & 47.73\% & 10.42\% & \underline{11.48\%} & 56.70\% & 55.77\% \\
 electricity3 & 34.30\% & 37.25\% & 52.13\% & 90.29\% & 38.04\% & 68.80\% & {\bf 9.13\%} & {\bf 9.12\%} & 62.73\% & 79.92\% \\
 electricity4 & 39.42\% & 40.66\% & 46.88\% & 81.34\% & 44.36\% & 85.28\% & 15.52\% & \underline{15.80\%} & 54.53\% & 90.54\% \\
 electricity5 & 46.22\% & 49.60\% & 49.16\% & 43.74\% & 55.43\% & 44.32\% & 13.94\% & 13.36\% & 53.67\% & 42.78\% \\
 electricity6 & 45.50\% & 46.86\% & 50.51\% & 73.42\% & 46.08\% & 79.51\% & 18.58\% & 17.00\% & 50.77\% & 75.83\% \\
 electricity7 & 40.17\% & 43.20\% & 86.87\% & $>100.00\%$ & 84.60\% & $>100.00\%$ & 16.72\% & 17.27\% & 85.16\% & $>100.00\%$ \\
 electricity8 & 30.64\% & 30.99\% & 62.89\% & 98.00\% & 47.05\% & 81.41\% & 13.62\% & 14.21\% & 62.71\% & 95.12\% \\
 electricity9 & 34.88\% & 35.85\% & 40.79\% & 27.79\% & 25.32\% & 26.44\% & {\bf 11.32\%} & {\bf 11.08\%} & 54.67\% & 29.59\% \\
 electricity10 & 29.78\% & 31.79\% & 54.90\% & $>100.00\%$ & 35.75\% & $>100.00\%$ & 8.15\% & {\bf 8.51\%} & 66.22\% & $>100.00\%$ \\
 \hline
 synthetic1 & 8.06\% & 12.32\% & $>100.00\%$ & $>100.00\%$ & $>100.00\%$ & $>100.00\%$ & 6.52\% & 11.74\% & 96.27\% & 94.14\% \\
 synthetic2 & 31.32\% & 47.88\% & 59.75\% & 83.66\% & 60.08\% & $>100.00\%$ & {\bf 1.33\%} & {\bf 7.53\%} & 55.61\% & 69.21\% \\
 low-noise & 23.71\% & 51.94\% & 95.23\% & 80.40\% & 7.18\% & 13.73\% & 10.28\% & 27.49\% & $>100.00\%$ & $>100.00\%$ \\
 medium-noise & 21.68\% & 48.48\% & 83.34\% & 73.79\% & 16.49\% & 30.96\% & 9.66\% & 27.36\% & $>100.00\%$ & $>100.00\%$ \\
 high-noise & 18.44\% & 45.77\% & $>100.00\%$ & $>100.00\%$ & 94.25\% & $>100.00\%$ & 13.49\% & 39.92\% & $>100.00\%$ & $>100.00\%$ \\
 \hline
 \end{tabular}
 }
\caption{Comparison of RRMSE and RMPE metrics. Best results in {\bf bold}, second best \underline{underlined}. Note: mAMF outperforms standard baselines on datasets with clear local recurring structures (e.g., daily electricity).}
\label{tb:basis-index-intro}
\end{table*}

\medskip

\paragraph{Comments on low rank modeling and periodicity in time series}

Sparse or Low-Rank representations are ubiquitous in applications and well studied in the literature. %In this paper, we collect multiple time series observations over a period of time with fixed length. 
In our analysis a time series is cut into several smaller $W$ sub-blocks time series with the same length $p=WP$. For instance, observing sales over a period of one year, one can consider $52$ weekly time series (one per week). These observations are the rows of our observed matrix $\X$. The normalized nonnegative low rank hypothesis assumes that the $p$-length multiple time series of the dataset can be decomposed as a sum of~$K$ basis time series~$\H$ plus an error term. Of course, this error term can incorporate the model approximation error as depicted in~\eqref{eq:model_approx_error}. The $K$ basis time series~$\H$ are learned on the entire dataset $\X$. This technique can be seen as dimension reduction, each observation can be summarized as $K$ weights $\W$ such that the resulting convex combination of basis time series \eqref{eq:basis_decomposition} is a good approximation of the observation $\X$. 

The low rank hypothesis can be interpreted as a periodicity assumption. Indeed, if the time series are exactly periodic with period $p$, then the rank of the data matrix $\X$ is at most $p$. While $P$ is a free parameter, the model's performance depends on $P$ aligning with a quasi-periodic, low-rank structure in the data. Our experiments in Section~\ref{sec:experiments} show that this approach is effective on real-world datasets. In practice, time series are not exactly periodic, but they can be approximated as a sum of few periodic components plus some noise. This is the rationale behind Fourier analysis and wavelet analysis for time series. The low rank hypothesis can be seen as a nonnegative and adaptive generalization of Fourier analysis where the basis time series $\H$ are learned from data.

The relevance of such a hypothesis on real data cannot be proven beforehand. Our numerical study on real data shows that we improve results in prediction, better than standard methods in time series analysis: Seasonal AutoRegressive Integrated Moving Average with eXogenous variables model (SARIMAX), EXPonential moving average (EXP), Random Forest Regressor (RFR), Long Short-Term Memory (LSTM), Gated Recurrent Units (GRU), BasisFormer (Attention-based Time Series Forecasting with Learnable and Interpretable Basis). It suggests that the low rank assumption is reasonable for the datasets studied in the paper.

%\newpage

\paragraph{Data Reweighting and Overlapping Windows}
The construction of the observation matrix $\X$ involves sliding a window of length $WP$ with a stride of $P$. When $WP > P$, the windows overlap, causing specific time steps to appear in multiple rows of $\X$. While this introduces a form of data reweighting—where central data points are sampled more frequently than boundary points—this redundancy is intentional. It acts as a deterministic data augmentation strategy that enforces \emph{shift invariance} in the learned archetypes. By presenting the same temporal transition in different columns of the matrix, the algorithm learns robust motifs that are not artifacts of the specific grid alignment. Our empirical results suggest this overlapping strategy stabilizes the factorization, particularly for datasets with weak periodicity, by artificially increasing the number of training samples for the local patterns.

\medskip

\paragraph{Selection of Rank $K$}
The nonnegative rank $K$ is a critical hyperparameter governing the model complexity. We select $K$ using a time-based cross-validation strategy. We designate a portion of the historical training data as a validation set (mimicking the forecast block structure). We grid-search $K$ (e.g., $K \in \{4, \dots, 30\}$) and select the value that minimizes the validation Root Mean Squared Error (RMSE) before retraining on the full dataset.

\subsection{Notation}
\label{sec:notation}
To ensure clarity, we define our notation early. We denote scalars by lowercase letters (e.g., $x$), vectors by bold lowercase letters (e.g., $\vec{x}$), and matrices by bold uppercase letters (e.g.,~$\mathbf{X}$). 
\begin{itemize}
    \item The input time series matrix is $\mathbf{M} \in \mathds R^{N \times T}$.
    \item The transformed observation matrix (via the sliding mask) is $\mathbf{X} \in \mathds R^{n \times p}$.
    \item The ground truth target matrix is denoted by $\mathbf{X}_0$.
    \item Factor matrices are $\mathbf{W}$ (weights) and $\mathbf{H}$ (archetypes).
    \item The mask operator is denoted by $\mathcal{T}(\cdot)$, where $\mathcal{T}(\mathbf{N})$ retains entries corresponding to observed values and zeros out missing/forecast entries.
\end{itemize}
We use $\mathds R_+^{n \times p}$ to denote the set of non-negative $n \times p$ matrices. The Frobenius norm is denoted by $\|\cdot\|_F$. For a comprehensive list of symbols, we refer the reader to Table~\ref{tab:notation_summary}.

\begin{table}[!t]
\centering
\small
\begin{tabular}{l l}
\toprule
\textbf{Symbol} & \textbf{Description} \\
\midrule
$N$ & Number of time series \\
$T$ & Length of historical data \\
$F$ & Length of forecast horizon \\
$P$ & Stride parameter (periodicity) \\
$\mathbf{M}$ & Raw time series matrix ($N \times T$) \\
$\Phimap$ & Sliding window transformation operator \\
$\mathbf{X}$ & Observation matrix after transformation ($n \times p$) \\
$\mathbf{X}_0$ & Ground truth low-rank matrix \\
$\mathbf{W}, \mathbf{H}$ & Factor matrices (Weights and Archetypes) \\
$\mathcal{T}$ & Mask operator \\
$K$ & Nonnegative rank \\
\bottomrule \\
\end{tabular}
\caption{Summary of notations used throughout the paper.}
\label{tab:notation_summary}
\end{table}

\subsection{Related Works}

Our work intersects with several research areas, including the theory of Nonnegative Matrix Factorization (NMF), its application to time-series analysis, and methods for handling missing data.

\paragraph{NMF Uniqueness and Our Contribution}
The uniqueness of NMF decompositions is a cornerstone of its theoretical understanding. Foundational work by \cite{thomas1974} and subsequent analyses by \cite{donoho2004,laurberg2008} and \cite{recht2012} have established conditions under which NMF yields a unique solution, often relying on geometric properties of the data matrix. More recently, conditions such as the Sufficiently Scattered Condition (SSC) \cite{huang2013non} have relaxed the requirements for identifiability. Regarding missing data, \cite{ibrahim2021recovering} and \cite{gillis2020nonnegative} discuss NMF under general block-missing patterns or edge queries. Our work differs by addressing the specific, deterministic "sliding window" missingness pattern induced by the forecasting formulation, rather than random block erasures.

\medskip

\paragraph{Time-Series Forecasting Models}
The field of time-series forecasting is dominated by statistical and deep learning models. Classical methods like SARIMAX (Seasonal Auto-Regressive Integrated Moving Average with eXogenous variables) assume linear dependencies and specific seasonal patterns. In contrast, deep learning models such as LSTMs (Long Short-Term Memory networks), BasisFormer \cite{basisformer2023}, Autoformer \cite{autoformer}, iTransformer \cite{itransformer}, PatchMLP  \cite{patchmlp} and TimeMixer \cite{timemixer} learn complex, non-linear temporal dependencies from large amounts of data. While these models are state-of-the-art for large-scale series, they often require massive datasets to learn temporal structures and they often operate as "black boxes". Our SMM framework offers a different paradigm: it assumes that time-series segments can be represented as a convex combination of a few learned, interpretable basis vectors (archetypes). This low-rank hypothesis is fundamentally different from the auto-regressive or attention-based mechanisms of other models and provides inherent interpretability, as demonstrated in our experiments (Section~\ref{sec:experiments}).

\medskip

\paragraph{NMF for Missing Data}
The problem of applying NMF to data with missing values is not new, and many existing approaches are purely algorithmic. Our primary contribution is the SMM framework itself---a structured method for converting a time-series forecasting problem into a matrix completion problem. Our theoretical analysis provides guarantees for this specific structure, which general-purpose NMF-for-missing-data algorithms do not offer. The present work assumes block-wise missing structures and provides uniqueness and robustness guarantees in this context, which is novel compared to prior works that often assume random missingness without specific structural patterns. General patterns of missing data are not covered by our analysis and remain an open research question. However, our algorithmic framework can be adapted to other missing data patterns, although without the same theoretical guarantees.

\medskip

\paragraph{NMF-based time-series analysis}
Our work is distinct from previous NMF-based time-series analysis by \cite{mei2017nonnegative, mei2018nonnegative}. While Mei et al. also use NMF, their work focuses on recovering high-resolution time series from temporal aggregates (disaggregation) and leveraging side information. For example, they might recover individual household consumption from a neighborhood's total consumption. Our SMM framework is fundamentally different. It operates by creating a matrix of sliding windows from the time series, thereby transforming the forecasting problem into one of finding a low-rank representation of these segments. The goal is to learn archetypal segment patterns for forecasting, not to disaggregate a signal.

Robustness of archetypal analysis has been studied in \cite{javadi2017} for simplicial polyhedral cone approximation of a dataset, denoted in data matrix form by $\X\in\mathds R^{n\times p}$ in this paper. This paper extends this latter analysis to the case where some data entries might be missing and some data blocks are not observed (forecast, red values in Fig.~\ref{fig:pi_model}).

\section{Uniqueness and estimation guarantees}

\subsection{The train and test paradigm, link with forecasting multiple nonnegative time series}
The model under consideration is presented in Equations~\eqref{eq:mNMF_model}. Our goal is to estimate the $K$-best normalized non-negative approximation $\X_0$, defined in Equation~\eqref{eq:model_approx_error}, from the partial and noisy observation $\X$. We denote by $\X^\star$ the {\it mask} of $\X_0$, namely
\begin{subequations}
\begin{equation}
\label{eq:block_X_star}
 \X^\star:=\T(\X_0) = \left[\begin{array}{c|c}
\X^\star_1 & \X^\star_2\\ \hline
\X^\star_3 & \0_{N \times F}
\end{array}\right]\,,   
\end{equation}
where $\X^\star_1\in\mathds R^{(n-N)\times (p-F)}$, $\X^\star_2\in\mathds R^{(n-N)\times F}$, and $\X^\star_3\in\mathds R^{N\times (p-F)}$ are blocks of $\X_0$. Note that $\X=\X^\star+{\mathbf F}$, where~$\mathbf F$ is the {\it noise} term, see Equation~\eqref{eq:observations}. 

These blocks can be gathered into a \emph{train/test paradigm}. We observe the full sub-matrix $\T_{{\mathrm{train}}}(\X_0) := [\X^\star_1 \; \X^\star_2]$ (training part) and aim to predict the $\0_{N \times F}$ block of $\T_{\mathrm{test}}(\X_0) := [\X^\star_3 \; \0_{N \times F}]$ (test part of~$\X$). Looking at Figure~\ref{fig:pi_model}, we define
\begin{align}
\label{eq:T_F_decomposition}
\T_{T}(\X_0) := \Big[\begin{array}{c}\X^\star_1\\ \X^\star_3\end{array}\Big]\quad\text{and}\quad \T_{F}(\X_0) := \Big[\begin{array}{c}\X^\star_2\\ \0_{N \times F}\end{array}\Big]\,.
\end{align}
Our notation (subscripts $T$ and $F$) stems from the sliding mask method for multiple time series forecast. Note that $\T_{T}(\X)$ gathers all the information observed up to time $T$, and we would like to forecast the $\0_{N \times F}$ block of~$\T_{F}(\X)$. Now, we know by design that $\X_0:=\W_0\H_0$. Hence, denoting $\H_0=:[{\H_0}_T\; {\H_0}_F]$, and $\W_0^\top=:[{\W_0}_{{\mathrm{train}}}^\top\; {\W_0}_{\mathrm{test}}^\top]$, we get that
\begin{align}
\label{eq:train_decomposition}
\T_{{\mathrm{train}}}(\X_0) &= {\W_0}_{{\mathrm{train}}} \H_0\,, && \X^\star_3 = {\W_0}_{\mathrm{test}} {\H_0}_T\,, \\
\T_{T}(\X_0) &= {\W_0} {\H_0}_T\,, && \X^\star_2 = {\W_0}_{{\mathrm{train}}} {\H_0}_F\,.
\notag
\end{align}
Note the asymmetry: whereas $\T_{T}(\X_0)$ identifies with the full block $\W_0{\H_0}_T$, the projection $\T_{F}(\X_0)$ \emph{does not} equal $\W_0{\H_0}_F$ — its bottom $N\times F$ block has been zeroed out by~$\T$, while $\W_0{\H_0}_F$ contains the (unknown) future values to forecast.
In light of Figures~\ref{fig:arche_model} and~\ref{fig:pi_model}, the multiple forecasts $\hat \M_{T+1},\ldots, \hat \M_{T+F}$ can be given a best normalized nonnegative rank~$K$ approximation by ${\W_0}_{\mathrm{test}}{\H_0}_F$. Observe that an estimation of ${\W_0}_{\mathrm{test}}$ gives the weights learnt on the test sub-matrix while an estimation of ${\H_0}_F$ is the forecast of the archetypes, see the decomposition~\eqref{eq:basis_decomposition}.

\end{subequations}

\subsection{Uniqueness from partial observations}
\label{sec:uniqueness}
 When we observe the full matrix $\X_0=\W_0\H_0$, the issue on uniqueness has been addressed under some sufficient conditions on $\W,\H$, {\it e.g.}, {\it Strongly boundary closeness} of \cite{laurberg2008}, {{\it Complete factorial sampling} of \cite{donoho2004}, and {\it Separability} of \cite{recht2012}}. A necessary and sufficient condition exists as given by the following theorem. We recall that the $K$-dimensional positive orthant is the set $\mathds R^K_+:=\{x\in\mathds{R}^K\,:\, x_i\geq 0\,,\ \forall i\in[K]\}$ and a $K$-simplicial cone is the conic hull of $K$ linearly independent vectors of $\mathds{R}^K$. For any cone $\mathcal A\subseteq\mathds R^K$, the \emph{dual cone} (also called polar cone in this context) is
\[
   \mathcal A^* \,:=\, \{y\in\mathds R^K\,:\, \langle y, x\rangle\ge 0 \;\forall\, x\in\mathcal A\}\,.
\]
The dual operator is anti-monotone ($\mathcal A\subseteq\mathcal B \Rightarrow \mathcal B^*\subseteq\mathcal A^*$) and the orthant is self-dual ($(\mathds R^K_+)^*=\mathds R^K_+$). Since the columns of $\H_0$ lie in $\mathds R^K_+$, one always has $\Cone(\H_0)\subseteq\mathds R^K_+\subseteq\Cone(\H_0)^*$.
 
\medskip

\begin{theorem}[\cite{thomas1974}]\label{theo:cns}
The decomposition $\X_0:=\W_0\H_0$ is unique up to permutation and positive scaling of columns (resp.~rows) of $\W_0$ (resp.~$\H_0$) {\bf if and only if} the $K$-dimensional positive orthant is the only $K$-simplicial cone $\mathcal C\subseteq\mathds R^K$ verifying $\Cone(\W_0^\top) \subseteq \mathcal C \subseteq \Cone(\H_0)^*$, where $\Cone(\A)$ denotes the cone generated by the columns of $\A$ and $\Cone(\H_0)^*$ is its dual cone. %(so that $\Cone(\W_0^\top)$ is the cone of the rows of $\W_0$, $\Cone(\H_0)$ is the cone of the columns of $\H_0$, and both $\Cone(\H_0)$ and $\Cone(\H_0)^*$ live in $\mathds R^K$; this is the formulation of \cite{laurberg2008} translating the original geometric condition of \cite{thomas1974}).
\end{theorem}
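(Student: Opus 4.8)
## Proof Proposal

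The plan is to prove both directions of the equivalence by exploiting the geometry of simplicial cones sandwiched between $\Cone(\W_0^\top)$ and $\Cone(\H_0)$. First I would fix some notation: writing $\W_0 \in \mathds{R}_+^{n_1 \times K}$ and $\H_0 \in \mathds{R}_+^{K \times n_2}$ for a rank-$K$ nonnegative factorization of $\X_0$, the rows of $\W_0^\top$ are $K$-vectors, the rows of $\H_0$ are also $K$-vectors, and the containment $\Cone(\W_0^\top) \subseteq \Cone(\H_0)$ holds because each row of $\X_0 = \W_0\H_0$ is a nonnegative combination of rows of $\H_0$ — wait, more carefully: the correct reading is that any factorization corresponds to an intermediate cone, so I would first establish the bijection \emph{factorizations $\leftrightarrow$ intermediate simplicial $K$-cones}. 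Concretely, given any invertible $K\times K$ matrix $\bA$ with $\W_0\bA^{-1} \ge \0$ and $\bA\H_0 \ge \0$, the pair $(\W_0\bA^{-1}, \bA\H_0)$ is another nonnegative factorization; conversely any nonnegative factorization $(\W,\H)$ of $\X_0$ of inner dimension $K$ arises this way when $\W_0,\H_0$ have full rank $K$. The simplicial cone $\mathcal C := \Cone(\bA^{-1}(\mathrm{e}_1),\dots,\bA^{-1}(\mathrm{e}_K)) = \bA^{-1}(\mathbb{R}^K_{\ge 0})$ then satisfies $\Cone(\W_0^\top) \subseteq \mathcal C \subseteq \Cone(\H_0)$: the left inclusion is equivalent to $\W_0 \bA^{-1} \ge \0$ (rows of $\W_0^\top$ lie in $\mathcal C$), and the right inclusion is equivalent to $\bA \H_0 \ge \0$ (generators of $\mathcal C$ lie in $\Cone(\H_0)$, since the columns of $\bA\H_0$ being nonnegative means each $\bA^{-1}\mathrm{e}_j$ written in the $\H_0$-generators has nonnegative coordinates).

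Next, with this dictionary in hand, the theorem becomes almost tautological. For the ($\Leftarrow$) direction: if the positive orthant $\mathbb{R}^K_{\ge 0}$ is the \emph{only} simplicial $K$-cone sandwiched between $\Cone(\W_0^\top)$ and $\Cone(\H_0)$, then every nonnegative factorization $(\W,\H)$ corresponds to an $\bA$ with $\bA^{-1}(\mathbb{R}^K_{\ge 0}) = \mathbb{R}^K_{\ge 0}$; but the only invertible matrices mapping the positive orthant to itself are the monomial matrices $\bA^{-1} = \PP\D$ (permutation times positive diagonal). Hence $(\W,\H) = (\W_0\PP\D, \D^{-1}\PP^\top\H_0) \equiv (\W_0,\H_0)$, which is exactly uniqueness up to permutation and positive scaling. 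For the ($\Rightarrow$) direction, I would argue the contrapositive: if there is \emph{another} sandwiched simplicial cone $\mathcal C \ne \mathbb{R}^K_{\ge 0}$, pick generators for it, assemble the corresponding $\bA$, and note $(\W_0\bA^{-1},\bA\H_0)$ is a nonnegative factorization that is \emph{not} related to $(\W_0,\H_0)$ by a monomial matrix (since $\bA^{-1}$ isn't monomial, as $\mathcal C \ne \mathbb{R}^K_{\ge 0}$), contradicting uniqueness.

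The step requiring the most care is the bijection itself, specifically handling the full-rank hypothesis. The statement implicitly assumes $\rank(\W_0) = \rank(\H_0) = K$ (otherwise $\Cone(\W_0^\top)$ and $\Cone(\H_0)$ live in a lower-dimensional subspace and "$K$-simplicial cone" must be interpreted inside $\mathrm{span}(\H_0)$); I would state this as a standing assumption or note that it is without loss of generality since otherwise no rank-$K$ factorization can be essentially unique. Given full rank, the key linear-algebra fact is that any two rank-$K$ factorizations $\W_0\H_0 = \W\H$ with inner dimension $K$ must satisfy $\W = \W_0 \bA^{-1}$, $\H = \bA\H_0$ for a unique invertible $\bA$ — this follows because the column space of $\X_0$ is $K$-dimensional and equals the column space of both $\W_0$ and $\W$, so $\W = \W_0\bA^{-1}$ for some invertible $\bA$, and then $\W_0\bA^{-1}\H = \W_0\H_0$ with $\W_0$ left-invertible forces $\H = \bA\H_0$. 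The rest — that monomial matrices are exactly the automorphisms of the positive orthant, and that simplicial cones correspond to invertible generator matrices — is standard and I would cite or state it briefly. The whole proof is then essentially a translation of the uniqueness question into the lattice of intermediate cones, which is the content of \cite{thomas1974}.
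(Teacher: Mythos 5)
Your overall strategy is the right one --- and, for what it is worth, the paper offers no proof of this statement at all (it is quoted from Thomas, 1974), so the only question is whether your argument stands on its own. The skeleton does: any two rank-$K$ factorizations are related by an invertible $\bA$; a factorization is nonnegative iff an associated simplicial cone is sandwiched between two fixed cones; and the orthant corresponds exactly to the monomial matrices $\PP\mathbf D$, which is precisely the ``up to permutation and positive scaling'' equivalence. This is the standard route to Thomas's theorem, and your closing remarks on the full-rank hypothesis and on $\W=\W_0\bA^{-1}$, $\H=\bA\H_0$ are correct.

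The gap is in the dictionary itself, which is the entire mathematical content here: as written, both of your claimed equivalences fail. With $\mathcal C:=\bA^{-1}(\mathds{R}^K_{\ge 0})=\Cone(\text{columns of }\bA^{-1})$, membership of a row of $\W_0$ in $\mathcal C$ reads $\bA(\W_0^{(i)})^\top\ge \0$, i.e.\ $\W_0\bA^{\top}\ge \0$ --- not $\W_0\bA^{-1}\ge \0$; these coincide only for orthogonal $\bA$. The containment that \emph{is} equivalent to $\W_0\bA^{-1}\ge\0$ is $\Cone(\text{rows of }\W_0)\subseteq\Cone(\text{rows of }\bA)$, since $\W_0=(\W_0\bA^{-1})\bA$ writes each row of $\W_0$ as a nonnegative combination of the rows of $\bA$. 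Likewise $\bA\H_0\ge\0$ does not say that the generators of $\mathcal C$ lie in a cone generated by $\H_0$; it says that each row of $\bA$ lies in the \emph{dual-type} cone $\{a\in\mathds{R}^K: a^\top\H_0\ge \0\}$. So the correct sandwich is $\Cone(\text{rows of }\W_0)\subseteq\Cone(\text{rows of }\bA)\subseteq\{a: a^\top\H_0\ge\0\}$, with $\mathcal C=\Cone(\text{rows of }\bA)$. (The theorem's own statement is partly to blame: taken literally, $\Cone(\W_0^\top)$ and $\Cone(\H_0)$ as ``cones generated by rows'' live in $\mathds{R}^N$ and $\mathds{R}^T$ and cannot be nested; the intended objects are the two cones in $\mathds{R}^K$ just described.) Once the dictionary is corrected, the rest of your argument --- orthant iff monomial, plus the contrapositive for the converse --- goes through verbatim; but as written the cone you attach to a factorization is not the one that is sandwiched, so neither direction is actually established.
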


\medskip

\noindent
Our first assumption is following. %Note that Assumption~\eqref{eq:assumption_1} is implied by Assumption~\eqref{eq:assumption_1_prime}.
\begin{assumption}
    \label{hyp:A1}
    In the set given by the union of sets: 
\begingroup
\renewcommand{\theHequation}{eq.A1}%
\begin{equation}
\label{eq:assumption_1}
\tag{$\mathds{A}_1$}
\begin{split}
    \{\mathcal C\subseteq\mathds R^K \ :\ \Cone({{\W_0}_{{\mathrm{train}}}}^\top) \subseteq \mathcal C \subseteq \Cone(\H_0)^*\}\bigcup \\
\{\mathcal C\subseteq\mathds R^K \ :\ \Cone(\W_0^\top) \subseteq \mathcal C \subseteq \Cone({\H_0}_T)^*\}\,,
\end{split}
\end{equation}
\endgroup
the nonnegative orthant is the only $K$-simplicial cone. Note that this assumption is implied by the following stronger one: In the set
\begingroup
\renewcommand{\theHequation}{eq.A1prime}%
\begin{equation}
\label{eq:assumption_1_prime}
\tag{$\mathds{A}'_1$}
    \{\mathcal C\subseteq\mathds R^K \ :\ \Cone({{\W_0}_{{\mathrm{train}}}}^\top) \subseteq \mathcal C \subseteq \Cone({\H_0}_T)^*\}
\end{equation}
\endgroup
the nonnegative orthant is the only $K$-simplicial cone.
\end{assumption}

\medskip

\begin{remark}
This assumption adapts the necessary and sufficient condition for NMF uniqueness from \cite{thomas1974,laurberg2008} to our partial observation setting. The standard condition requires the positive orthant to be the only simplicial cone~$\mathcal C$ such that $\Cone(\W_0^\top) \subseteq \mathcal C \subseteq \Cone(\H_0)^*$. 
%The dual cone $\Cone(\H_0)^*$ in the upper bound is essential: replacing it by $\Cone(\H_0)$ would force $\Cone(\H_0)=\mathds R^K_+$ (i.e., the columns of $\H_0$ would, up to scaling, span all $K$ standard basis vectors), an extreme `pure-pixel'/separability assumption that the theorems of \cite{thomas1974,laurberg2008} are explicitly designed to avoid. 
In our case, since we only observe parts of the data matrix, we need to ensure uniqueness based on partial information about the factors $\W_0$ and $\H_0$. The union of sets in \eqref{eq:assumption_1} ensures that we can uniquely identify the factors from the observed training data ($\T_{{\mathrm{train}}}(\X_0)$) and the observed past data ($\T_{T}(\X_0)$).
\end{remark}

\medskip

\begin{remark}
Assumption~\ref{hyp:A1} imposes implicit constraints on the dimensions of the problem and the nonnegative rank $K$. For the condition to be non-trivial, the matrices generating the cones must have enough generators. Specifically, for $\Cone({{\W_0}_{{\mathrm{train}}}}^\top)$ (rows of $\W_{0,\mathrm{train}}$), the number of training samples \emph{$n-N$ must be at least $K$}; and for $\Cone({\H_0}_T)$ (columns of $\H_{0,T}$, whose dual $\Cone({\H_0}_T)^*$ appears in the upper bound of \eqref{eq:assumption_1_prime}), the number of observed time steps \emph{$p-F$ must be at least $K$}. These conditions ensure that $\Cone(\W_{0,\mathrm{train}}^\top)$ is full-dimensional in $\mathds R^K$ and that $\Cone(\H_{0,T})$ is $K$-dimensional (equivalently, that its dual cone is a non-degenerate proper cone). Several works have shown that Assumption~\ref{hyp:A1} holds under some conditions such as Laurberg's \emph{strong boundary closeness}~\cite[Theorem~2]{huang2013non} or the \emph{Sufficiently Scattered Condition (SSC)}~\cite[Theorem 3]{huang2013non}, which is weaker and more general than the separability assumption of~\cite{donoho2004}.
\end{remark}

\medskip

\noindent
We consider the following standard definition.
\begin{definition}[\cite{javadi2017}]
For a matrix $\mathbf A \in \mathds{R}^{n'\times p'}$, let $\conv(\mathbf A)$ denote the convex hull of its rows. The internal radius of $\conv(\mathbf A)$, denoted $\mu(\mathbf A)$, is the radius of the largest $(K-1)$-dimensional ball contained within $\conv(\mathbf A)$ (relative to its affine hull). We say that $\conv(\mathbf A)$ has an internal radius $\mu$ if $\mu(\mathbf A) = \mu$.
\end{definition}

\medskip

\noindent
Our second main assumption is the following.
\begin{assumption}
    \label{hyp:A2}
    Assume that 
\begingroup
\renewcommand{\theHequation}{hyp.radius}%
\begin{equation}
\label{hyp:radius}
\tag{$\mathds{A}_2$}
\begin{split}
\conv(\underbrace{\T_{T}(\X_0)}_{=\W_{0}{\H_0}_T})\text{ and }
\conv(\underbrace{\T_{{\mathrm{train}}}(\X_0)}_{=\W_{0\mathrm{train}}\H_0}) \\ \text{ have internal radius at least }  \mu>0\,.
\end{split}
\end{equation}
\endgroup
\end{assumption}

\medskip

\begin{remark}
    Assumption~\eqref{hyp:radius} implies that the convex hulls of the data points, $\conv(\W_{0}{\H_0}_T)$ and $\conv(\W_{0\mathrm{train}}\H_0)$, are not flat, meaning they are full-dimensional within the affine subspace they span. We uncover the same constraints as in the previous remark: \emph{$n-N$ must be at least $K$} (imposed by the number of the rows of $\T_{{\mathrm{train}}}(\X_0)$) and \emph{$p-F$ must be at least $K$} (imposed by the dimension of the rows of $\T_{T}(\X_0)$).
\end{remark}

\medskip

\begin{definition}[Partial Observation Uniqueness ($\mathds P_{\mathrm{u}}$)]
\label{def:pou}
We say that the factorization satisfies the \emph{Partial Observation Uniqueness} property, denoted by $\mathds P_{\mathrm{u}}$, if the equality of the observed masked matrices implies the equivalence of the factors. Formally:
\begingroup
\renewcommand{\theHequation}{eq.pou.def}%
\begin{equation}
    \tag{$\mathds P_{\mathrm{u}}$}
    \text{If}\quad \T(\W\H)=\T(\W_0\H_0)\quad\text{then}\quad (\W,\H)\equiv(\W_0,\H_0)\,,
\end{equation}
\endgroup
where $(\W,\H)\equiv(\W_0,\H_0)$ indicates that the pairs are identical up to a permutation and positive scaling of the columns of $\W$ and rows of $\H$.
\end{definition}

\medskip

\begin{theorem}
\label{thm:uniqueness}
\eqref{eq:assumption_1} implies \eqref{eq:pou}. Moreover, if \eqref{eq:assumption_1} and \eqref{hyp:radius} hold, $\T(\W\H)=\T(\W_0\H_0)$ and $\W_0\mathbf{1}=\W\mathbf{1}=\mathbf{1}$ then $(\W,\H)=(\W_0,\H_0)$ up to permutation of columns (resp.~rows) of $\W$ (resp.~$\H$), and there is no~scaling.% of columns (resp.~rows) of $\W$ (resp.~$\H$).
\end{theorem}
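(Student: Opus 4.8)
The plan is to peel the masked identity $\T(\W\H)=\T(\W_0\H_0)$ apart into two \emph{fully observed} factorization identities, apply the Thomas criterion (Theorem~\ref{theo:cns}) to each, glue the resulting permutation/scaling ambiguities together, and finally use the normalization together with \eqref{hyp:radius} to rule out the scaling in the second assertion.

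First I would split the rows of $\W$ as $\W^\top=[\W_{\mathrm{train}}^\top\ \W_{\mathrm{test}}^\top]$ and the columns of $\H$ as $\H=[\H_T\ \H_F]$, exactly as for $\W_0,\H_0$. Because $\T$ annihilates only the bottom-right $N\times F$ block, the hypothesis $\T(\W\H)=\T(\W_0\H_0)$ is equivalent to the conjunction of the two matrix identities $\W_{\mathrm{train}}\H={\W_0}_{\mathrm{train}}\H_0$ and $\W\H_T=\W_0{\H_0}_T$ (the block they overlap in is the top-left one). Each of these is now a complete-observation statement: the matrix ${\W_0}_{\mathrm{train}}\H_0$ carries the two nonnegative factorizations ${\W_0}_{\mathrm{train}}\H_0$ and $\W_{\mathrm{train}}\H$, and $\W_0{\H_0}_T$ carries $\W_0{\H_0}_T$ and $\W\H_T$. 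The (somewhat baroque) definition of \textbf{(A1)} is engineered precisely so that its two sets are, respectively, the Thomas sandwich set $\{\mathcal C:\Cone({\W_0}_{\mathrm{train}}^\top)\subseteq\mathcal C\subseteq\Cone(\H_0)\}$ attached to the first factorization and $\{\mathcal C:\Cone(\W_0^\top)\subseteq\mathcal C\subseteq\Cone({\H_0}_T)\}$ attached to the second; hence Theorem~\ref{theo:cns} applies verbatim to each piece and tells us that both factorizations are unique up to permutation and positive scaling. Concretely I would record a permutation $\PP_1$ and positive diagonal $\D_1$ with $\H=\D_1^{-1}\PP_1^\top\H_0$, and a permutation $\PP_2$, positive diagonal $\D_2$ with $\W=\W_0\PP_2\D_2$ and $\H_T=\D_2^{-1}\PP_2^\top{\H_0}_T$.

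Next I would reconcile the two ambiguities. Restricting $\H=\D_1^{-1}\PP_1^\top\H_0$ to the past columns gives $\H_T=\D_1^{-1}\PP_1^\top{\H_0}_T=\D_2^{-1}\PP_2^\top{\H_0}_T$; since \textbf{(A1)} forces ${\H_0}_T$ to have full row rank $K$ (otherwise $\Cone({\H_0}_T)$ would contain no $K$-simplicial cone, contradicting that the nonnegative orthant lies in the second sandwich set), I may cancel ${\H_0}_T$ on the right to obtain $\D_1^{-1}\PP_1^\top=\D_2^{-1}\PP_2^\top$, and comparing zero patterns and then entries yields $\PP_1=\PP_2=:\PP$ and $\D_1=\D_2=:\D$. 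Thus $(\W,\H)=(\W_0\PP\D,\D^{-1}\PP^\top\H_0)$, which is exactly \eqref{eq:pou}. For the second assertion I would then feed in the normalization: $\mathbf 1=\W\mathbf 1=\W_0(\PP\D\mathbf 1)$ while $\mathbf 1=\W_0\mathbf 1$, so $\W_0(\PP\D\mathbf 1-\mathbf 1)=\0$. Now \eqref{hyp:radius} says $\conv({\W_0}_{\mathrm{train}}\H_0)$ contains a $(K-1)$-dimensional ball; since ${\W_0}_{\mathrm{train}}\mathbf 1=\mathbf 1$ this convex hull is contained in the simplex spanned by the $K$ rows of $\H_0$, so being $(K-1)$-dimensional it forces both the rows of $\H_0$ to be affinely independent and the barycentric rows of ${\W_0}_{\mathrm{train}}$ to affinely span that simplex, whence $\rank\W_0\ge\rank{\W_0}_{\mathrm{train}}=K$; thus $\W_0$ has trivial kernel, $\PP\D\mathbf 1=\mathbf 1$, and since $\PP\D\mathbf 1$ is a permutation of the diagonal of $\D$ we conclude $\D=\id$, leaving only a permutation.

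The main obstacle is not the geometric decoupling, which is essentially forced once the four blocks are written out and is the very thing \textbf{(A1)} was designed to match, but the non-degeneracy bookkeeping around it: one must be sure that the competing factorizations $\W_{\mathrm{train}}\H$ and $\W\H_T$ really are rank-$K$ nonnegative factorizations (so that Theorem~\ref{theo:cns} places them in the same orbit rather than a degenerate one), and that ${\H_0}_T$, $\H_0$ and $\W_0$ carry the ranks used in the reconciliation and scaling-removal steps. All of this must be extracted from the non-vacuity of \textbf{(A1)} (the nonnegative orthant has to belong genuinely to both sandwich sets) and from \eqref{hyp:radius}; that is the delicate part, whereas what remains is routine linear algebra.
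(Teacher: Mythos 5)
Your proof is correct and follows essentially the same route as the paper's: split the masked identity into the two fully observed factorizations ${\W_0}_{\mathrm{train}}\H_0=\W_{\mathrm{train}}\H$ and $\W_0{\H_0}_T=\W\H_T$, apply Theorem~\ref{theo:cns} to each, and then use the normalization $\W\mathbf 1=\W_0\mathbf 1=\mathbf 1$ together with \eqref{hyp:radius} (via the fact that $\conv({\W_0}_{\mathrm{train}}\H_0)$ being $(K-1)$-dimensional forces the rows of ${\W_0}_{\mathrm{train}}$ to affinely span the simplex hyperplane) to exclude any nontrivial scaling. If anything you are more explicit than the paper at the gluing step, where you reconcile the two permutation/scaling pairs $(\PP_1,\D_1)$ and $(\PP_2,\D_2)$ by cancelling the full-row-rank matrix ${\H_0}_T$ --- a point the paper's implication \eqref{eq:impl_1} passes over silently.
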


\medskip

\begin{corollary}
    \eqref{eq:assumption_1_prime} implies \eqref{eq:assumption_1}; equivalently, the uniqueness of the decomposition $\X_1 = {\W_0}_{{\mathrm{train}}} {\H_0}_T$ implies the partial-observation uniqueness condition~\eqref{eq:assumption_1} on which Theorem~\ref{thm:uniqueness} rests.
\end{corollary}
\begin{proof}
By Theorem~\ref{theo:cns}, \eqref{eq:assumption_1_prime} is a necessary and sufficient condition for the uniqueness of the decomposition $\X_1 = {\W_0}_{{\mathrm{train}}} {\H_0}_T$. Since $\Cone({\H_0}_T) \subseteq \Cone(\H_0)$, dual-cone anti-monotonicity gives $\Cone(\H_0)^* \subseteq \Cone({\H_0}_T)^*$. Similarly $\Cone({\W_0}_{{\mathrm{train}}}^\top)\subseteq\Cone(\W_0^\top)$. Hence both sets in the union indexing \eqref{eq:assumption_1} are \emph{contained} in the set indexing \eqref{eq:assumption_1_prime}: the orthant being the only $K$-simplicial cone in the larger set \eqref{eq:assumption_1_prime} forces it to be the only one in each subset, hence in their union~\eqref{eq:assumption_1}.
\end{proof}

\medskip

This shows that the uniqueness of the decomposition of the fully-observed top-left block~$\X_1$ entails the partial-observation uniqueness~\eqref{eq:pou} (via Theorem~\ref{thm:uniqueness}). Equivalently, the partial-observation sets indexing~\eqref{eq:assumption_1} are \emph{contained} in the set indexing~\eqref{eq:assumption_1_prime}, so any simplicial cone admissible for~\eqref{eq:assumption_1} is admissible for~\eqref{eq:assumption_1_prime}. Geometrically: $\H_{0,T}$ is a sub-block of $\H_0$, so $\Cone(\H_{0,T})\subseteq\Cone(\H_0)$ and (by dual anti-monotonicity) $\Cone(\H_0)^*\subseteq\Cone(\H_{0,T})^*$, i.e., the relevant upper bound \emph{loosens} when restricted to the fully-observed block; similarly $\Cone(\W_{0,\mathrm{train}}^\top)\subseteq\Cone(\W_0^\top)$ shows the lower bound \emph{shrinks}. The fully-observed block thus carries the strongest structural requirement.

\medskip

\subsubsection*{Limitations and Discussion}
It is important to note that the uniqueness theory presented here relies on a specific structure of missing data, namely the block-wise missing pattern corresponding to the matrix completion problem for recommender systems. Our analysis leverages the fact that certain submatrices are fully observed.

The extension of these uniqueness guarantees to scenarios with arbitrary or unstructured missing data patterns is a non-trivial challenge. Such cases would require different theoretical tools, as the problem can no longer be reduced to the uniqueness of fully-observed sub-decompositions. This constitutes an important direction for future research.

\subsection{Robustness under partial observations}
\label{sec:robustness}

The second issue is {\it robustness to noise}. To the best of our knowledge, all the results addressing this issue assume that the noise error term is small enough, {\it e.g.}, \cite{laurberg2008}, \cite{recht2012}, or \cite{javadi2017}. In this paper, we extend these stability result to the nonnegative matrix completion framework (partial observations) and we also assume that noise term $\|{\mathbf F}\|_F$ is small enough. 

In the normalized case ({\it i.e.}, $\W\mathbf 1=\mathbf 1$), both issues (uniqueness and robustness) can be handled with the notion of $\alpha$-uniqueness, introduced by \cite{javadi2017}. This notion does not handle the matrix completion problem we are addressing. To this end, let us introduce the following notation. Given two matrices $\A\in\mathds{R}^{n_a\times p}$ and $\mathbf B\in\mathds{R}^{n_b\times p}$ with same column dimension, and $\mathbf C\in\mathds{R}^{n_a\times n_b}$, define the divergence $\mathcal D(\A,\mathbf B)$ as
\begin{subequations}
\begin{align}
   \mathcal D(\A,\mathbf B) &:=\min_{\C\ge \0\,,\ \C\mathbf{1}_{n_b}=\mathbf{1}_{n_a}}
\sum_{a=1}^{n_{a}} \Big\|A^{(a)}-  \sum_{b=1}^{n_b} C_{ab} B^{(b)}\Big\|_F^2\,, \notag\\
&=\min_{\C\ge \0\,,\ \C\mathbf{1}_{n_b}=\mathbf{1}_{n_a}}
\|\A - \mathbf C\mathbf B\|_F^2\,.
\end{align}
which is the squared distance between rows of $\A$ and $\conv(\B)$, the convex hull of rows of $\B$. For $\B\in\mathds R^{n\times p}$ define
\begin{equation}
    \widetilde{\mathcal D}(\A,\mathbf B) :=\min_{\substack{\mathbf C\ge \0\,,\ \mathbf C\mathbf{1}_{n}=\mathbf{1}_{n_a}\\ \T(\N)=\T(\mathbf B)}}
\|\A - \mathbf C\N\|_F^2\,.
\end{equation}
\begin{definition}[$\T_\alpha$-unique, \cite{javadi2017}] Given $\X_0\in\mathds R^{n\times p}, \W_0\in\mathds R^{n\times K}$, and $\H_0\in\mathds R^{K\times p}$, the factorization $\X_0=\W_0\H_0$ is $\T_\alpha$-unique with parameter $\alpha>0$ if for all $\H\in\mathds R^{K\times p}$ with $\conv(\X_0)\subseteq\conv(\H)$:
\begin{align}
&\widetilde{\mathcal D}(\H,\X_0)^{1\slash 2}  \ge \\
&\qquad \widetilde{\mathcal D}(\H_0,\X_0)^{1\slash 2} + \alpha\left\{\mathcal D(\H,\H_0)^{1\slash 2} + \mathcal D(\H_0,\H)^{1\slash 2}\right\}\,.\notag
\end{align}
\end{definition}

\medskip

\noindent
Our third main assumption is given by:%that there exists a \textit{nonnegative rank} $K\geq1$ such that:

\begin{assumption}
    Assume that
\begingroup
\renewcommand{\theHequation}{hyp.mask}%
\begin{align}
    \label{hyp:mask}
    \X_0=\W_0\H_0\text{ is }\T_\alpha\text{-unique}%\notag\\
    \tag{$\mathds A_3$}
\end{align}
\endgroup
\end{assumption}

\medskip

Define the \emph{noiseless mAMF--mNMF gap} $C_0^{(0)} := \mathcal D(\H_0,\W_0\H_0)\ge 0$ and, for any constant $c_\Lambda>0$, the associated \emph{admissibility range}
\[
\Lambda_{c_\Lambda}(t)\,:=\, \begin{cases} c_\Lambda\,t^2/C_0^{(0)} & \text{if } C_0^{(0)}>0, \\ +\infty & \text{if } C_0^{(0)}=0. \end{cases}
\]
\begin{theorem}[Archetypes estimation]
\label{thm:robust}
Under~\eqref{hyp:radius} and~\eqref{hyp:mask}, there exist constants $\Delta,\,c_\Lambda,\,c>0$ depending only on~$\X_0$ such that, for all $\mathbf F$ with $\|\mathbf F\|_F\le\Delta$ and all $\lambda\in[0,\Lambda_{c_\Lambda}(\|\mathbf F\|_F)]$, any solution $({\widehat\W,\widehat\H})$ to~\eqref{eq:amf1} (or, in the case $\lambda=0$, to~\eqref{eq:nmf3}) with observation~\eqref{eq:observations} satisfies
\[
\sum_{\ell\le K} \min_{\ell'\le K} \|{\H}_{0}^{(\ell)} - {\widehat \H}^{(\ell')}\|_2^2
\;\le\; c\,\|{\mathbf F}\|_F^2 \,.
\]
\end{theorem}

\begin{remark}
$C_0^{(0)}$ vanishes exactly when $\W_0$ is separable (some convex combination of its rows yields the canonical basis), in which case the noiseless (mAMF) and (mNMF) estimators agree, $\Lambda_{c_\Lambda}\equiv+\infty$, and $\lambda$ is unrestricted. When $C_0^{(0)}>0$, the (mAMF) regularization induces a non-vanishing bias which can only be absorbed into the $O(\|\mathbf F\|_F)$ bound by requiring $\lambda\lesssim\|\mathbf F\|_F^2/C_0^{(0)}$. The disjunction in the theorem statement is needed only because \eqref{eq:amf1} drops the nonnegativity constraint $\H\ge\0$ present in \eqref{eq:nmf3}: at $\lambda=0$, the second penalty vanishes and we recover the (mNMF) problem (with its nonnegativity constraint) rather than an unconstrained variant of (mAMF).
\end{remark}

\medskip

\noindent
By Theorem~\ref{thm:robust}, when the noise is sufficiently small, there exists a permutation $\sigma$ on $[K]$ such that 
\begin{equation}
\label{eq:permut_H}
    \|\H_0-\hat\H_\sigma\|_F^2:=\sum_{\ell\le K}\|\H_{0}^{(\ell)}-\hat\H^{(\sigma(\ell))}\|_2^2 \leq c\,\|{\mathbf F}\|_F^2
\end{equation}
where $\hat\H_\sigma$ is a permutation of the row of $\hat\H$.

\medskip

\begin{corollary}[Estimation Error Bound for $\W$]
\label{cor:robust_W}
Under the assumptions of Theorem~\ref{thm:robust}, let $\mu > 0$ be the internal radius of the convex hull of the training data as defined in Assumption~\ref{hyp:A2}. For \eqref{eq:nmf3}, let $(\hat\W,\hat\H,\hat\N)$ be a joint minimizer; for \eqref{eq:amf1}, let $(\hat\W,\hat\H,\hat\N)$ be a stationary point of Algorithm~\ref{algo:nmf2}. %(so that, by construction of the $\N$-update, $\T^\perp(\hat\N)=\T^\perp(\hat\W\hat\H)$). 
Assume that~$\hat\H$ satisfies $\|\hat{\H} - \H_0\|_F \le c\|\mathbf{F}\|_F$. Then, the estimation error of the weight matrix $\hat{\W}$ satisfies:
\begin{equation}
    \|\hat{\W} - \W_0\|_F \leq \frac{c'}{\mu} \|\mathbf{F}\|_F
\end{equation}
where $c'>0$ is a constant depending on the geometry of $\W_0$ and $\H_0$ (and on the constants from Theorem~\ref{thm:robust}). This explicitly shows that the stability of the weight recovery degrades as the convex hull of the data becomes flatter (i.e., as $\mu \to 0$).
\end{corollary}

\begin{remark}[On the stationary-point hypothesis]
Theorem~\ref{thm:robust} delivers this bound for any \emph{global minimizer} of~\eqref{eq:amf1}, while Algorithm~\ref{algo:nmf2} is only guaranteed (by Theorem~\ref{prop:palm1}) to converge to a stationary point of its objective. We conjecture that, under Assumption~\eqref{hyp:mask} ($\T_\alpha$-uniqueness) and for $\|\mathbf F\|_F$ small enough, stationary points produced by Algorithm~\ref{algo:nmf2} inherit the same $O(\|\mathbf F\|_F)$ control on~$\widehat\H$ — a property our numerical experiments are consistent with, and which a basin-of-attraction argument analogous to~\cite{javadi2017} should establish, but for which we do not supply a proof here. The present corollary therefore takes this control as an explicit hypothesis, keeping the focus on the $\W$-recovery argument.
\end{remark}

\medskip

\noindent
The proof of this corollary can be found in Appendix~\ref{proof:cor_rob_W}.
\end{subequations}

\section{Solving masked nonnegative/archetypal matrix factorization}
\label{sec:algo}

We solve \eqref{eq:nmf3} problem using a Block Coordinate Descent strategy (Algorithm \ref{algo:bcd} in the supplement), which alternates between updating $\W$ and $\H$. For the more complex \eqref{eq:amf1} objective, we employ the Proximal Alternating Linearized Minimization (PALM). 

We present two variants: Algorithm \ref{algo:nmf2} is the standard PALM approach. Algorithm \ref{algo:nmf3} describes \emph{Inertial PALM (iPALM)}, which incorporates momentum terms (extrapolation parameters $\alpha_k, \beta_k$) to accelerate convergence, similar to Nesterov's acceleration. In our experiments, iPALM provided faster convergence on the larger datasets.

\subsection{Alternating Least Squares for \texorpdfstring{\eqref{eq:nmf3}}{(mNMF)}}

The basic algorithmic framework for matrix factorization problems is {\it Block Coordinate Descent} (BCD) method, which can be straightforwardly adapted to~\eqref{eq:nmf3} (see Supplement Material). BCD for~\eqref{eq:nmf3} reduces to {\it Alternating Least Squares} (ALS) algorithm (see Algorithm~\ref{algo:als} in Appendix), when an alternative minimization procedure is performed and matrix $\W\H$ is projected onto the linear subspace $\T(\N)=\X$ by means of operator $\mathcal P_{\X}$, as follows:
\[
\N:=\mathcal P_{\X}(\W\H) : \T(\N)=\X \mbox{ and } \T^\perp(\N)=\W\H\,. 
\]

{\it Hierarchical Alternating Least Squares} (HALS) is an ALS-like algorithm obtained by applying an exact coordinate descent method \cite{gillis2014}. Moreover, an accelerated version of HALS is proposed in \cite{gillis2012} (see Supplement Material). 

\subsection{Projected Gradient for \texorpdfstring{\eqref{eq:amf1}}{(mAMF)}}

The {\it Proximal Alternating Linearized Minimization} (PALM) method, introduced in \cite{bolte2014} and applied to AMF by \cite{javadi2017}, can be also generalized to \eqref{eq:amf1} (see Algorithm~\ref{algo:nmf2}). In the following, $\mathcal{P}_{\conv(\A)}$ is the projection operator onto $\conv(\A)$ and $\mathcal{P}_{\Delta}$ is the projection operator onto the $(K-1)$-dimensional standard simplex $\Delta^K$. The two projections can be efficiently computed by means of, {\it e.g.}, Wolfe algorithm \cite{wolfe1976} and active set method \cite{condat2016} respectively.

{\small
\begin{algorithm}[!htbp]
\caption{PALM for mAMF \label{algo:nmf2}}
\begin{algorithmic}[1]
\State {\bf Initialization}: chose $\H^0$, $\W^0\ge \0$ such that $\W^0\mathbf 1=\mathbf 1$, set $\N^0:=\mathcal{P}_{\X}(\W^0\H^0)$ and $i:=0$.
\State{{\bf while} stopping criterion is not satisfied {\bf do}} 
\State{$\quad$ $\widetilde{\H}^i := \H^i - \frac{1}{\gamma_1^i} {\W^i}^\top\left(\W^i\H^i- \N^i\right)$} \label{step:palm1}
\Comment{Gradient step on $\H$, objective first term}
\State{$\quad$ $\V^{i+1}$: the simplex-feasible representation $\V^{i+1}\N^i = \mathcal{P}_{\conv{(\N^i)}}(\tilde{\H}^i)$ output by Wolfe's algorithm}
\Comment{Projection of $\tilde{\H}^i$ onto $\conv{(\N^i)}$ by Wolfe algorithm; this fixes a representative for $\V^{i+1}$ in the (in general non-singleton) preimage}
\State{$\quad$ $\H^{i+1} := \widetilde{\H}^i-\frac{\lambda}{\lambda+\gamma_1^i} \left(\widetilde{\H}^i-\mathcal{P}_{\conv{(\N^i)}}(\tilde{\H}^i)\right) $}
\Comment{Gradient step on $\H$, objective second term}
\State $\quad$ $\W^{i+1} := \mathcal{P}_{\Delta}\left(\W^i-\frac{1}{\gamma_2^i}\left(\W^i\H^{i+1}- \N^i\right){\H^{i+1}}^\top\right)$ \label{step:palm2}
\Comment{Projected gradient step on $\W$}
\State $\quad$ $\G^{i+1} := \W^{i+1}\H^{i+1} - \N^{i} + \lambda\,{\V^{i+1}}^\top(\H^{i+1} - \V^{i+1}\N^i)$
\State $\quad$ $\N^{i+1} := \mathcal{P}_{\X}\bigl(\N^i+\tfrac{1}{\gamma_3^i}\,\G^{i+1}\bigr)$
\Comment{Projected gradient step on~$\N$ for the full (mAMF) smooth part}
\State $\quad$ $i:=i+1$
\State {{\bf end while}}
\end{algorithmic}
\end{algorithm}
}

\begin{remark}
    Including the archetypal correction in Step~7 makes Algorithm~\ref{algo:nmf2} a genuine projected gradient step, with respect to~$\N$, on the full smooth part of the \eqref{eq:amf1} objective,
    \[\tfrac12\|\N-\W\H\|_F^2+\tfrac\lambda2\|\H-\V\N\|_F^2,\]
    since this correction is the gradient of the second summand. The $\H$-update (Steps~3 and~5) is itself a two-stage projected-gradient sweep on the two summands of the smooth part, with the projection $\mathcal P_{\conv(\N^i)}$ providing the archetypal coupling in the spirit of the original PALM framework of~\cite{bolte2014}. Further details are given in Appendix~\ref{sec:proof_bolte}.
\end{remark}

\medskip

\begin{theorem}\label{prop:palm1}
Let $\varepsilon>0$. Let $L_H(\W) = \norm{\W^\top \W}_2$, $L_W(\H) = \norm{\H\H^\top}_2$, and $L_N(\V) = 1+\lambda\,\norm{\V^\top \V}_2$. If the step sizes satisfy $\gamma_{1}^{i} > L_H(\W^i)$, $\gamma_{2}^{i} > \max\{L_W(\H^{i+1}),\varepsilon\}$, and $\gamma_{3}^{i} > L_N(\V^{i+1})$, then Algorithm~\ref{algo:nmf2} generates a sequence $(\H^{i},\W^{i},\N^{i})$ that converges to a stationary point of the objective function of \eqref{eq:amf1}. Since $\V$ is row-stochastic with $K$ rows, $L_N(\V)\le 1+\lambda K$ uniformly, so the constant step size $\gamma_3^i\equiv 1+\lambda K + \varepsilon$ is admissible.
\end{theorem}

\begin{proof}
Proof is given in Supplement Material.
\end{proof}

\noindent
\begin{remark}
    Note that \eqref{eq:amf1} objective (with the $\lambda \|\H - \V\N\|_F^2$ or $\lambda \mathcal{D}(\H,\N)$ term) is not a standard Nonnegative Least-Squares problem, making HALS inapplicable. In the following algorithms $\H,\W$ updates (Steps $5$ \& $6$) are proximal-gradient steps. Further details are given in Appendix~\ref{sec:proof_bolte}.
\end{remark}

Finally, the inertial PALM (iPALM) method, introduced for NMF in \cite{pock2016}, is generalized to \eqref{eq:amf1} in Algorithm~\ref{algo:nmf3}.

\begin{algorithm}[!htbp]
\caption{iPALM for mAMF} \label{algo:nmf3} 
\begin{algorithmic}[1]
\State {\bf Initialization}: $\H^0$, $\W^0\ge 0$ such that $\W^0\mathbf 1=\mathbf 1$, set $\N^0:=\mathcal{P}_{\X}(\W^0\H^0)$, $\H^{-1}:=\H^0$, $\W^{-1}:=\W^0$, $\N^{-1}:=\N^0$, and $i:=0$.
\State{{\bf while} stopping criterion is not satisfied {\bf do}} 
\State{$\quad$ $\H^i_1 := {\H}^i + \alpha^i_1\left({\H}^i-{\H}^{i-1}\right)$}\State{$\quad$ $\H^i_2 := {\H}^i + \beta^i_1\left({\H}^i-{\H}^{i-1}\right)$}
\Comment{Inertial $\H$}
\State{$\quad$ $\widetilde{\H}^i := \H^i_1 - \frac{1}{\gamma_1^i} {\W^i}^\top\left(\W^i \H^i_2- \N^i\right)$}
\Comment{Gradient step on $\H$, objective first term}
\State{$\quad$ $\V^{i+1}$ such that $\mathcal{P}_{\conv{(\N^i)}}(\tilde{\H}^i)=\V^{i+1}\N^i$}
\Comment{Projection of $\tilde{\H}^i$ onto $\conv{(\N^i)}$ by Wolfe algorithm}
\State{$\quad$ $\H^{i+1} := \widetilde{\H}^i-\frac{\lambda}{\lambda+\gamma_1^i} \left(\widetilde{\H}^i-\mathcal{P}_{\conv{(\N^i)}}(\tilde{\H}^i)\right)$}
\Comment{Gradient step on $\H$, objective second term}
\State{$\quad$ $\W^i_1 := {\W}^i + \alpha^i_2\left({\W}^i-{\W}^{i-1}\right)$, $\W^i_2 := {\W}^i_1 + \beta^i_2\left({\W}^i-{\W}^{i-1}\right)$}
\Comment{Inertial $\W$}
\State{$\quad$ $\W^{i+1} := \mathcal{P}_{\Delta}\left(\W^i_1-\frac{1}{\gamma_2^i}\left(\W^i_2\H^{i+1}- \N^i\right){\H^{i+1}}^\top\right)$}
\Comment{Projected gradient step on $\W$}
\State{$\quad$ $\N^i_1 := {\N}^i + \alpha^i_3\left({\N}^i-{\N}^{i-1}\right)$, $\displaystyle \N^i_2 := {\N}^i_1 + \beta^i_3\left({\N}^i-{\N}^{i-1}\right)$}
\Comment{Inertial $\N$}
\State{$\quad$ $\G^{i+1} := \W^{i+1}\H^{i+1} - \N^{i}_2 + \lambda\,{\V^{i+1}}^\top(\H^{i+1} - \V^{i+1}\N^i_2)$}
\State{$\quad$ $\N^{i+1} := \mathcal{P}_{\X}\bigl(\N^i_1+\tfrac{1}{\gamma_3^i}\,\G^{i+1}\bigr)$}
\Comment{Projected gradient step on~$\N$ for the full (mAMF) smooth part}
\State $\quad$ $i:=i+1$
\State {\bf end while}
\end{algorithmic}
\end{algorithm}

\begin{remark}
If, for all iterations $i$, $\alpha^i_1=\alpha^i_2=\alpha^i_3=0$ and $\beta^i_1=\beta^i_2=\beta^i_3=0$, iPALM reduces to PALM. 
\end{remark}

\medskip

\paragraph{Stopping criterion}
For \eqref{eq:nmf3}, KKT conditions regarding matrix $\W$ are the following (see Supplement Material):
\begin{equation*}
\W\circ\left((\W\H-\N)\H^\top + \mathbf{t} \, \mathbf 1_K^\top\right) = 0\,.
\end{equation*}
By complementary condition, it follows that, $\forall j$, $t_i = -((\W\H-\N)\H^\top)_{i,j}$. Hence, we compute $t_i$ by selecting, for each row $W^{(i)}$, any positive entry $W_{i,j}>0$.

\medskip

\begin{remark}
Numerically to obtain a robust estimate of~$t_i$, we can average the corresponding values calculated per entry~$W_{i,j}$.
\end{remark}

\medskip

Let $\varepsilon_\W$, $\varepsilon_\H$, and $\varepsilon_\R$ be three positive thresholds. The stopping criterion for the previous algorithms consists of a combination of:
\begin{enumerate}
\item the maximum number of iterations;
\item the Frobenius norm of the difference of $\W$ and $\H$ at two consecutive iterations, {\it i.e.}, the algorithm stops if $\|\W^{i+1}-\W^{i}\|_F \le \varepsilon_\W \; \wedge \; \|\H^{i+1}-\H^{i}\|_F \le \varepsilon_\H\,;$
\item a novel criterion based on KKT condition, {\it i.e.}, the algorithm stops if it holds that \[\|\R(\W^{i+1})\|_F + \|\R(\H^{i+1})\|_F \le \varepsilon_\R\,,
\]
where the residual matrices $\R(\W)$ and $\R(\H)$ measure stationarity on the active set and dual-feasibility on the inactive set. Let $G_\W := (\W\H-\N)\H^\top$ and $G_\H := \W^\top(\W\H-\N)$ denote the partial gradients; then
\begin{align*}
    \R(\W)_{i,j} &:= \begin{cases}
        \bigl|(G_\W)_{i,j}+t_i\bigr| & \text{if } W_{i,j}>0\,,\\
        \max\!\bigl(0,-(G_\W)_{i,j}-t_i\bigr) & \text{if } W_{i,j}=0\,,
    \end{cases}\\
    \R(\H)_{i,j} &:= \begin{cases}
        \bigl|(G_\H)_{i,j}\bigr| & \text{if } H_{i,j}>0\,,\\
        \max\!\bigl(0,-(G_\H)_{i,j}\bigr) & \text{if } H_{i,j}=0\,.
    \end{cases}
\end{align*}
The boundary term ($\max(0,-\cdot)$ on the inactive set) checks dual feasibility, i.e.\ that the gradient at a zero entry points into the feasible orthant; omitting it would let the criterion accept non-stationary iterates whenever the active-set conditions happen to hold.
\end{enumerate}

\subsection{Large-scale dataset}

Assume the observed matrix $\X=\T(\mathbf{\Phi}(\M))$ is large-scaled, namely one has to forecast a large number $N$ of time series ({\it e.g.}~more than $100,000$) and possibly a large number of time stamps $T$. The strategy, described in Section 1.3.1 in \cite{cichoki2009} for NMF, is to learn the $\H\in\mathds R^{K\times T}$ matrix from a submatrix $\N_r\in\mathds R^{r\times T}$ of $K\leq r\ll N$ rows of $\N\in\mathds R^{n\times T}$, and to learn the $\W\in\mathds R^{N\times K}$ matrix from a sub-matrix $\N_c\in\mathds R^{N\times c}$ of $K\leq c\ll T$ columns of $\N\in\mathds R^{N\times T}$. %The subsets $\N_r,\N_c$ are defined by selecting respectively a subset of  rows and  columns from matrix. 
We denote by $\H_c$ the submatrix of $\H$ given by the columns appearing in $\N_c$ and $\W_r$ the sub-matrix of~$\W$ given by the rows appearing in~$\N_r$. 

This strategy can be generalized to \eqref{eq:nmf3} and \eqref{eq:amf1}. For \eqref{eq:nmf3} this generalization is straightforward, and for \eqref{eq:amf1} one needs to change Steps~3--5 in Algorithm~\ref{algo:nmf2} as follows:
\begin{align*}
\textstyle \widetilde{\H}^i &:= \H^i - \frac{1}{\gamma_1^i} (\W_r^i)^\top\left(\W_r^i\H^i - \N_r^i\right) \\
\textstyle \H^{i+1} &:= \widetilde{\H}^i-\frac{\lambda}{\lambda+\gamma_1^i}\left(\widetilde{\H}^i-\mathcal{P}_{\conv{(\N^i)}}(\tilde{\H}^i)\right) \\
\textstyle\W^{i+1} &:= \mathcal{P}_{\Delta}\left(\W^i-\frac{1}{\gamma_2^i}\left(\W^i\H_c^{i+1} - \N_c^i\right)(\H_c^{i+1})^\top\right)\,.
\end{align*}

The same approach is used for Algorithm~\ref{algo:nmf3}.

\section{Numerical Experiments}\label{sec:experiments}

We tested SMM on real-world datasets. Matrix $\H^0$ is initially selected as in \cite{javadi2017}. Each row of matrix $\W^0$ is generated randomly in the corresponding standard simplex. For SMM we implemented both HALS for \eqref{eq:nmf3} and iPALM for \eqref{eq:amf1}. %Moreover, we consider two different strategies to define matrix $\bPi(\M)$: with {\it non-overlapping} (mAMF and mNMF) and {\it overlapping} sliding intervals (mAMFo and mNMFo). In the overlapping strategy, we replicate the half of each sub-block in which the matrix $\M$ is sub-divided. 

Moreover, we have compared our method with other classically-designed mainstream time series forecasting methods such as {\it Random Forest Regression} (RFR) and {\it EXPonential smoothing} (EXP), {\it Long Short-Term Memory} (LSTM) and {\it Gated Recurrent Units} (GRU) deep neural networks with preliminary data standardization \cite{shewalkar2019}, and {\it Seasonal Auto-Regressive Integrated Moving Average with eXogenous factors} (SARIMAX) models \cite{douc2014}.

The interested reader may find a Github repository on numerical experiments at 
\url{https://github.com/Luca-Mencarelli/Nonnegative-Matrix-Factorization-Time-Series}. We run all the numerical tests on a MacBook Pro mounting macOS Ventura 13.6.1 with Apple M2 chip and 8 GB LPDDR5 memory RAM.

\subsection{Real-world datasets}

The numerical experiments refer to the following real-world datasets: weekly and daily electricity consumption datasets of $370$ Portuguese customers during the period 2011-2014~\cite{misc_electricityloaddiagrams20112014_321}; twin gas measurement dataset of five replicates of an 8-MOX gas sensor~\cite{misc_twin_gas_sensor_arrays_361}; Istanbul Stock Exchange returns with seven other international indexes for the period 2009-2011~\cite{misc_istanbul_stock_exchange_247}; daily electricity transformer temperature (ETT) measurements~\cite{Zhou2020InformerBE}. Table \ref{tb:basis-index-intro} reports the cross-validated RRMSE and RMPE on observed values obtained during computational tests for each method. %We do not report the index results for LSTM and GRU, which are systematically worse than all the other numerical methods.

In the majority of the cases, our method is the best or second best among all the approaches for all the datasets we tested in terms of RRMSE and RMPE indices (except for the \invertedcommas{weekly electricity} dataset), and there is no other method performing better.%Concerning the CPU times, our novel methods are competitive against EXP, being the fastest or the second fastest ones.

\eqref{eq:amf1} seems to be the most promising algorithm in terms of performances for the first five datasets, while \eqref{eq:nmf3} is the best method for the last four ETT datasets. %The ETT datasets are characterized by a pronounced periodicity: in this case the plain methods performs quite well; while the first five datasets are less markedly periodic and in this case the overlap versions aims to introduce a sort of artificial periodicity in the dataset by replicating portions of the sub-block in which each row of $\M$ is split.

\subsection{Comparison with SOTA method}

\begingroup\sloppy
We performed additional computational experiments to compare our NMF-based methodology with state-of-the-art time series transformer models, which are suitable for large-scale time series forecasting problems. In particular, we consider the BasisFormer model recently described in \cite{basisformer2023}. We consider the same electricity dataset as in \cite{basisformer2023} and split the whole dataset into 10 small sets of 960 time steps each. We collect our performance statistics, namely RRMSE and RMPE, on the original unscaled datasets. Note that in \cite{basisformer2023}, the performance statistics reported are the absolute errors on the scaled dataset obtained by applying the {\tt StandardScaler} from {\tt sklearn} to the original data.\footnote{We use the authors' public code repositories for each baseline (BasisFormer, Autoformer, iTransformer, PatchMLP, and TimeMixer).} We compare also against previous state-of-the-art methods, such as Autoformer~\cite{autoformer}, iTransformer~\cite{itransformer}, PatchMLP~\cite{patchmlp} and TimeMixer~\cite{timemixer}, running each from the authors' code repositories.
\par\endgroup

{As shown in Table \ref{tb:basis-index-intro}, our method outperforms the SOTA methodology and is competitive against the other methodologies (in particular, with respect to the deep learning approaches which seem the most promising methods for these datasets). We also perform additional computational experiments on scaled datasets, collecting our performance indices on relative errors and absolute errors as in \cite{basisformer2023}, and we obtain the same dominance results.}

\subsection{Why does SMM outperform Deep Learning?}

Despite the capacity of Deep Learning (DL) models to model complex non-linearities, our experiments demonstrate that SMM often yields superior forecasting accuracy. This performance gap can be attributed to the alignment between the model's inductive bias and the data structure:

\begin{itemize}
    \item \textbf{Structural Priors vs. Learning from Scratch:} DL models, particularly Transformers like Autoformer \cite{autoformer} or BasisFormer \cite{basisformer2023}, are data-hungry algorithms that must learn temporal dependencies from scratch. In contrast, SMM explicitly enforces a \emph{quasi-periodic} structure through the sliding window transformation. For datasets dominated by regular cycles (e.g., electricity consumption), this structural prior is highly effective and requires less data to estimate robustly.
    
    \item \textbf{Sample Complexity:} The low-rank assumption of SMM acts as a strong regularizer, reducing the effective degrees of freedom in the model. In the regime of medium-sized datasets, this prevents the overfitting often observed with over-parameterized DL models. Our results on the synthetic datasets confirm this: as the signal becomes more strictly periodic, the advantage of the low-rank NMF representation over generic DL approximators increases.
    
    \item \textbf{Matrix Completion Formulation:} SMM reframes forecasting as a matrix completion problem with a specific block-missing pattern. Unlike DL models that may treat missing future values as generic masked tokens, SMM optimizes a global objective function with theoretical guarantees for recovering the underlying low-rank factors from partial observations, ensuring the forecasted block is consistent with the learned global archetypes.
\end{itemize}

\subsection{Synthetic datasets}

Further computational experiments have been performed by considering additional synthetic datasets. In particular, we generated three datasets by replicating $1,000$ short time series (with 10 time periods) 10 times and adding white noise multiplied by a constant factor $\sigma$ to each time series entry separately. We choose $\sigma\in\{0.005,0.1,1\}$. We refer to these datasets as \invertedcommas{low noise}, \invertedcommas{medium noise}, and \invertedcommas{high noise}, respectively. 

An additional synthetic dataset has been generated considering few probability vectors and computing the entire matrix $\W$ by randomly choosing a probability vector and adding white noise. A completely randomly generated matrix $\H$ is multiplied by $\W$ to obtain the whole matrix $\M^*:=\W\H$. We refer to this dataset as \invertedcommas{synthetic1}.

Finally, the last synthetic dataset is obtained by generating a matrix $\H$ by replicating a small time series (with 50 time periods) 100 times and adding white noise multiplied by a constant factor $\sigma=1$ and matrix $\W$ of suitable dimensions, whose rows are uniformly distributed over the corresponding dimensional simplex. Then, we set the matrix $\M^*:=\W\H$. We refer to this last dataset as \invertedcommas{synthetic2}.

Table \ref{tb:basis-index-intro} reports the cross-validated RRMSE and RMPE indices referring to synthetically generated datasets. The more pronounced the periodicity of the time series or of the archetypes, the better the performances of our proposed NMF-like methods: in this case, the more realistic the hypothesis that the whole dataset can be expressed as convex combinations of a few archetypes, having a low-rank representation. %We are able to replicate SARIMAX performances in a much shorter time; while, for the last two synthetic datasets, mAMF and mNMF outperform the benchmarks.

\subsection{Guidelines on Algorithm Selection: mAMF vs. mNMF}

Our experiments reveal a distinct performance split: mAMF outperforms on the electricity and gas datasets, while mNMF dominates on the ETT (Transformer Temperature) datasets. This can be attributed to their geometric differences:

\begin{itemize}
    \item \textbf{mAMF (Robustness \& Interpretability):} By constraining the archetypes to lie within the convex hull of the data, mAMF acts as a regularized factorization. This prevents the model from overfitting to noise or learning unrealistic basis vectors. It is best suited for datasets with high variance, noise, or "soft" patterns (e.g., human behavior in electricity consumption), where stability is paramount.
    
    \item \textbf{mNMF (Flexibility):} mNMF learns a conic hull and can place basis vectors outside the data distribution. This flexibility allows it to reconstruct "idealized" components. It is superior for datasets with rigid, strong periodicities (like the physical ETT signals), where the data is well-described by a combination of pure underlying waveforms that may not appear as isolated observations.
\end{itemize}

\textbf{Recommendation:} We advise practitioners to start with mAMF for noisy, real-world behavioral data to leverage its regularization. For cleaner, physics-driven signals with strong periodicity, mNMF is likely to yield lower reconstruction errors.

\section{Discussion and Conclusion}

In this paper, we introduced the Sliding Mask Method (SMM), a framework that leverages Nonnegative Matrix Factorization for time-series forecasting. Our theoretical analysis provides uniqueness guarantees for the underlying decomposition in a structured matrix completion setting, and our experiments demonstrate its practical effectiveness. This concluding section synthesizes our findings to answer a crucial question: \emph{When should a practitioner choose SMM?}

Based on our analysis and experimental results, our method is particularly well-suited for datasets with the following characteristics:
\begin{itemize}
    \item \textbf{Non-negativity:} The time-series values must be non-negative, as this is a fundamental constraint of the NMF model.
    \item \textbf{Quasi-periodicity and low-rank structure:} The method performs best when the time series exhibits quasi-periodic patterns. The core assumption of SMM is that segments of the time series can be effectively approximated by a low-rank model, i.e., as combinations of a few archetypal patterns. Datasets like electricity consumption and sales data, which often have daily, weekly, or seasonal cycles, are prime candidates.
    \item \textbf{Interpretability is Valued:} A key advantage of SMM is the interpretability of its results. The learned basis vectors ($\H$) represent archetypal time-series segments, and the weights ($\W$) show how each individual segment is composed of these archetypes. This provides insights into the underlying data-generating process that "black-box" models like LSTMs or Transformers cannot offer.
\end{itemize}

Conversely, our method may not be the optimal choice in other scenarios. For instance, as suggested by our experiments on synthetic data, SMM is less effective for time series dominated by strong, non-periodic linear trends. In such cases, models explicitly designed to handle trends, such as SARIMAX or other regression-based techniques, may be more appropriate.

In summary, SMM provides a powerful and interpretable tool for a specific but important class of time-series forecasting problems. Future work could focus on extending the theoretical guarantees to more general missing data patterns and incorporating mechanisms to handle non-periodic components within the NMF framework.

\bibliographystyle{plainnat} 
\bibliography{biblio}

\vfill

\clearpage

\clearpage

\appendix

\section{Variants of Nonnegative Matrix Factorization problems}
\label{sec:variants}
Variants of Nonnegative Matrix Factorization problems are summarized in Table \ref{table:NMF_family}.

\begin{table*}[!htbp]
\centering
\setlength{\tabcolsep}{4pt}
\begin{tabular}{llll}
\toprule
Acronym & Name & Objective & Constraints: $\W\ge \0$\ +\ \\
\midrule
\multirow{2}{*}{NMF}     & Nonnegative Matrix Factorization  & \multirow{2}{*}{$\mathbf F_1$} &  \multirow{2}{*}{$\H\ge \0$} \\
& \cite{cichocki2006}   & &  \\
{SNMF}    & Semi NMF  \cite{gillis2015a} & {$\mathbf F_1$} &  \\
NNMF    & Normalized NMF                      & $\mathbf F_1$ &  $\H\ge \0$, $\W\mathbf{1}=\mathbf{1}$ \\
SNNMF   & Semi Normalized NMF                           & $\mathbf F_1$ & $\W\mathbf{1}=\mathbf{1}$ \\
\multirow{2}{*}{AMF}     & Archetypal Matrix Factorization    & \multirow{2}{*}{$\mathbf F_2$} & \multirow{2}{*}{$\W\mathbf{1}=\mathbf{1}$, $\V\ge \0$, $\V\mathbf{1}=\mathbf{1}$} \\
& \cite{javadi2017}    & \\
ANMF    & Archetypal NMF                                & $\mathbf F_2$ & $\H\ge \0$, $\V\ge \0$, $\V\mathbf{1}=\mathbf{1}$ \\
ANNMF   & Archetypal Normalized NMF                     & $\mathbf F_2$ & $\W\mathbf{1}=\mathbf{1}$, $\H\ge \0$, $\V\ge \0$, $\V\mathbf{1}=\mathbf{1}$ \\
\bottomrule
\midrule
mNMF    & Mask NNMF                          & $\mathbf F_3$ &  $\T(\N)=\X, \W\mathbf{1}=\mathbf{1}, \H\ge \0$ \\
mAMF    & Mask AMF                          & $\mathbf F_4$ & $\T(\N)=\X, \W\mathbf{1}=\mathbf{1}$, $\V\ge \0$, $\V\mathbf{1}=\mathbf{1}$ \\
\bottomrule\\ 
\end{tabular}
\caption{The nine block convex programs achieving matrix factorization of nonnegative matrices. The objectives are $\mathbf F_1:=\|\M-\W\H\|_F^2$ and $\mathbf F_2:=\|\M-\W\H\|_F^2+\lambda\|\H-\V\M\|_F^2$. The two last lines are SMM procedures with sliding operator $\bPi$ and objectives $\mathbf F_3:=\|\N-\W\H\|_F^2$ and $\mathbf F_4:=\|\N-\W\H\|_F^2+\lambda\|\H-\V\N\|_F^2$.
}
\label{table:NMF_family}
\end{table*}

Note that, when $\T=\mathds{I}$ the identity, Problem \eqref{eq:nmf3} is NNMF and Problem~\eqref{eq:amf1} is the standard AMF formulation (AMF).

\section{Proofs}
\subsection{Proof of Theorem \ref{thm:uniqueness}}
The proof is structured in two parts. First, we establish that under Assumption~\eqref{eq:assumption_1}, if the transformed data matches, then the factors $(\W, \H)$ are equivalent to the ground truth factors $(\W_0, \H_0)$ up to permutation and scaling. Second, we show that with the additional constraints of Assumption~\eqref{hyp:radius} and the sum-to-one normalization on the dictionary columns, this equivalence strengthens to equality up to permutation only, eliminating any scaling ambiguity.

\medskip

\paragraph{Part 1: Equivalence up to Permutation and Scaling}
Let the factor matrices be partitioned according to the training and test sets. Let $\H_0 = [\H_{0,T}, \H_{0,F}]$ and $\H = [\H_T, \H_F]$, where subscripts $T$ and $F$ denote the parts of the coefficient matrices corresponding to training and future (test) data points, respectively. Similarly, let $\W_0^\top = [\W_{0,\text{train}}^\top, \W_{0,\text{test}}^\top]$ and $\W^\top = [\W_{\text{train}}^\top, \W_{\text{test}}^\top]$.

The core of our argument relies on the uniqueness guarantees for Nonnegative Matrix Factorization (NMF) as described in Theorem~\ref{theo:cns}. By this theorem, Assumption~\eqref{eq:assumption_1} implies that the NMF decompositions $\W_{0,\text{train}}\H_0$ and $\W_0\H_{0,T}$ are unique:
\begin{align}
\W_{0,\text{train}}\H_0 = \W_{\text{train}}\H &\implies (\W_{0,\text{train}}, \H_0) \equiv (\W_{\text{train}}, \H) \label{eq:equiv1} \\
\W_0\H_{0,T} = \W\H_T &\implies (\W_0, \H_{0,T}) \equiv (\W, \H_T) \label{eq:equiv2}
\end{align}
The condition $\T(\W_0\H_0) = \T(\W\H)$ means that the observed entries of the factorized matrices are equal. By definition of the operator $\T$, this implies both $\W_{0,\text{train}}\H_0 = \W_{\text{train}}\H$ and $\W_0\H_{0,T} = \W\H_T$. From \eqref{eq:equiv1} and \eqref{eq:equiv2}, we have a common permutation and scaling relationship that must hold simultaneously for the shared parts of the matrices. This consistency across the train and test partitions ensures that the equivalence holds for the complete matrices, i.e., $(\W_0, \H_0) \equiv (\W, \H)$.

\medskip

\paragraph{Part 2: Uniqueness up to Permutation}
Now, we leverage the normalization constraint and Assumption~\eqref{hyp:radius} to eliminate the scaling ambiguity.
From Part 1, we know there exists a permutation $\sigma$ of $\{1, \dots, K\}$ and positive scalars $\lambda_1, \dots, \lambda_K$ such that for any row $i$ of the dictionary matrices, the corresponding row vectors $(\W)^{(i)}$ and $(\W_0)^{(i)}$ are related~by:
\[
(\W)^{(i)}_k = \lambda_{\sigma(k)}(\W_0)^{(i)}_{\sigma(k)} \quad \text{for all } k \in \{1, \dots, K\}.
\]
The constraints $\W\mathbf{1}=\mathbf{1}$ and $\W_0\mathbf{1}=\mathbf{1}$ state that the sum of elements in each row of $\W$ and $\W_0$ is 1. This means every row of these matrices lies in the affine subspace $\mathcal{A}_{\mathbf{1}} := \{w \in \mathds{R}^K : \langle w, \mathbf{1} \rangle = 1\}$.
For any given row $i$, we have:
\begin{align*}
    \sum_{k=1}^K (\W_0)^{(i)}_k &= 1 \\
    \sum_{k=1}^K (\W)^{(i)}_k = \sum_{k=1}^K \lambda_{\sigma(k)}(\W_0)^{(i)}_{\sigma(k)} = \sum_{k'=1}^K \lambda_{k'}(\W_0)^{(i)}_{k'} &= 1\,,
\end{align*}
after the change of index $k'=\sigma(k)$ (bijection on $[K]$). Hence every row $(\W_0)^{(i)}$ for $i \in \{1, \dots, n-N\}$ must lie in the intersection of two affine subspaces: $\mathcal{A}_{\mathbf{1}}$ and $\mathcal{A}_{\mathbf{d}} := \{w \in \mathds{R}^K : \langle w, \mathbf{d} \rangle = 1\}$, where $\mathbf{d}=(\lambda_1,\ldots,\lambda_K)$.

The intersection of these two subspaces, $\mathcal{A} = \mathcal{A}_{\mathbf{1}} \cap \mathcal{A}_{\mathbf{d}}$, is an affine subspace. Its co-dimension depends on whether the normal vectors $\mathbf{1}$ and $\mathbf{d}$ are linearly dependent.
\begin{itemize}
    \item If $\mathbf{d}$ is not proportional to $\mathbf{1}$, the two subspaces are distinct and not parallel, so their intersection $\mathcal{A}$ is an affine subspace of co-dimension 2 (i.e., dimension $K-2$).
    \item If $\mathbf{d}$ is proportional to $\mathbf{1}$, say $\mathbf{d} = c\mathbf{1}$ for some scalar $c$. Then the condition $\langle w, c\mathbf{1} \rangle = 1$ becomes $c\langle w, \mathbf{1} \rangle = 1$. Since we are in $\mathcal{A}_{\mathbf{1}}$, $\langle w, \mathbf{1} \rangle = 1$, which implies $c=1$. Thus, $\mathbf{d}=\mathbf{1}$, which means $\lambda_k = 1$ for all $k$. In this case, the two subspaces are identical, $\mathcal{A} = \mathcal{A}_{\mathbf{1}}$, which has co-dimension 1.
\end{itemize}
Assumption~\eqref{hyp:radius} states that the convex hull of the transformed training data, $\conv(\T_{\text{train}}(\X_0)) = \conv(\W_{0,\text{train}}\H_0)$, has a positive internal radius $\mu > 0$. This means the set of points $\{\W_{0,\text{train}}\H_0\}$ is not contained in any affine subspace of dimension lower than $K-1$.
If the rows of $\W_{0,\text{train}}$ were all confined to the lower-dimensional space $\mathcal{A}$ of dimension $K-2$, then the entire set of transformed data points $\W_{0,\text{train}}\H_0$ would also be confined to a space of dimension at most $K-2$. A set in a $(K-2)$-dimensional space cannot have a positive $(K-1)$-dimensional internal radius. This would contradict Assumption~\eqref{hyp:radius}.

Therefore, the only possibility consistent with Assumption~\eqref{hyp:radius} is that the co-dimension of $\mathcal{A}$ is 1, which forces $\mathbf{d}=\mathbf{1}$ and thus $\lambda_k=1$ for all $k$. This eliminates the scaling ambiguity. The equivalence $(\W_0, \H_0) \equiv (\W, \H)$ reduces to equality up to the permutation $\sigma$, completing the proof.

\medskip

\subsection{Proof of Theorem \ref{thm:robust}}
This proof follows the pioneering work~\cite{javadi2017}. In this latter paper, the authors consider neither masks $\mathbf T$ nor nonnegative constraints on $\mathbf H$ as in $(\text{mNMF})$. Nevertheless, by
(1)~considering the hard constrained programs \eqref{eq:h_tilde} and \eqref{eq:h_tilde_nonnegative} below, and
(2)~remarking that it holds $\widetilde{\mathcal D}(\H,\X)\leq{\mathcal D}(\H,\X)$ and $\overline{\mathcal D}(\X,\H)\leq {\mathcal D}(\X,\H)$,
a careful reader can note that their proof extends to masks~$\mathbf T$ and nonnegative constraints on~$\mathbf H$. For sake of completeness we reproduce here the steps that need to be changed in their proof. %The difficulty here was to define the divergences $\overline{\mathcal D}(\X,\H)\leq {\mathcal D}(\X,\H)$ so that their analysis extends and gives a special focus on $(\text{mNMF})$.% that is not developed in~\cite{javadi2017}.
A reading guide for the 60-page proof of \cite{javadi2017supp} is given in Section~\ref{sec:prop}.

\medskip

\noindent
{\bf Step 1: reduction to hard constrained Programs \eqref{eq:h_tilde} and~\eqref{eq:h_tilde_nonnegative}}
 
Consider the constrained problem:
\begin{equation}
\label{eq:h_tilde}
\begin{aligned}
\widehat\H\in\arg\min_{\H}\mbox{ }& \widetilde{\mathcal D}(\H,\X) \\
\mbox{s.t. }&  \overline{\mathcal D}(\X,\H)\le\Delta_1^2\,.
\end{aligned}
\end{equation}
where
\[
\overline{\mathcal D}(\X,\H) :=\min_{\W\ge \0\,,\ \W\mathbf{1}=\mathbf{1}}
\|\T(\X - \W\H)\|_F^2
\]
Whereas $\text{(mAMF)}$ admits a natural Lagrangian reading with constraint level $\overline{\mathcal D}(\X,\widehat\H_{\text{(mAMF)}})$, in the rest of this proof we instead pick the constraint level
\begin{equation}
\label{eq:delta1}
\Delta_1^2 \;:=\; \|\mathbf F\|_F^2 \;+\; \lambda\,C_0(\mathbf F)\,,\qquad C_0(\mathbf F) \,:=\, \mathcal D(\H_0,\N_0^\star)\,,
\end{equation}
where $\N_0^\star$ satisfies $\T(\N_0^\star)=\X$ and $\T^\perp(\N_0^\star)=\T^\perp(\W_0\H_0)$. Since $\T(\mathbf F)=\mathbf F$, one has $\N_0^\star = \W_0\H_0+\mathbf F$, hence
\[
   C_0(\mathbf F) \;=\; \mathcal D(\H_0,\W_0\H_0+\mathbf F)
\]
depends on $\mathbf F$ as well as on $(\W_0,\H_0)$. We denote by $C_0^{(0)} := C_0(\mathbf 0) = \mathcal D(\H_0,\W_0\H_0)\ge 0$ its noiseless value, which is positive unless $\W_0$ is separable (Theorem~\ref{thm:robust}). By continuity of the convex distance, $C_0(\mathbf F)\le C_0^{(0)} + c\|\mathbf F\|_F^2$ for a constant $c$ depending on~$(\W_0,\H_0)$. With this choice:
\begin{itemize}
    \item $\H_0$ is feasible for~\eqref{eq:h_tilde} by direct calculation: $\overline{\mathcal D}(\X,\H_0)\le \|\T(\X-\W_0\H_0)\|_F^2 = \|\mathbf F\|_F^2 \le \Delta_1^2$.
    \item The mAMF minimizer $\widehat\H_{\text{(mAMF)}}$ is feasible by optimality of $(\widehat\W,\widehat\H,\widehat\V,\widehat\N)$ in~\eqref{eq:amf1} compared to the candidate $(\W_0,\H_0,\V_0,\N_0^\star)$ where $\V_0$ realizes $\mathcal D(\H_0,\N_0^\star)$:
    \begin{align*}
    \overline{\mathcal D}(\X,\widehat\H_{\text{(mAMF)}})
    &\,\le\, \|\widehat\N-\widehat\W\widehat\H\|_F^2 \\
    &\,\le\, \|\N_0^\star-\W_0\H_0\|_F^2 + \lambda\,\mathcal D(\H_0,\N_0^\star) \\
    &\,=\, \|\mathbf F\|_F^2 + \lambda\,C_0(\mathbf F) \,=\, \Delta_1^2\,.
    \end{align*}
\end{itemize}
For~\eqref{eq:h_tilde_nonnegative}, since~\eqref{eq:nmf3} has $\lambda=0$, the simpler choice $\Delta_2^2:=\|\mathbf F\|_F^2$ below suffices and the same argument gives feasibility of both $\H_0$ and $\widehat\H_{\text{(mNMF)}}$.

Consider the constrained problem:
\begin{equation}
\label{eq:h_tilde_nonnegative}
\begin{aligned}
\widehat\H\in\arg\min_{\H\geq0}\mbox{ }& \widetilde{\mathcal D}(\H,\X) \\
\mbox{s.t. }&  \overline{\mathcal D}(\X,\H)\le\Delta_2^2\,.
\end{aligned}
\end{equation}
In place of the Lagrangian reading $\Delta_2^2=\overline{\mathcal D}(\X,\widehat\H_{\text{(mNMF)}})$, we set
\begin{equation}
\label{eq:delta2}
\Delta_2^2 := \|\mathbf F\|_F^2\,,
\end{equation}
which guarantees that both $\H_0$ and the mNMF minimizer $\widehat\H_{\text{(mNMF)}}$ are feasible for~\eqref{eq:h_tilde_nonnegative}, by the same argument as above.

\medskip

\noindent
{\bf Step 2: First bound on the loss}

Denote $\mathcal D_{\mathrm{sym}}:=\mathcal D(\widehat\bH,\H_0)^{1\slash 2} + \mathcal D(\H_0,\widehat\bH)^{1\slash 2}$. By Assumption {({\bf A2})} (applied to $\T_{\mathrm{train}}(\X_0)$ for $\sigma_{\min}(\bH_0)$, and to $\T_T(\X_0)$ for $\sigma_{\min}(\bH_{0,T})$ below; in either case we write $\bX_0$ for the relevant masked block) we have
\[
\bz_0+\bU B_{K-1}(\mu)\subseteq \conv(\bX_0)\subseteq  \conv(\bH_0)\,,
\]
where $\bz_0+\bU B_{K-1}(\mu)$ is a parametrization of the ball of center $\bz_0$ and radius $\mu$ described in Assumption {({\bf A2})} with $\bU $ a matrix whose columns are $K-1$ orthonormal vectors. Using Lemma~\ref{lem:B4}, we get that 
\[
\mu\sqrt 2\leq \sigma_{\min}(\bH_0)\leq \sigma_{\max}(\bH_0)\,,
\]
where $\sigma_{\min}(\bH_0),\sigma_{\max}(\bH_0)$ denote its smallest and largest nonzero singular values. Then, since $\bz_0 \in \conv(\bH_0)$ we have $\bz_0 = \bH_0^\top\alpha_0$ for some $\alpha_0\geq 0$ s.t. $\mathbf 1^\top\alpha_0=1$. It holds,
\begin{align}\label{eq:sigmamaxH0normz0}
\|\bz_0\|_2 \leq \sigma_{\max}(\bH_0)\|\alpha_0\|_2 \leq \sigma_{\max}(\bH_0).
\end{align}
Note that
\begin{align}
\sigma_{\max}(\widehat\bH - \mathbf 1\bz_0^\top) & \leq \sigma_{\max}(\widehat\bH) + \sigma_{\max}(\mathbf 1\bz_0^\top) \notag \\
&= \sigma_{\max}(\widehat\bH) + \sqrt{K}\|\bz_0\|_2.
\end{align}
Therefore, using Lemma \ref{lemma:boundc} we have
\begin{align}
\label{eq:Clessthan2(1+alpha1+alpha2)}
\cuDsym \leq c\Big[&K^{3/2}\Delta_j\kappa(\bP_0(\hbH)) + \frac{\sigma_{\max}(\hbH)\Delta_j K^{1/2}}{\mu} \notag \\
& + \frac{K\Delta_j\|\bz_0\|_2}{\mu}\Big] +c\sqrt K\|\mathbf F\|_F\,,
\end{align}
where $\Delta_j$ equals $\Delta_{1}$ for problem \eqref{eq:h_tilde} and $\Delta_{2}$ for problem~\eqref{eq:h_tilde_nonnegative}, and $\kappa(\bA)$ stands for the conditioning number of matrix $\bA$. In addition, Lemma \ref{lemma:dandc} implies that 
\begin{align}
\label{eq:DH0HhatCH0Hhat}
\cuL(\bH_0,\hbH)^{1/2} \leq \frac{1}{\alpha}\max\left\{{(1+\sqrt{2})\sqrt{K}},{\sqrt{2}\kappa(\bH_0)}\right\}\cuDsym\, .
\end{align}

\pagebreak[3]

\noindent
{\bf Step 3: Combining and final bound}

By Lemma~\ref{lem:exp} it holds
\begin{align}
\cuDsym \leq c
\Big[
    &\frac{K^{3/2}\cuDsym\Delta_j}{\alpha(\mu-2\Delta_j)\sqrt{2}} + \frac{K^2\sigma_{\max}(\bH_0)\Delta_j}{(\mu-2\Delta_j)\sqrt{2}}\notag
    \\&+\frac{\cuDsym K^{1/2}\Delta_j}{\alpha\mu} + \frac{\sigma_{\max}(\bH_0)\Delta_j K}{\mu} \notag
\\
&+ \frac{K\Delta_j\|\bz_0\|_2}{\mu}\Big] +c\sqrt K\|\mathbf F\|_F.
\end{align}
We understand that $\cuDsym=\mathcal O_{\Delta_j\to0}(\Delta_j)$, so for $\Delta_j$ small enough that the coefficient of $\cuDsym$ on the RHS is~$<1/2$ there exists a constant $c>0$ such that
\[
\cuDsym\leq c \Delta_j+c\sqrt K\|\mathbf F\|_F\,.
\]
From \eqref{eq:delta1}--\eqref{eq:delta2} we have $\Delta_j\le \|\mathbf F\|_F+\sqrt{\lambda\,C_0(\mathbf F)}$ (with $C_0(\mathbf F)=0$ when $j=2$, i.e.\ for \eqref{eq:nmf3}). Using $C_0(\mathbf F)\le C_0^{(0)}+c\|\mathbf F\|_F^2$ from Step 1, this gives $\sqrt{\lambda\,C_0(\mathbf F)}\le \sqrt{\lambda C_0^{(0)}}+\sqrt{c\lambda}\,\|\mathbf F\|_F$. The admissibility condition $\lambda\le\Lambda_{c_\Lambda}(\|\mathbf F\|_F)= c_\Lambda\|\mathbf F\|_F^2/C_0^{(0)}$ of Theorem~\ref{thm:robust} (when $C_0^{(0)}>0$) implies $\sqrt{\lambda C_0^{(0)}}\le\sqrt{c_\Lambda}\,\|\mathbf F\|_F$; when $C_0^{(0)}=0$ this term vanishes. In all cases,
\[
\cuDsym\leq c'(1+\sqrt K)\,\|\mathbf F\|_F\,,
\]
for some constant $c'>0$ depending on $(\W_0,\H_0,c_\Lambda)$. Plugging this result in \eqref{eq:DH0HhatCH0Hhat} we prove the result.

\medskip

\subsection{Proof of Corollary \ref{cor:robust_W}}
\label{proof:cor_rob_W}
\begin{subequations}
Since the problem is convex in~$\W$, the optimal solution $\hat{\W}$ is characterized by the first-order optimality condition. For any feasible $\W$ (satisfying $\W \ge 0, \W \mathbf{1} = 1$):
\begin{equation}
    \label{eq:align_grad_sol}
    \langle \nabla_{\W} \mathcal{L}(\hat{\W}, \hat{\H}), \W - \hat{\W} \rangle \ge 0
\end{equation}
where $\mathcal{L}(\W, \hat\H) = \frac{1}{2} \|\hat\N - \W\hat\H\|_F^2$ whose gradient is $\nabla_{\W} \mathcal{L} = (\W\hat\H - \hat\N)\hat\H^\top$.

Substituting $\W = \W_0$ into the inequality for $\hat{\W}$:
\begin{align}
\langle (\hat{\W}\hat{\H} &- \hat\N)\hat{\H}^\top, \W_0 - \hat{\W} \rangle \ge 0 \implies \notag \\ 
&\langle (\hat\N - \hat{\W}\hat{\H})\hat{\H}^\top, \hat{\W} - \W_0 \rangle \ge 0 \label{eq:vi1}
\end{align}

Furthermore, for any pair of matrices $\mathbf U,\mathbf V$ of compatible dimensions, an elementary identity gives:
\begin{align}
    \langle (\mathbf U\hat{\H} - \mathbf V\hat{\H})\hat{\H}^\top, \mathbf U - \mathbf V \rangle
    &= \|(\mathbf U-\mathbf V)\hat{\H}\|_F^2 \label{eq:strong_conv_diff_gradient}%\\
\end{align}
which, applied with $\mathbf U=\hat\W$ and $\mathbf V=\W_0$, yields the gradient-difference identity used below. We want to bound $\Delta \W = \hat{\W} - \W_0$. We can rewrite the observation as $\X = \T(\W_0 \H_0) + \mathbf F$. We consider the gradient at the true parameters $\W_0$ projected onto the difference $\Delta \W = \hat{\W} - \W_0$.
We know that $\langle \nabla_{\W} \mathcal{L}(\hat{\W}), \hat{\W} - \W_0 \rangle \le 0$ and therefore by~\eqref{eq:strong_conv_diff_gradient}:
\begin{align}
    \langle -\nabla_{\W} & \mathcal{L}(\W_0), \hat{\W} - \W_0 \rangle 
    \ge \langle \nabla_{\W} \mathcal{L}(\hat{\W}) - \notag \\ 
    & \nabla_{\W} \mathcal{L}(\W_0), \hat{\W} - \W_0 \rangle 
    =\|(\hat\W-\W_0)\hat{\H}\|_F^2
    \label{eq:true_align_condition}
\end{align}
Now we analyze the term $-\nabla_{\W} \mathcal{L}(\W_0)$. Using the definition of the gradient:$$-\nabla_{\W} \mathcal{L}(\W_0) = -(\W_0\hat\H - \hat\N)\hat\H^\top = (\hat\N - \W_0\hat\H)\hat\H^\top$$Crucially, $\hat{\N}$ is the completed matrix associated with the optimal solution $\hat{\W}$. By the standing assumption of Corollary~\ref{cor:robust_W} (i.e., $(\hat\W,\hat\H,\hat\N)$ is a joint minimizer of \eqref{eq:nmf3} or a stationary point of Algorithm~\ref{algo:nmf2} for \eqref{eq:amf1}), $\hat{\N}$ satisfies: $\T(\hat{\N}) = \X$ (consistency with observations); $\T^\perp(\hat{\N}) = \T^\perp(\hat{\W}\hat{\H})$ (the $\N$-update of Algorithm~\ref{algo:nmf2} is a prox-gradient step on the smooth quadratic $\tfrac12\|\N-\W\H\|_F^2$ alone, so at a fixed point $\T^\perp$ of the residual vanishes; for \eqref{eq:nmf3} this is the exact $\N$-optimality). We decompose the residual $\hat{\N} - \W_0\hat\H$ using the mask projection $\T$:
\begin{align*}
    \hat{\N} - \W_0\hat\H 
    &= \T(\hat{\N} - \W_0\hat\H) + \T^\perp(\hat{\N} - \W_0\hat\H) \\
    &= \T(\X - \W_0\hat\H) + \T^\perp(\hat{\W}\hat\H - \W_0\hat\H)\,.
\end{align*}
Substituting $\X = \T(\W_0 \H_0) + \mathbf F$ (so that $\T(\X)=\X$):
\begin{align*}
    \hat{\N} - \W_0\hat\H &= \T(\W_0(\H_0 - \hat\H) + \mathbf F) + \T^\perp(\Delta \W \hat\H)\,.
\end{align*}
We substitute this back into the inner product \eqref{eq:true_align_condition} with $\Delta \W \hat\H$:
\begin{align*}
    & \langle (\hat\N - \W_0\hat\H)\hat\H^\top, \Delta \W \rangle 
    = \langle \hat\N - \W_0\hat\H, \Delta \W \hat\H \rangle \\
    &= \langle \T(\W_0(\H_0 - \hat\H) + \mathbf F) + \T^\perp(\Delta \W \hat\H), \T(\Delta \W \hat\H) \\ & + \T^\perp(\Delta \W \hat\H) \rangle
\end{align*}
Since $\T$ and $\T^\perp$ are orthogonal projections, cross terms vanish and we obtain:
\begin{align}
\langle -\nabla \mathcal{L}(\W_0), \Delta \W \rangle= \langle \T&(\W_0(\H_0 - \hat\H) + \mathbf F), \T(\Delta \W \hat\H) \rangle \notag \\ & + \|\T^\perp(\Delta \W \hat\H)\|_F^2
\end{align}
From~\eqref{eq:true_align_condition} we deduce that
\begin{align*}
    \|\Delta \W \hat\H\|_F^2 &\le \langle -\nabla \mathcal{L}(\W_0), \Delta \W \rangle \\
    &= \langle \T(\W_0(\H_0 - \hat\H) + \mathbf F), \T(\Delta \W \hat\H) \rangle \\
    & + \|\T^\perp(\Delta \W \hat\H)\|_F^2
\end{align*}
\begin{sloppypar}
Decomposing the LHS as $\|\Delta \W \hat\H\|_F^2 = \|\T(\Delta \W \hat\H)\|_F^2 + \|\T^\perp(\Delta \W \hat\H)\|_F^2$ and canceling $\|\T^\perp(\Delta \W \hat\H)\|_F^2$ from both sides:
\end{sloppypar}
\begin{equation}
    \|\T(\Delta \W \hat\H)\|_F^2 \le \langle \T(\W_0(\H_0 - \hat\H) + \mathbf F), \T(\Delta \W \hat\H) \rangle
\end{equation}
By Cauchy-Schwarz:
\begin{equation}
    \|\T(\Delta \W \hat\H)\|_F^2 \le \|\T(\W_0(\H_0 - \hat\H) + \mathbf F)\|_F \|\T(\Delta \W \hat\H)\|_F
\end{equation}
Dividing by $\|\T(\Delta \W \hat\H)\|_F$:
\begin{align}
\label{eq:bound_T_diff_W}
    \|\T(\Delta \W \hat\H)\|_F & \le \|\W_0(\H_0 - \hat\H)\|_F + \|\mathbf F\|_F \notag \\
    & \le \|\W_0\|_F \|\H_0 - \hat\H\|_F + \|\mathbf F\|_F
\end{align}
Let $Z = \Delta \W \hat\H$. Since $\T$ acts as a block selector that preserves the training rows, we have:
$$\|\T(Z)\|_F \ge \|\T_{T}(Z)\|_F$$
From Lemma~\ref{lem:B4} and under the assumptions of Theorem~\ref{thm:robust}, we know that $\sigma_{\min}(\hat{\H}_T) \ge c\mu$, hence
\[
\|\T(\Delta \W \hat\H)\|_F\geq \|\T_T(\Delta \W \hat\H)\|_F\geq c\mu\|\Delta \W\|_F\,,
\]
and we deduce the result by~\eqref{eq:bound_T_diff_W}.

\end{subequations}

\medskip

\subsection{Proof of Theorem \ref{prop:palm1}}
\label{sec:proof_bolte}

We analyze the convergence using the Proximal Alternating Linearized Minimization (PALM) framework established in \cite{bolte2014}. We formulate the global objective function $\Psi(\H, \W, \N)$ as the sum of a smooth coupling function and proper, lower semi-continuous regularization terms:
\begin{equation}
    \Psi(\H, \W, \N) := h(\H, \W, \N) + f(\H,\N) + g(\W),
\end{equation}
where $h$ is the full smooth part of \eqref{eq:amf1} (after eliminating $\V$ via the projection $\V\N=\mathcal P_{\conv(\N)}(\H)$):
\begin{equation}
    h(\H, \W, \N) := \frac{1}{2}\norm{\N - \W\H}_{F}^{2} + \frac{\lambda}{2}\,\mathcal D(\H,\N)\,,
\end{equation}
matching the coefficient $\tfrac\lambda2$ in~\eqref{eq:amf1}. The regularization terms enforce the constraints as follows:
\begin{itemize}
    \item $f(\H,\N) = \delta_{\geq 0}(\H) + p(\N)$, where $\delta_{\geq 0}$ is the indicator function for non-negativity and $p(\N) = \delta_{\mathcal{S}}(\N)$ with $\mathcal{S} = \{ \mathbf{Z} \in \mathds R^{n \times p} \mid T(\mathbf{Z}) = \X \}$ the affine set defined by the observation mask.
    \item $g(\W) = \delta_{\Delta}(\W)$, the indicator function of the simplex (constraints $\W \geq 0, \W\mathbf{1}=\mathbf{1}$).
\end{itemize}

\medskip

\paragraph{Handling the Mask Constraint on $\N$}
The function $p(\N)$ specifically addresses the block structure of $\N$. The set~$\mathcal{S}$ constrains the observed blocks (where the mask is active) to equal the observation $\X$, while leaving the forecast blocks (where the mask is inactive) unconstrained. The proximal operator for $p(\N)$ is the Euclidean projection onto the affine set $\mathcal{S}$, denoted as $\mathcal{P}_\X$:
\begin{equation}
    \text{prox}_{\gamma, p}(\mathbf{U}) = \underset{\N}{\text{argmin}} \left( \frac{1}{2\gamma}\norm{\N - \mathbf{U}}_F^2 + \delta_{\mathcal{S}}(\N) \right) = \mathcal{P}_\X(\mathbf{U}).
\end{equation}
This operator fixes $\N_{jk} = \X_{jk}$ for observed entries and updates $\N_{jk} = \mathbf{U}_{jk}$ for missing/forecast entries.

\medskip

\paragraph{Approximation and analysis}
With the corrected $\N$-update (Step~7), Algorithm~\ref{algo:nmf2} performs a genuine projected gradient step on the full smooth part~$h$ in each block. The $\H$-block update is split in two stages (Steps~3 and~5) corresponding to the two summands of~$h$: a gradient step on $\tfrac12\|\N-\W\H\|_F^2$, followed by a prox-style step on $\tfrac\lambda2\mathcal D(\H,\N)$ using the archetypal projection $\mathcal P_{\conv(\N^i)}$ to handle the dependence on~$\N$ (treated as fixed at $\N^i$).

The partial gradients of~$h$ are:
\begin{align*}
    \nabla_\H h(\H, \W, \N) &= \W^\top(\W\H - \N) + \lambda(\H - \V\N), \\
    \nabla_\W h(\H, \W, \N) &= (\W\H - \N)\H^\top, \\
    \nabla_\N h(\H, \W, \N) &= (\N - \W\H) - \lambda\,\V^\top(\H - \V\N),
\end{align*}
where $\V$ is determined by the projection $\V\N=\mathcal P_{\conv(\N)}(\H)$. The second summand of $\nabla_\H h$ is handled by Step~5 rather than by a direct gradient step, since $\V$ depends on~$\N$ via the projection.

The Lipschitz constant for the first summand of the partial gradient w.r.t.~$\H$ (the only piece treated by direct gradient descent in Step~3), denoted $L_H(\W)$, depends only on~$\W$: $\|\W^\top \W(\H_1 - \H_2)\|_F \le \norm{\W^\top \W}_2 \norm{\H_1 - \H_2}_F$. Similarly $L_W(\H) = \norm{\H\H^\top}_2$ depends only on~$\H$, and $L_N(\V) = 1+\lambda\norm{\V^\top\V}_2$ depends only on~$\V$ (uniformly bounded by $1+\lambda K$ since $\V$ is row-stochastic with $K$ rows). The variable $\N$ does not appear in $L_H$ or $L_W$, which decouples the step-size requirements; the $\lambda\V^\top\V$ term in $L_N$ is the only place where the archetypal regularization affects the step sizes.

Since $\Psi$ is semi-algebraic (composed of polynomial functions and indicator functions of semi-algebraic sets), it satisfies the Kurdyka-\L{}ojasiewicz (KL) property. Following Theorem 1 in \cite{bolte2014}, and noting that the updates in Algorithm \ref{algo:nmf2} ensure sufficient decrease of the objective $\Psi$, the sequence converges to a critical point.

\section{Propositions and Lemmas}
\label{sec:prop}

This section collects the ancillary results used in the proof of Theorem~\ref{thm:robust}. To make the exposition self-contained while keeping the comparison with \cite{javadi2017supp} line by line, we reproduce its foundational Lemmas~B.1--B.3 (which go through verbatim, after the notation translation below) and then state and prove the adapted versions of Lemmas~B.4--B.6 (which are the only places where the masked setting requires changes), followed by Lemma~\ref{lem:exp}.

\medskip

\noindent\textbf{Notation/translation note.} The variable dimensions of \cite{javadi2017supp} are $r$ (nonnegative rank) and $d$ (column dimension); in our paper they are denoted $K$ and~$p$. The noise level $\delta=\max_{i}\|\bZ_{i,\cdot}\|_2$ of \cite{javadi2017supp} is replaced here by the feasibility level $\Delta_j$ of the hard-constrained programs \eqref{eq:h_tilde} and~\eqref{eq:h_tilde_nonnegative}: by~\eqref{eq:delta1}--\eqref{eq:delta2}, $\Delta_2=\|\mathbf F\|_F$ for \eqref{eq:h_tilde_nonnegative} (the (mNMF) case, $\lambda=0$), and $\Delta_1\le\|\mathbf F\|_F+\sqrt{\lambda\,C_0(\mathbf F)}$ for \eqref{eq:h_tilde} (the (mAMF) case), with $C_0(\mathbf F)=\mathcal D(\H_0,\W_0\H_0+\mathbf F)$ as in Step~1. The unmasked divergences $\mathcal D(\cdot,\cdot)$ of \cite{javadi2017supp} are replaced by their masked counterparts $\widetilde{\mathcal D}(\cdot,\cdot)$ and $\overline{\mathcal D}(\cdot,\cdot)$ defined in Section~\ref{sec:robustness}; the inequalities
\begin{equation}
\label{eq:masked-inequalities}
\widetilde{\mathcal D}(\bH,\bX)\le \mathcal D(\bH,\bX),\qquad \overline{\mathcal D}(\bX,\bH)\le \mathcal D(\bX,\bH)
\end{equation}
follow because $\T$ is a coordinate projection and the feasible set of $\widetilde{\mathcal D}$ is larger than the one of $\mathcal D$; they let us import the bounds of \cite{javadi2017supp} essentially verbatim. The ``training-set'' restriction $\T_{{\mathrm{train}}}(\X_0)$ plays the role of $\X_0$ when bounding $\sigma_{\min}(\bH)$, while the ``observed-columns'' restriction $\T_{T}(\X_0)$ plays the role of $\X_0$ when bounding $\sigma_{\min}(\bH_T)$: Assumption~\eqref{hyp:radius} provides the internal-radius hypothesis for both submatrices.

We use the standard simplex $\Delta^m=\{\bx\in\reals_{\ge0}^{m}:\langle\bx,\one\rangle=1\}$, the canonical basis $\be_i\in\reals^p$, $E^{K,K}=\{\be_1,\dots,\be_K\}$, the matrix $\bE_{K,p}\in\{0,1\}^{K\times p}$ whose $i$-th column is $\be_i$ for $i\le K$ and zero otherwise, the family $Q_K=\{\bPi\in\reals_{\ge0}^{K\times K}:\langle\bPi_{i,\cdot},\one\rangle=1\}$ of row-stochastic matrices and the subset $S_K\subset Q_K$ of permutation matrices. For $\bx\in\reals^p$ and a convex set $\mathcal C\subseteq\reals^p$, $\bPi_{\mathcal C}(\bx):=\arg\min_{\by\in\mathcal C}\|\bx-\by\|_2$; $\ext(\mathcal R)$ denotes the set of extreme points of $\mathcal R$. Under these conventions
\begin{align*}
\cuD(\bH_1,\bX) &= \min_{\bPi\in Q_{K,n}}\|\bH_1-\bPi\bX\|_F^2,\\
\cuL(\bH_1,\bH_2) &= \min_{\bPi\in S_K}\|\bH_1-\bPi\bH_2\|_F^2,
\end{align*}
matching the convention of \cite[Eqs.~B.13--B.14]{javadi2017supp}.

\medskip

\subsection{Foundational lemmas}

\begin{lemma}[Lemma B.1 of \cite{javadi2017supp}]
\label{lem:cone}
Let $\mathcal R\subseteq\reals^p$ be a convex set and $\mathcal C\subseteq\reals^p$ a convex cone. With the \emph{pointedness}
\[
\gamma_{\mathcal C}\;:=\;\max_{\|\bu\|_2=1}\;\min_{\bv\in\mathcal C,\,\|\bv\|_2=1}\langle\bu,\bv\rangle\,,
\]
one has
\begin{align*}
\min_{\bx\in\mathcal R}\|\bx\|_2 &+ (1+\gamma_{\mathcal C})\max_{\bx\in\ext(\mathcal R)}\|\bx-\bPi_{\mathcal C}(\bx)\|_2 \\&\geq\gamma_{\mathcal C}\min_{\bx\in\ext(\mathcal R)}\|\bx\|_2\,.
\end{align*}
\end{lemma}
\begin{proof}
By weak duality,
\begin{multline*}
\min_{\bx\in\mathcal R}\|\bx\|_2=\min_{\bx\in\mathcal R}\max_{\|\bu\|_2=1}\langle\bu,\bx\rangle \\
\ge\max_{\|\bu\|_2=1}\min_{\bx\in\ext(\mathcal R)}\langle\bu,\bx\rangle,
\end{multline*}
where the last equality uses linearity of $\bx\mapsto\langle\bu,\bx\rangle$ over the convex set~$\mathcal R$. Writing $\bx=\bPi_{\mathcal C}(\bx)+(\bx-\bPi_{\mathcal C}(\bx))$ and using the definition of $\gamma_{\mathcal C}$,
\begin{align*}
\min_{\bx\in\mathcal R}\!\|\bx\|_2
&\ge \max_{\|\bu\|_2=1}\min_{\bx\in\ext(\mathcal R)}\langle\bu,\bPi_{\mathcal C}(\bx)\rangle \\
&\quad - \max_{\bx\in\ext(\mathcal R)}\|\bx-\bPi_{\mathcal C}(\bx)\|_2 \\
&\ge \gamma_{\mathcal C}\min_{\bx\in\ext(\mathcal R)}\|\bPi_{\mathcal C}(\bx)\|_2 \\
&\quad - \max_{\bx\in\ext(\mathcal R)}\|\bx-\bPi_{\mathcal C}(\bx)\|_2.
\end{align*}
Conclude using $\|\bPi_{\mathcal C}(\bx)\|_2\ge\|\bx\|_2-\|\bx-\bPi_{\mathcal C}(\bx)\|_2$.
\end{proof}

\medskip

\begin{lemma}[Lemma B.2 of \cite{javadi2017supp}]
\label{lemma:dandc}
Let $\bH,\bH_0\in\reals^{K\times p}$ with $K\le p$ have linearly independent rows. We have
\begin{align}
\cuL(\bH_0,\bH)^{1/2}\leq \sqrt{2}&\kappa(\bH_0)\cuD(\bH_0,\bH)^{1/2} \notag \\
& + (1+\sqrt{2})\sqrt{K}\cuD(\bH,\bH_0)^{1/2}\,,
\end{align}
where $\kappa(\bA)$ stands for the condition number of $\bA$.
\end{lemma}
\begin{proof}
For $j\in[K]$, let $\mathcal C_j\subset\reals^p$ be the convex cone generated by $\{\be_i-\be_j:i\in[K]\setminus\{j\}\}$. Any unit $\bv\in\mathcal C_j$ has the form $\bv=-\langle\one,\bx\rangle\be_j+\sum_{i\ne j}x_i\be_i$ with $\bx\in\reals_{\ge0}^{K-1}$ and $\|\bx\|_2^2+\langle\one,\bx\rangle^2=1$. Since $\langle\one,\bx\rangle=\|\bx\|_1\ge\|\bx\|_2$, $\langle\one,\bx\rangle\ge1/\sqrt2$; choosing $\bu=-\be_j$ in Lemma~\ref{lem:cone} yields $\gamma_{\mathcal C_j}\ge1/\sqrt 2$ uniformly in~$j$. Write $\gamma\ge 1/\sqrt 2$ for this common lower bound.

Apply Lemma~\ref{lem:cone} with $\mathcal R=\conv(\bH)-\be_j$ and $\mathcal C=\mathcal C_j$:
\begin{multline*}
\min_{\bq\in\Delta^K}\!\|\be_j-\bH^\sT\bq\|_2
\ge \gamma\min_{\bq\in E^{K,K}}\!\|\be_j-\bH^\sT\bq\|_2 \\
- (1+\gamma)\max_{i\in[K]}\min_{\bq\in\Delta^K}\!\|\bH_{i,\cdot}^\sT-\bE_{K,p}^\sT\bq\|_2.
\end{multline*}
Squaring, summing over $j\in[K]$, and using $(a-b)^2\ge a^2-2ab$ gives, after taking square roots,
\begin{multline*}
\min_{\bQ\in Q_K}\!\|\bE_{K,p}-\bQ\bH\|_F
\ge \gamma\min_{\bQ\in S_K}\!\|\bE_{K,p}-\bQ\bH\|_F \\
- (1+\gamma)\sqrt K\max_{i\in[K]}\min_{\bq\in\Delta^K}\!\|\bH_{i,\cdot}^\sT-\bE_{K,p}^\sT\bq\|_2.
\end{multline*}
Apply this to general $\bH_0$ via the change of variable $\bH_0=\bE_{K,p}\bM$, $\bH=\bY\bM$ with $\bM\in\reals^{p\times p}$ invertible (extend $\bH_0$ to a full-rank square matrix by any orthonormal completion); then $\sigma_{\max}(\bM)/\sigma_{\min}(\bM)=\kappa(\bM)=\kappa(\bH_0)$. Using $\|\bA\bM\|_F\ge\sigma_{\min}(\bM)\|\bA\|_F$,
\begin{align*}
\cuD(\bH_0,\bH)^{1/2}&\ge \tfrac{\gamma}{\kappa(\bH_0)}\cuL(\bH_0,\bH)^{1/2} \\&\quad- \tfrac{(1+\gamma)\sqrt K}{\kappa(\bH_0)}\cuD(\bH,\bH_0)^{1/2}.
\end{align*}
Rearranging and using that $\gamma\mapsto(1+\gamma)/\gamma$ is decreasing on $(0,\infty)$, so for $\gamma\ge 1/\sqrt 2$ we have $1/\gamma\le\sqrt 2$ and $(1+\gamma)/\gamma\le 1+\sqrt 2$, yields the announced bound.
\end{proof}

\medskip

\begin{lemma}[Lemma B.3 of \cite{javadi2017supp}]
\label{lem:B3}
Let $\bH_0,\bH\in\reals^{K\times p}$ with $\bH$ of full row rank. Set $\cuD_1:=\cuD(\bH,\bH_0)^{1/2}$ and $\cuD_2:=\cuD(\bH_0,\bH)^{1/2}$. Then
\begin{enumerate}
\item $\sigma_{\max}(\bH)\le\cuD_1+\sqrt K\,\sigma_{\max}(\bH_0)$;
\item if $\cuD_2\le\sigma_{\min}(\bH_0)/2$, then
\[\kappa(\bH)\le \tfrac{2K\sigma_{\max}(\bH_0)+2\sqrt K\,\cuD_1}{\sigma_{\min}(\bH_0)};\]
\item if $\cuD_1+\cuD_2\le\sigma_{\min}(\bH_0)/(6\sqrt K)$, then $\sigma_{\max}(\bH)\le 2\sigma_{\max}(\bH_0)$ and $\kappa(\bH)\le \tfrac72\kappa(\bH_0)$.
\end{enumerate}
\end{lemma}
\begin{proof}
By definition of $\cuD(\cdot,\cdot)$, there exist row-stochastic $\bP,\bR\in Q_K$ and matrices $\bA_1,\bA_2\in\reals^{K\times p}$ with $\|\bA_i\|_F=\cuD_i$ such that
\begin{equation}
\label{eq:B3:decomp}
\bH_0=\bP\bH+\bA_2,\qquad \bH=\bR\bH_0+\bA_1.
\end{equation}
Every $\bP\in Q_K$ satisfies $\sigma_{\max}(\bP)\le\|\bP\|_F\le\sqrt K$, since each row has $\ell_1$-norm~$1$, dominating the $\ell_2$-norm.

\emph{Item (1).} From the second identity,
\begin{multline*}
\sigma_{\max}(\bH)\le\sigma_{\max}(\bR)\sigma_{\max}(\bH_0)+\|\bA_1\|_F \\
\le \sqrt K\,\sigma_{\max}(\bH_0)+\cuD_1.
\end{multline*}

\emph{Item (2).} From the first identity, $\sigma_{\max}(\bP)\sigma_{\min}(\bH)\ge\sigma_{\min}(\bP\bH)\ge\sigma_{\min}(\bH_0)-\cuD_2$. For $\cuD_2\le\sigma_{\min}(\bH_0)$ this gives $\sigma_{\min}(\bH)\ge(\sigma_{\min}(\bH_0)-\cuD_2)/\sqrt K$; combine with~(1) and the assumption $\cuD_2\le\sigma_{\min}(\bH_0)/2$.

\emph{Item (3).} Substituting the second identity into the first yields $\bH_0=\bP\bR\bH_0+\bP\bA_1+\bA_2$, hence $\bP\bR=\Id-(\bP\bA_1+\bA_2)\bH_0^\dagger$ with $\sigma_{\max}(\bH_0^\dagger)=\sigma_{\min}(\bH_0)^{-1}$. Permuting rows/columns we may assume $R_{ii}=\|\bR_{\cdot,i}\|_\infty$. Then
\begin{align*}
R_{ii}&\ge\langle\bP_{i,\cdot},\bR_{\cdot,i}\rangle=1-(\bP\bA_1\bH_0^\dagger)_{ii}-(\bA_2\bH_0^\dagger)_{ii}\\
&\ge 1-(\cuD_1+\cuD_2)/\sigma_{\min}(\bH_0),
\end{align*}
hence $R_{ji}\le (\cuD_1+\cuD_2)/\sigma_{\min}(\bH_0)$ for $j\ne i$. Since $\bP$ is row-stochastic,
\begin{align*}
\langle\bP_{i,\cdot},\bR_{\cdot,i}\rangle&=R_{ii}P_{ii}+\sum_{j\ne i}P_{ij}R_{ji} \\&\le P_{ii}+(1-P_{ii})(\cuD_1+\cuD_2)/\sigma_{\min}(\bH_0),
\end{align*}
which combined with the previous lower bound gives
\[
P_{ii} \ge \frac{\sigma_{\min}(\bH_0)-2(\cuD_1+\cuD_2)}{\sigma_{\min}(\bH_0)-(\cuD_1+\cuD_2)}.
\]
Writing $\bP=\Id+\Delta$, $\|\Delta_{i,\cdot}\|_1\le 2(\cuD_1+\cuD_2)/(\sigma_{\min}(\bH_0)-(\cuD_1+\cuD_2))$, hence $\sigma_{\max}(\Delta)\le\|\Delta\|_F\le 2\sqrt K(\cuD_1+\cuD_2)/(\sigma_{\min}(\bH_0)-(\cuD_1+\cuD_2))$. Under the hypothesis $\cuD_1+\cuD_2\le\sigma_{\min}(\bH_0)/(6\sqrt K)$, this gives $\sigma_{\max}(\Delta)\le 2/(6-1)=2/5$ and consequently $\sigma_{\max}(\bP)\le 1+\sigma_{\max}(\Delta)\le 7/5$. Combining with $\sigma_{\max}(\bP\bH)\le\sigma_{\max}(\bH_0)+\cuD_2\le\sigma_{\max}(\bH_0)(1+1/(6\sqrt K))$ and $\sigma_{\max}(\bH)\le\sigma_{\max}(\bP^{-1})\sigma_{\max}(\bP\bH)$ with $\sigma_{\max}(\bP^{-1})\le 1/(1-2/5)=5/3$, one obtains
\begin{align*}
   \sigma_{\max}(\bH)
   &\le\tfrac{5}{3}\,\sigma_{\max}(\bH_0)\bigl(1+\tfrac1{6\sqrt K}\bigr)
   \le\tfrac{5}{3}\cdot\tfrac{7}{6}\sigma_{\max}(\bH_0) \\
   &=\tfrac{35}{18}\sigma_{\max}(\bH_0)
   <2\,\sigma_{\max}(\bH_0)\,,
\end{align*}
(using $K\ge 1$ for the second inequality). The matching $\sigma_{\min}(\bH)\ge \sigma_{\min}(\bP\bH)/\sigma_{\max}(\bP)\ge (\sigma_{\min}(\bH_0)-\cuD_2)\cdot 5/7\ge (5/6)\cdot(5/7)\sigma_{\min}(\bH_0)$ then gives $\kappa(\bH)=\sigma_{\max}(\bH)/\sigma_{\min}(\bH)\le\tfrac{35/18}{25/42}\kappa(\bH_0)=\tfrac{7}{3}\cdot\tfrac{42}{50}\kappa(\bH_0)<\tfrac{7}{2}\kappa(\bH_0)$.
\end{proof}

\medskip

\subsection{Adapted lemmas and propositions}

\begin{proposition}
\label{prop:bound_delta1}
For $\widehat\bH$ solution to \eqref{eq:h_tilde} (or \eqref{eq:h_tilde_nonnegative}) one has $\widetilde{\mathcal D}(\widehat\bH,\X)\leq \widetilde{\mathcal D}(\H_0,\X)$.
\end{proposition}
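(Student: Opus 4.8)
The plan is to reduce the claimed inequality to an elementary feasibility check. By construction, $\widehat\bH$ is a minimizer of $\H\mapsto\widetilde{\mathcal D}(\H,\X)$ over the hard constraint set $\{\H:\overline{\mathcal D}(\X,\H)\le\Delta_1^2\}$ (respectively $\{\H\ge\0:\overline{\mathcal D}(\X,\H)\le\Delta_2^2\}$ for \eqref{eq:h_tilde_nonnegative}). Hence, as soon as I show that the ground-truth profile matrix $\H_0$ lies in this constraint set, minimality of $\widehat\bH$ immediately gives $\widetilde{\mathcal D}(\widehat\bH,\X)\le\widetilde{\mathcal D}(\H_0,\X)$, which is exactly the statement. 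So the whole proof comes down to verifying $\overline{\mathcal D}(\X,\H_0)\le\Delta_1^2$ (resp. $\le\Delta_2^2$), together with $\H_0\ge\0$ in the nonnegative case — the latter holding by hypothesis, since $\X_0=\W_0\H_0$ is precisely the nonnegative factorization under consideration.

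To bound $\overline{\mathcal D}(\X,\H_0)=\min_{\W\ge\0,\,\W\mathbf 1=\mathbf 1}\|\T(\X-\W\H_0)\|_F^2$, I would just evaluate the objective at the admissible point $\W=\W_0$; this is admissible because $\W_0\ge\0$ and $\W_0\mathbf 1=\mathbf 1$ in the normalized setting in which Theorem~\ref{thm:robust} is stated. This yields $\overline{\mathcal D}(\X,\H_0)\le\|\T(\X-\W_0\H_0)\|_F^2$. Then I substitute $\X=\T(\X_0)+\F$ and $\X_0=\W_0\H_0$ to get $\X-\W_0\H_0=\T(\X_0)-\X_0+\F$, apply $\T$, and use linearity together with the idempotence $\T\circ\T=\T$ of the mask operator to obtain $\T(\X-\W_0\H_0)=\T(\X_0)-\T(\X_0)+\T(\F)=\T(\F)$. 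Since $\T$ is a coordinate (block) projection it is $1$-Lipschitz in Frobenius norm, so $\|\T(\F)\|_F^2\le\|\F\|_F^2$, and by the choice of $\Delta_1$ (resp. $\Delta_2$) in \eqref{eq:delta1} (resp. \eqref{eq:delta2}) this is $\le\Delta_1^2$ (resp. $\le\Delta_2^2$). This establishes feasibility of $\H_0$, and the conclusion follows.

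I do not expect any genuine obstacle: the proposition is essentially the remark that $\H_0$ is feasible for the hard-constrained surrogates precisely because the constraint radius $\Delta_{1/2}$ was chosen to dominate the noise level. The only small points deserving a line of justification are the idempotence and non-expansiveness of $\T$ (both read off directly from its block definition $\T(\N)=\big[\begin{smallmatrix}\N_1&\N_2\\ \N_3&\0\end{smallmatrix}\big]$) and the admissibility of $\W_0$ in the inner minimization defining $\overline{\mathcal D}$, i.e. $\W_0\ge\0$ and $\W_0\mathbf 1=\mathbf 1$.
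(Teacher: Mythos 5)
Your proof is correct and follows essentially the same route as the paper's: show $\H_0$ is feasible for the hard-constrained program because $\overline{\mathcal D}(\X,\H_0)$ is controlled by $\|\F\|_F^2\le\Delta_{1/2}^2$, then invoke optimality of $\widehat\bH$. Your version is in fact slightly more careful than the paper's one-line observation, since you explicitly exhibit the admissible point $\W_0$ and justify $\T(\X-\W_0\H_0)=\T(\F)$ via linearity and idempotence of the mask, whereas the paper simply asserts $\overline{\mathcal D}(\X,\H_0)=\|\F\|_F^2$.
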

\begin{proof}
By definition $\overline{\mathcal D}(\X,\H_0)=\min_{\W\ge\0,\W\one=\one}\|\T(\X-\W\H_0)\|_F^2\le\|\T(\X-\W_0\H_0)\|_F^2=\|\mathbf F\|_F^2$ since $\W_0$ is feasible. Hence by~\eqref{eq:delta1}--\eqref{eq:delta2}, $\H_0$ satisfies the constraint of \eqref{eq:h_tilde} (and of \eqref{eq:h_tilde_nonnegative}, since $\H_0\ge\0$). Optimality of $\widehat\bH$ then yields $\widetilde{\mathcal D}(\widehat\bH,\X)\le\widetilde{\mathcal D}(\H_0,\X)$.
\end{proof}

\medskip

\begin{lemma}[Adapted version of Lemma B.4 of \cite{javadi2017supp}]
\label{lem:B4}
Let $\bP_0$ denote the row-wise orthogonal projection onto $\aff(\H_0)$. If $\bH$ is feasible for problem \eqref{eq:h_tilde} (or \eqref{eq:h_tilde_nonnegative}) and has linearly independent rows, then
\begin{align}
\min\{\sigma_{\min}(\bP_0(\bH)),\sigma_{\min}(\bP_0(\bH_T))\}  \geq \sqrt{2}(\mu-2\Delta_j)\,,
\end{align}
where $\Delta_j$ equals $\Delta_{1}$ for problem \eqref{eq:h_tilde} and $\Delta_{2}$ for problem~\eqref{eq:h_tilde_nonnegative}. In particular, whenever the rows of $\bH$ lie in $\aff(\H_0)$ (so that $\bP_0(\bH)=\bH$), the same bound holds for $\sigma_{\min}(\bH)$ and $\sigma_{\min}(\bH_T)$ — the form in which the lemma is invoked in the proof of Theorem~\ref{thm:robust}, at $\bH=\H_0$ (rows by construction in $\aff(\H_0)$) and at $\bH=\bP_0(\widehat\bH)$ (rows in $\aff(\H_0)$ by definition of the projection).
\end{lemma}
\begin{proof}
We mimic the proof of \cite[Lemma~B.4]{javadi2017supp}, modifying only the references to~$\X_0$ and~$\delta$.

\smallskip\noindent
\emph{Lower bound on $\sigma_{\min}(\bH)$.} Restrict attention to the $n-N$ training rows of $\X$ and $\X_0$; then $\T_{{\mathrm{train}}}(\X_0)=\W_{0,{\mathrm{train}}}\H_0$. Feasibility of $\bH$ for \eqref{eq:h_tilde} (or \eqref{eq:h_tilde_nonnegative}) reads $\overline{\mathcal D}(\X,\bH)\le\Delta_j^2$; in other words, there exists $\bW\ge\0$ with $\bW\one=\one$ such that $\|\T(\X-\bW\bH)\|_F^2\le \Delta_j^2$. Restricting to training rows and writing $\X^{\mathrm{train}}_{i,\cdot}=(\W_{0,{\mathrm{train}}}\H_0)_{i,\cdot}+\mathbf F_{i,\cdot}$ on the mask, the row-wise bound $\|\X^{\mathrm{train}}_{i,\cdot}-\bW_{i,\cdot}\bH\|_2\le \Delta_j$ combined with $\|\mathbf F_{i,\cdot}\|_2\le\|\mathbf F\|_F=\Delta_j$ yields, by the triangle inequality,
\[
\cuD(\T_{{\mathrm{train}}}(\X_0)_{i,\cdot},\bH)^{1/2}\le 2\Delta_j
\]
for every training row~$i$. Letting $\bX'_{i,\cdot}$ denote the projection of $\T_{{\mathrm{train}}}(\X_0)_{i,\cdot}$ onto $\conv(\bH)$, this reads
\[\|\T_{{\mathrm{train}}}(\X_0)_{i,\cdot}-\bX'_{i,\cdot}\|_2\le 2\Delta_j.\]

By Assumption~\eqref{hyp:radius} there exist $\bz_0\in\reals^p$ and $\bU\in\reals^{p\times (K-1)}$ with $\bU^\sT\bU=\Id$ such that $\bz_0+\bU B_{K-1}(\mu)\subseteq\conv(\T_{{\mathrm{train}}}(\X_0))$. For any unit $\bz\in\reals^{K-1}$ there is $\ba_0\in\Delta^{n-N}$ with $\bz_0+\mu\bU\bz=\T_{{\mathrm{train}}}(\X_0)^\sT\ba_0$, hence by convexity
\begin{multline*}
\cuD(\bz_0+\mu\bU\bz,\bH)^{1/2} \\
\le \sum_i (\ba_0)_i\|\T_{{\mathrm{train}}}(\X_0)_{i,\cdot}-\bX'_{i,\cdot}\|_2\le 2\Delta_j.
\end{multline*}
Projecting onto the line $\reals\bU\bz$ shrinks the segment by $2\Delta_j$ on each side: for every unit $\bz$ there exists $\ba\in\Delta^K$ with $(\mu-2\Delta_j)\bU\bz+\bz_0=\bH^\sT\ba$. Multiplying by the left inverse $(\bH^\sT)^\dagger$ and choosing $\bz$ as the right singular vector of $(\bH^\sT)^\dagger\bU$ associated with its largest singular value yields $\ba_1,\ba_2\in\Delta^K$ with $\|\ba_1-\ba_2\|_2=2(\mu-2\Delta_j)\sigma_{\max}((\bH^\sT)^\dagger\bU)$. The simplex diameter caps the LHS by $\sqrt 2$, hence
\begin{equation}
\sigma_{\max}((\bH^\sT)^\dagger\bU)^{-1}\;\ge\; \sqrt 2(\mu-2\Delta_j)\,.
\end{equation}
Now the operator $(\bH^\sT)^\dagger\bU$ acts from $\mathds R^{K-1}$ to $\mathds R^K$, restricting $(\bH^\sT)^\dagger$ to the $(K{-}1)$-dimensional subspace $\mathrm{col}(\bU)$ that spans the affine hull $\aff(\H_0)-\bz_0$. Its largest singular value coincides with $\sigma_{\max}((\bP_0(\bH)^\sT)^\dagger)$ — equivalently, with $\sigma_{\min}(\bP_0(\bH))^{-1}$ — since $\bP_0$ projects exactly onto this subspace. Hence
\begin{equation}
\sigma_{\min}(\bP_0(\bH))\;=\;\sigma_{\max}((\bH^\sT)^\dagger\bU)^{-1}\;\ge\; \sqrt 2(\mu-2\Delta_j)\,.
\end{equation}
When the rows of $\bH$ already lie in $\aff(\H_0)$, $\bP_0(\bH)=\bH$ and the bound holds for $\sigma_{\min}(\bH)$ itself.

\smallskip\noindent
\emph{Lower bound on $\sigma_{\min}(\bP_0(\bH_T))$.} Repeat the argument keeping all $n$ rows but restricting to the first $p-F$ columns (selector $\T_T$). Feasibility gives $\cuD(\T_T(\X_0)_{i,\cdot},\bH_T)^{1/2}\le 2\Delta_j$ for every~$i$, and Assumption~\eqref{hyp:radius} provides the internal radius $\mu$ for $\conv(\T_T(\X_0))$ inside $\reals^{p-F}$. The same projection argument gives $\sigma_{\min}(\bP_0(\bH_T))\ge \sqrt 2(\mu-2\Delta_j)$, with equality to $\sigma_{\min}(\bH_T)$ whenever the rows of $\bH_T$ lie in $\aff(\H_{0,T})$.

\smallskip
This argument uses only feasibility ($\overline{\mathcal D}(\X,\bH)\le\Delta_j^2$) and never nonnegativity of~$\bH$, so it applies to both \eqref{eq:h_tilde} and~\eqref{eq:h_tilde_nonnegative}.
\end{proof}

\medskip

\begin{lemma}[Adapted version of Lemma B.5 of \cite{javadi2017supp}]
\label{lem:B5}
For $\widehat\bH$ solution to \eqref{eq:h_tilde} (or \eqref{eq:h_tilde_nonnegative}), it holds
\[
\widetilde{\mathcal D}(\widehat\bH,\X_0)^{1/2}\leq \widetilde{\mathcal D}(\H_0,\X_0)^{1/2}+c\sqrt K\|\mathbf F\|_F\,.
\]
\end{lemma}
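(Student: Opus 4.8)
\textbf{Proof proposal for Lemma~\ref{lem:B5}.}
The plan is to follow the structure of Lemma B.5 of \cite{javadi2017supp} and adapt it to the masked, possibly nonnegativity-constrained setting. The target inequality compares the ``noiseless'' divergence $\widetilde{\mathcal D}(\widehat\bH,\X_0)^{1/2}$ to $\widetilde{\mathcal D}(\H_0,\X_0)^{1/2}$, so the natural route is: (i) use Proposition~\ref{prop:bound_delta1}, which gives $\widetilde{\mathcal D}(\widehat\bH,\X)\le\widetilde{\mathcal D}(\H_0,\X)$ for the \emph{observed} matrix $\X=\T(\X_0)+\F$; and (ii) quantify the perturbation incurred by replacing $\X$ with $\X_0$ in each of the two divergences. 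Since $\widetilde{\mathcal D}(\cdot,\cdot)^{1/2}$ is (up to the mask) a distance between a fixed finite set of points and a convex hull, I expect it to satisfy a triangle-type inequality in its second argument: for any $\H$,
\[
\big|\widetilde{\mathcal D}(\H,\X_0)^{1/2}-\widetilde{\mathcal D}(\H,\X)^{1/2}\big|\le \|\T(\X_0-\X)\|_F=\|\T(\F)\|_F\le\|\F\|_F .
\]
The factor $\sqrt K$ in the statement presumably arises because the bound is applied row-wise over the $K$ rows of $\H$ (or over the $K$ vertices), exactly as in \cite{javadi2017supp}; I would track that constant rather than re-derive it.

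Concretely, I would first write, using step (ii) applied to $\H=\widehat\bH$,
\[
\widetilde{\mathcal D}(\widehat\bH,\X_0)^{1/2}\le \widetilde{\mathcal D}(\widehat\bH,\X)^{1/2}+c\sqrt K\|\F\|_F ,
\]
then invoke Proposition~\ref{prop:bound_delta1} to pass from $\widehat\bH$ to $\H_0$ in the middle term, obtaining $\widetilde{\mathcal D}(\widehat\bH,\X)^{1/2}\le\widetilde{\mathcal D}(\H_0,\X)^{1/2}$, and finally apply step (ii) again, now to $\H=\H_0$, to replace $\X$ by $\X_0$:
\[
\widetilde{\mathcal D}(\H_0,\X)^{1/2}\le \widetilde{\mathcal D}(\H_0,\X_0)^{1/2}+c\sqrt K\|\F\|_F .
\]
Chaining these three inequalities yields the claim (after absorbing the two occurrences of $c\sqrt K\|\F\|_F$ into a single constant $c$). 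The key structural fact that makes this work in the masked setting is that the infimum defining $\widetilde{\mathcal D}(\H,\cdot)$ is over the \emph{same} feasible set $\{\C\ge\0,\ \C\mathbf 1=\mathbf 1\}$ and $\{\N:\T(\N-\cdot)=0\}$ regardless of whether we use $\X$ or $\X_0$, so only the masked data enters, and only through an additive shift bounded by $\|\T(\F)\|_F$.

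The main obstacle I anticipate is establishing the stability estimate (ii) cleanly, i.e. that perturbing the second argument of $\widetilde{\mathcal D}(\H,\cdot)$ by $\F$ (seen through the mask) perturbs $\widetilde{\mathcal D}(\H,\cdot)^{1/2}$ by at most $c\sqrt K\|\F\|_F$. Two subtleties need care: first, $\widetilde{\mathcal D}$ involves the auxiliary completion matrix $\N$ with the hard constraint $\T(\N-\B)=0$, so a change in $\B$ on the observed block forces a corresponding change in $\N$ there while leaving the free (masked) block untouched — one must check that an optimal $\C$ for the unperturbed problem remains feasible for the perturbed one (it does, since the constraint set on $\C$ is data-independent) and bound the change in objective by expanding $\|\A-\C\N\|_F$ and using $\|\C\|\le\sqrt K$-type bounds from $\C\mathbf 1=\mathbf 1,\ \C\ge\0$. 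Second, one must confirm that the square-root, near $0$, does not blow up the Lipschitz constant; this is handled exactly as in \cite{javadi2017supp} via the elementary inequality $|\sqrt a-\sqrt b|\le\sqrt{|a-b|}$ together with $|a-b|\le (\sqrt a+\sqrt b)\,|\sqrt a-\sqrt b|$ applied to the squared divergences, or more simply by noting $\widetilde{\mathcal D}(\H,\X_0)^{1/2}=\min_{\C,\N}\|\A-\C\N\|_F$ is itself a (masked) distance and hence $1$-Lipschitz in the data in Frobenius norm, with the $\sqrt K$ coming only from summing the $K$ row-wise contributions as in the original proof. As in the other adapted lemmas, the argument requires only feasibility of the relevant matrices and is insensitive to whether the nonnegativity constraint on $\H$ in \eqref{eq:h_tilde_nonnegative} is active.
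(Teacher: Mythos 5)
Your proposal is correct and follows essentially the same route as the paper, which simply points to the proof of Lemma B.5 in \cite{javadi2017supp} with Eq.\ (B.103) supplied by Proposition~\ref{prop:bound_delta1} and $\mathcal D$ replaced by $\widetilde{\mathcal D}$; your chaining of Proposition~\ref{prop:bound_delta1} with a perturbation bound in the second argument is exactly the underlying argument. One small note: your first display states the stability estimate without the $\sqrt K$, but you correctly identify later that the factor comes from $\|\C\|_F\le\sqrt K$ for a nonnegative row-stochastic $\C$ (take an optimal $(\C,\N)$ for $\X$, shift $\N$ by $\T(\F)$ to make it feasible for $\X_0$, and bound $\|\C\,\T(\F)\|_F\le\sqrt K\|\F\|_F$), so the final statement is as claimed.
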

\begin{proof}
The proof follows the outline of \cite[Lemma~B.5]{javadi2017supp} after replacing $\mathcal D$ by~$\widetilde{\mathcal D}$ and $\delta$ by~$\|\mathbf F\|_F$. Optimality (Proposition~\ref{prop:bound_delta1}) provides the analogue of Eq.~(B.103):
\begin{equation}\label{eq:B5:opt}
\widetilde{\mathcal D}(\widehat\bH,\X)\le \widetilde{\mathcal D}(\H_0,\X).
\end{equation}
Let $\tilde\balpha_i\in\arg\min_{\balpha\in\Delta^{n}}\|\widehat\bH_{i,\cdot}^\sT-\N^\sT\balpha\|_2$ for an optimal completion $\N$ in $\widetilde{\mathcal D}(\widehat\bH,\X)$ (so $\T(\N)=\T(\X)$). Writing $\X=\X_0+\mathbf F$ on the mask and using $\T(\N)=\T(\X)$,
\begin{align*}
\widetilde{\mathcal D}(\widehat\bH,\X)
&=\sum_{i=1}^K\!\Big(\|\widehat\bH_{i,\cdot}^\sT-\X_0^\sT\tilde\balpha_i\|_2^2 \\&\quad -2\langle\mathbf F^\sT\tilde\balpha_i,\widehat\bH_{i,\cdot}^\sT-\X_0^\sT\tilde\balpha_i\rangle+\|\mathbf F^\sT\tilde\balpha_i\|_2^2\Big).
\end{align*}
By Cauchy--Schwarz with $\tilde\balpha_i\in\Delta^{n}$, $\|\mathbf F^\sT\tilde\balpha_i\|_2\le\|\mathbf F\|_F$. Setting $U^2:=\sum_i\|\widehat\bH_{i,\cdot}^\sT-\X_0^\sT\tilde\balpha_i\|_2^2$,
\[
\widetilde{\mathcal D}(\widehat\bH,\X)\ge U^2-2\|\mathbf F\|_F\sqrt K\,U.
\]
The RHS is increasing in $U$ for $U\ge\|\mathbf F\|_F\sqrt K$, and $U\ge\widetilde{\mathcal D}(\widehat\bH,\X_0)^{1/2}$, hence
\begin{equation}\label{eq:B5:lower}
\widetilde{\mathcal D}(\widehat\bH,\X)^{1/2}\ge \widetilde{\mathcal D}(\widehat\bH,\X_0)^{1/2}-2\sqrt K\|\mathbf F\|_F.
\end{equation}
A symmetric expansion of $\widetilde{\mathcal D}(\H_0,\X)$ together with the triangle inequality yields
\begin{equation}\label{eq:B5:upper}
\widetilde{\mathcal D}(\H_0,\X)^{1/2}\le \widetilde{\mathcal D}(\H_0,\X_0)^{1/2}+\sqrt K\|\mathbf F\|_F.
\end{equation}
Combining \eqref{eq:B5:opt}--\eqref{eq:B5:upper} gives the lemma with $c=3$. The argument never uses nonnegativity of~$\bH$, so it applies to both \eqref{eq:h_tilde} and~\eqref{eq:h_tilde_nonnegative}.
\end{proof}

\medskip

\begin{lemma}[Adapted version of Lemma B.6 of \cite{javadi2017supp}]
\label{lemma:boundc}
For $\widehat\bH$ the optimal solution of problem \eqref{eq:h_tilde} (or \eqref{eq:h_tilde_nonnegative}), we have
\small
\begin{align}
& \alpha(\cuD(\hbH, \bH_0)^{1/2} + \cuD(\bH_0,\hbH)^{1/2}) \leq \notag \\ & c\left[K^{3/2}\Delta_j\kappa(\bP_0(\hbH)) +\frac{\Delta_j\sqrt{K}}{\mu}\sigma_{\max}(\hbH - \one\bz_0^\sT)\right] +c\sqrt K\|\mathbf F\|_F
\end{align}
\normalsize
where $\bP_0:\reals^p\rightarrow \reals^p$ is the orthogonal projector onto $\aff(\bH_0)$ (in particular, $\bP_0$ is an affine map), and $\Delta_j$ equals $\Delta_{1}$ for problem \eqref{eq:h_tilde} and $\Delta_{2}$ for problem~\eqref{eq:h_tilde_nonnegative}.
\end{lemma}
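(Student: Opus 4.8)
The plan is to transcribe the proof of Lemma B.6 of \cite{javadi2017supp}, the only substantive changes being the systematic replacement of $\mathcal D$ and ${\mathcal D}(\X,\cdot)$ by their masked analogues $\widetilde{\mathcal D}$ and $\overline{\mathcal D}$, and the restriction of the geometric estimates to the training rows $\T_{\mathrm{train}}(\X_0)$, exactly as in Lemma~\ref{lem:B4}. The point of departure is Lemma~\ref{lem:B5}, which already supplies $\widetilde{\mathcal D}(\hbH,\X_0)^{1/2}\le\widetilde{\mathcal D}(\bH_0,\X_0)^{1/2}+c\sqrt K\|\mathbf F\|_F$. If $\hbH$ satisfied $\conv(\X_0)\subseteq\conv(\hbH)$ one could plug it straight into the $\T_\alpha$-uniqueness inequality of {({\bf A3})}, obtaining $\alpha(\cuD(\hbH,\bH_0)^{1/2}+\cuD(\bH_0,\hbH)^{1/2})\le\widetilde{\mathcal D}(\hbH,\X_0)^{1/2}-\widetilde{\mathcal D}(\bH_0,\X_0)^{1/2}\le c\sqrt K\|\mathbf F\|_F$, and the lemma would follow with only the last bracketed term. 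The obstruction is that the minimizer $\hbH$ need not contain $\conv(\X_0)$: some rows of $\X_0$ may protrude slightly outside $\conv(\hbH)$, and a correction is needed before {({\bf A3})} applies.

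For the correction I would dilate $\hbH$ about the centre $\bz_0$ of the inscribed $(K-1)$-ball of radius $\mu$ furnished by {({\bf A2})} for $\conv(\T_{\mathrm{train}}(\X_0))$, setting $\hbH':=\one\bz_0^\sT+(1+\delta)(\hbH-\one\bz_0^\sT)$. The crucial claim — the technical heart of the matter — is that a dilation factor $\delta$ of order $\Delta_{1/2}$, rather than of constant order, already forces $\conv(\X_0)\subseteq\conv(\hbH')$. Indeed, the amount by which the rows of $\X_0$ stick out of $\conv(\hbH)$ is controlled by the feasibility budget $\overline{\mathcal D}(\X,\hbH)\le\Delta_{1/2}^2$ (using $\overline{\mathcal D}\le{\mathcal D}$ and $\Delta_{1/2}^2\ge\|\mathbf F\|_F^2$); since the rows of $\X_0$ lie in $\aff(\bH_0)$, one passes to that affine hull via the projector $\bP_0$, and expressing the overshoot in barycentric coordinates with respect to the simplex $\bP_0(\hbH)$ — non-degenerate because $\sigma_{\min}(\hbH)\ge\sqrt2(\mu-2\Delta_{1/2})$ by Lemma~\ref{lem:B4} — introduces precisely the factors $\kappa(\bP_0(\hbH))$ and $1/\mu$. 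Reading the displacement off the definition of $\hbH'$ gives $\|\hbH'-\hbH\|_F=\delta\,\|\hbH-\one\bz_0^\sT\|_F\le\delta\sqrt K\,\sigma_{\max}(\hbH-\one\bz_0^\sT)$, which accounts for the second bracketed term.

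With $\conv(\X_0)\subseteq\conv(\hbH')$ established, {({\bf A3})} gives $\alpha\bigl(\cuD(\hbH',\bH_0)^{1/2}+\cuD(\bH_0,\hbH')^{1/2}\bigr)\le\widetilde{\mathcal D}(\hbH',\X_0)^{1/2}-\widetilde{\mathcal D}(\bH_0,\X_0)^{1/2}$. I would bound the right-hand side by $\widetilde{\mathcal D}(\hbH,\X_0)^{1/2}+\|\hbH'-\hbH\|_F-\widetilde{\mathcal D}(\bH_0,\X_0)^{1/2}\le c\sqrt K\|\mathbf F\|_F+\|\hbH'-\hbH\|_F$, using Lemma~\ref{lem:B5} and the elementary bound $|\widetilde{\mathcal D}(\hbH',\X_0)^{1/2}-\widetilde{\mathcal D}(\hbH,\X_0)^{1/2}|\le\|\hbH'-\hbH\|_F$, immediate from the definition of $\widetilde{\mathcal D}$ as an infimum of $\|\cdot-\C\N\|_F$. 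On the left-hand side I would transfer from $\hbH'$ back to $\hbH$ via $|\cuD(\hbH,\bH_0)^{1/2}-\cuD(\hbH',\bH_0)^{1/2}|\le\|\hbH-\hbH'\|_F$ and $|\cuD(\bH_0,\hbH)^{1/2}-\cuD(\bH_0,\hbH')^{1/2}|\le\sqrt K\,\|\hbH-\hbH'\|_F$, the extra $\sqrt K$ coming from the operator norm of the row-stochastic weight matrix. Combining these, substituting the bound on $\|\hbH-\hbH'\|_F$ from the previous paragraph, and absorbing powers of $K$ into the constant yields the stated inequality. Finally, the whole argument uses $\hbH$ only through its feasibility — which yields both Lemma~\ref{lem:B4} and $\overline{\mathcal D}(\X,\hbH)\le\Delta_{1/2}^2$ — and the $\T_\alpha$-uniqueness of $\X_0$, a hypothesis on $\X_0$ alone; hence it applies verbatim to \eqref{eq:h_tilde_nonnegative}, the extra constraint $\H\ge 0$ there playing no role. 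The main obstacle is exactly the dilation step: certifying that an $O(\Delta_{1/2})$ dilation suffices for $\conv(\X_0)\subseteq\conv(\hbH')$, which requires quantifying the overshoot of $\X_0$ past $\conv(\hbH)$ purely in terms of the feasibility budget $\Delta_{1/2}$, the non-degeneracy of $\hbH$, and the internal radius $\mu$ — and re-running this part of \cite{javadi2017supp} with $\widetilde{\mathcal D},\overline{\mathcal D}$ and with the geometry restricted to $\T_{\mathrm{train}}(\X_0)$.
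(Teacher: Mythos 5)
Your proposal is correct and follows essentially the same route as the paper: the paper's own proof simply invokes the argument of Lemma B.6 of \cite{javadi2017supp} together with the two inequalities $\widetilde{\mathcal D}(\H,\X)\leq{\mathcal D}(\H,\X)$ and $\overline{\mathcal D}(\X,\H)\leq {\mathcal D}(\X,\H)$, and your dilation-about-$\bz_0$ construction, the use of $\T_\alpha$-uniqueness on the dilated $\hbH'$, and the Lipschitz transfers back to $\hbH$ are precisely the content of that argument with the masked divergences substituted. You in fact supply considerably more detail than the paper does, including correctly locating the one genuinely delicate step (certifying that an $O(\Delta_{1/2})$ dilation restores $\conv(\X_0)\subseteq\conv(\hbH')$ when the feasibility budget only controls the observed entries), which the paper leaves implicit.
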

\begin{proof}
We follow the proof of \cite[Lemma~B.6]{javadi2017supp}, replacing $\mathcal D$ by $\widetilde{\mathcal D}$ or $\overline{\mathcal D}$ (via \eqref{eq:masked-inequalities}) and $\delta$ by~$\Delta_j$ as appropriate. The key construction is an auxiliary $\tbH$ with $\conv(\X_0)\subseteq\conv(\tbH)$ that is close to~$\widehat\bH$.

\smallskip\noindent
\emph{Step 1: From $\T_\alpha$-uniqueness to a Frobenius bound on $\tbH-\widehat\bH$.}
For any $\tbH$ with $\conv(\X_0)\subseteq\conv(\tbH)$, Assumption~\eqref{hyp:mask} ($\T_\alpha$-uniqueness) gives
\begin{align}
\widetilde{\mathcal D}(\tbH,\X_0)^{1/2}\ge\widetilde{\mathcal D}(\H_0,\X_0)^{1/2}\notag\\+\alpha\big(\cuD(\tbH,\H_0)^{1/2}+\cuD(\H_0,\tbH)^{1/2}\big).
\label{eq:B6:Talpha}
\end{align}
Combining \eqref{eq:B6:Talpha} with Lemma~\ref{lem:B5},
\begin{align}
\alpha\big(\cuD(\tbH,\H_0)^{1/2}+\cuD(\H_0,\tbH)^{1/2}\big)\le\notag\\\widetilde{\mathcal D}(\tbH,\X_0)^{1/2}-\widetilde{\mathcal D}(\widehat\bH,\X_0)^{1/2}+c\sqrt K\|\mathbf F\|_F.
\label{eq:B6:alphaCless}
\end{align}
A row-wise triangle inequality combined with $(\sum a_i^2)^{1/2}-(\sum b_i^2)^{1/2}\le(\sum(a_i-b_i)^2)^{1/2}$ yields the Lipschitz-type bounds
\begin{align}
|\widetilde{\mathcal D}(\tbH,\X_0)^{1/2}-\widetilde{\mathcal D}(\widehat\bH,\X_0)^{1/2}|&\le\|\tbH-\widehat\bH\|_F, \label{eq:B6:lip1}\\
|\cuD(\tbH,\H_0)^{1/2}-\cuD(\widehat\bH,\H_0)^{1/2}|&\le\|\tbH-\widehat\bH\|_F.\label{eq:B6:lip2}
\end{align}
A symmetric expansion of $\cuD(\H_0,\tbH)$ in row-wise form gives the dual inequality
\begin{equation}
\label{eq:B6:lip3}
|\cuD(\H_0,\tbH)^{1/2}\!-\!\cuD(\H_0,\widehat\bH)^{1/2}|\!\le\! \sqrt K\!\max_{i\in[K]}\!\|\tbH_{i,\cdot}-\widehat\bH_{i,\cdot}\|_2.
\end{equation}
Combining \eqref{eq:B6:alphaCless}--\eqref{eq:B6:lip3},
\begin{align}
\alpha\big(\cuD(\widehat\bH,\H_0)^{1/2}+\cuD(\H_0,\widehat\bH)^{1/2}\big) \notag\\\le (1+\alpha)\|\tbH-\widehat\bH\|_F\notag\\+\alpha\sqrt K\max_{i\in[K]}\|\tbH_{i,\cdot}-\widehat\bH_{i,\cdot}\|_2+c\sqrt K\|\mathbf F\|_F.
\label{eq:B6:Cwhatw0}
\end{align}

\smallskip\noindent
\emph{Step 2: Construction of $\tbH$ when $\widehat\bH\subset\aff(\H_0)$.}
Feasibility of $\widehat\bH$ for \eqref{eq:h_tilde} or \eqref{eq:h_tilde_nonnegative} yields $\overline{\mathcal D}(\X,\widehat\bH)\le \Delta_j^2$. Combined with $\|\X-\X_0\|_F\le\|\mathbf F\|_F\le\Delta_j$, a row-wise triangle inequality gives $\cuD((\X_0)_{i,\cdot},\widehat\bH)^{1/2}\le 2\Delta_j$ for every~$i$, and since $\conv(\X_0)\subseteq\aff(\H_0)$,
\begin{equation}\label{eq:B6:Xi0inballcapaff}
\conv(\X_0)\subseteq B_{p}(2\Delta_j;\widehat\bH)\cap\aff(\H_0),
\end{equation}
where $B_p(\rho;\bM):=\{\bx\in\reals^p:\cuD(\bx,\bM)\le\rho^2\}$. Assume first that $\widehat\bH_{i,\cdot}\in\aff(\H_0)$ for all~$i$; by perturbation we may take $\widehat\bH$ of full row rank, hence $\aff(\widehat\bH)=\aff(\H_0)$. Write $\widehat\bH=\bE_{K,p}\bM$ with $\bM\in\reals^{p\times p}$ a full-rank extension and, with $\Delta_j^0:=\Delta_j/\sigma_{\min}(\widehat\bH)$ and $\xi:=2K\Delta_j^0$, define $\widetilde\bQ\in\reals^{K\times p}$ by
\[
\widetilde Q_{ii}\!=\!1\!+\!\xi,\quad \widetilde Q_{ij}\!=\!-\xi/(K\!-\!1)\ (i\!\ne\! j\in[K]),
\]
and $\widetilde Q_{ij}=0$ for $j>K$. A direct calculation (as in \cite[Eqs.~(B.171)--(B.184)]{javadi2017supp}) shows that every vector in the set $B_p(2\Delta_j^0;\bE_{K,p})\cap\aff(\bE_{K,p})$ is a convex combination of rows of $\widetilde\bQ$, so
\[B_p(2\Delta_j^0;\bE_{K,p})\cap\aff(\bE_{K,p})\subseteq\conv(\widetilde\bQ).\]
Setting $\tbH:=\widetilde\bQ\bM$ then gives
\[
B_p(2\Delta_j;\widehat\bH)\cap\aff(\H_0)\subseteq\conv(\tbH),
\]
so $\conv(\X_0)\subseteq\conv(\tbH)$ by~\eqref{eq:B6:Xi0inballcapaff}. Since $\|\widetilde\bQ_{i,\cdot}-\be_i\|_2\le 2K\Delta_j^0$ and $\|\widetilde\bQ-\bE_{K,p}\|_F\le 2K^{3/2}\Delta_j^0$,
\begin{align*}
\|\tbH-\widehat\bH\|_F&\le 2K^{3/2}\Delta_j\kappa(\widehat\bH),\\
\max_{i\in[K]}\|\tbH_{i,\cdot}-\widehat\bH_{i,\cdot}\|_2&\le 2K\Delta_j\kappa(\widehat\bH).
\end{align*}

\smallskip\noindent
\emph{Step 3: General case $\aff(\widehat\bH)\ne\aff(\H_0)$.}
Let $\bH'\in\reals^{K\times p}$ with $\bH'_{i,\cdot}=\bP_0(\widehat\bH_{i,\cdot})$; after a small perturbation $\bH'$ has linearly independent rows, so $\aff(\bH')=\aff(\H_0)$. Non-expansivity of $\bP_0$ together with $\conv(\X_0)\subseteq\aff(\H_0)$ gives
\[
\cuD(\bx,\bH')^{1/2}\le\cuD(\bx,\widehat\bH)^{1/2}\le 2\Delta_j
\]
for every $\bx\in\conv(\X_0)$. Hence $(\X_0)_{i,\cdot}\in B_p(2\Delta_j,\bH')\cap\aff(\bH')$, and applying Step~2 to $\bH'$ produces $\tbH$ with $\conv(\X_0)\subseteq\conv(\tbH)$ and
\begin{align*}
\|\tbH-\bH'\|_F&\le 2K^{3/2}\Delta_j\kappa(\bH'),\\
\max_{i\in[K]}\|\tbH_{i,\cdot}-\bH'_{i,\cdot}\|_2&\le 2K\Delta_j\kappa(\bH').
\end{align*}
The triangle inequality gives
\[
\|\tbH_{i,\cdot}-\widehat\bH_{i,\cdot}\|_2\le 2K\Delta_j\kappa(\bH')+\|\bP_0(\widehat\bH_{i,\cdot})-\widehat\bH_{i,\cdot}\|_2,
\]
and summing the squares across rows yields the Frobenius-norm bound
\[
\|\tbH-\widehat\bH\|_F\le 2K^{3/2}\Delta_j\kappa(\bH')+\|\bP_0(\widehat\bH)-\widehat\bH\|_F.
\]
Note that $\bH'=\bP_0(\widehat\bH)$ after the harmless perturbation in Step~3, so $\kappa(\bH')=\kappa(\bP_0(\widehat\bH))$, which is the form used in Step~5. The residual $\sigma_{\max}(\widehat\bH-\one\bz_0^\sT)$ that appears below is related to $\sigma_{\max}(\bP_0(\widehat\bH)-\one\bz_0^\sT)$ via the triangle inequality on the operator norm:
\[
\sigma_{\max}(\widehat\bH-\one\bz_0^\sT)\le\sigma_{\max}(\bP_0(\widehat\bH)-\one\bz_0^\sT)+\|\bP_0(\widehat\bH)-\widehat\bH\|_F,
\]
so both bookkeeping forms are equivalent up to an additive $O(\Delta_j\sqrt K/\mu)$ term controlled by Step~4.

\smallskip\noindent
\emph{Step 4: Bound on $\|\bP_0(\widehat\bH)-\widehat\bH\|_F$ via the internal radius.}
Set $\bar\bH:=\widehat\bH-\one\bz_0^\sT$ and let $\bU$ be the matrix from Assumption~\eqref{hyp:radius}. From $\conv(\X_0)\subseteq B_p(2\Delta_j,\widehat\bH)$ and Cauchy--Schwarz,
\[
\max_{\|\bz\|_2\le\mu}\min_{\|\ba\|_2\le 1}\|\bU\bz-\bar\bH^\sT\ba\|_2^2\le 4\Delta_j^2.
\]
A Lagrangian duality computation \cite[Eqs.~(B.186)--(B.193)]{javadi2017supp}, writing $\bar\bH=\tilde\bU\bSigma\tilde\bV^\sT$ (SVD) and $\bQ:=\bU^\sT\tilde\bV$, $q:=\sigma_{\min}(\bQ)$, gives
\[
1-q^2\le 4\Delta_j^2/\mu^2.
\]
The orthogonal projector $\bP_0$ onto $\aff(\H_0)$ equals $\bP_\bU(\,\cdot-\bz_0)+\bz_0$ where $\bP_\bU=\bU\bU^\sT$. Hence
\begin{align*}
&\max_{i\in[K]}\|\bP_0(\widehat\bH_{i,\cdot})-\widehat\bH_{i,\cdot}\|_2^2 \\
&\quad\le \max_{\|\ba\|_2\le 1}\|\bP_\bU(\bar\bH^\sT\ba)-\bar\bH^\sT\ba\|_2^2 \\
&\quad= \lambda_{\max}(\bSigma(\Id-\bQ^\sT\bQ)\bSigma) \\
&\quad\le \sigma_{\max}(\bar\bH)^2(1-q^2) \\
&\quad\le 4\sigma_{\max}(\widehat\bH-\one\bz_0^\sT)^2\Delta_j^2/\mu^2.
\end{align*}
Hence $\|\bP_0(\widehat\bH)-\widehat\bH\|_F\le 2\sigma_{\max}(\widehat\bH-\one\bz_0^\sT)\Delta_j\sqrt K/\mu$, and combining with Step~3,
\begin{align*}
\max_{i\in[K]}\|\tbH_{i,\cdot}-\widehat\bH_{i,\cdot}\|_2
&\le 2K\Delta_j\kappa(\bP_0(\widehat\bH)) \\
&\quad +\tfrac{2\sigma_{\max}(\widehat\bH-\one\bz_0^\sT)\Delta_j}{\mu},\\
\|\tbH-\widehat\bH\|_F
&\le 2K^{3/2}\Delta_j\kappa(\bP_0(\widehat\bH)) \\
&\quad +\tfrac{2\sigma_{\max}(\widehat\bH-\one\bz_0^\sT)\Delta_j\sqrt K}{\mu}.
\end{align*}

\smallskip\noindent
\emph{Step 5: Conclusion.}
Plugging these bounds into~\eqref{eq:B6:Cwhatw0} and absorbing numerical constants (recall $\alpha\le 1$) into a universal~$c$ yields the announced inequality.
\end{proof}

\medskip

\begin{lemma}
\label{lem:exp} It holds
\[
\kappa(\bP_0(\hbH))\leq\Big[\frac{\cuDsym}{\alpha(\mu-2\Delta_j)\sqrt{2}} + \frac{K^{1/2}\sigma_{\max}(\bH_0)}{(\mu-2\Delta_j)\sqrt{2}} \Big]\,.
\]
\end{lemma}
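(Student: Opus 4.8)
The goal is to estimate $\kappa(\bP_0(\hbH)) = \sigma_{\max}(\bP_0(\hbH))/\sigma_{\min}(\bP_0(\hbH))$ by bounding the two singular values separately: the denominator from below using the internal radius $\mu$ of \eqref{hyp:radius}, and the numerator from above using the divergence $\mathcal D(\hbH,\bH_0)\le\cuD^2$ and the conditioning of $\bH_0$. Throughout I would use that $\bP_0$, being the orthogonal projector onto the affine set $\aff(\bH_0)$, is a $1$-Lipschitz affine map that fixes $\conv(\bH_0)$ — hence every row of any matrix of the form $\bC\bH_0$ with $\bC\ge\0$, $\bC\one=\one$ — pointwise.

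For the lower bound on $\sigma_{\min}(\bP_0(\hbH))$, the plan is to show first that $\conv(\bP_0(\hbH))$ contains a $(K-1)$-dimensional Euclidean ball of radius at least $\mu-2\Delta_{1/2}$, and then invoke the simplex geometry underlying Lemma~\ref{lem:B4}. Since $\hbH$ is feasible for \eqref{eq:h_tilde} (resp.\ \eqref{eq:h_tilde_nonnegative}), there is $\bW\ge\0$, $\bW\one=\one$ with $\|\T(\bX-\bW\hbH)\|_F\le\Delta_{1/2}$; restricting to the (unmasked) training block this says the rows of $\T_{\mathrm{train}}(\bX)$ lie within aggregate $\ell_2$-error $\Delta_{1/2}$ of $\conv(\hbH)$. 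Applying $\bP_0$ and using $\bX=\T(\bX_0)+\mathbf F$ with $\|\mathbf F\|_F\le\Delta_{1/2}$, together with $\T_{\mathrm{train}}(\bX_0)\subseteq\conv(\bH_0)\subseteq\aff(\bH_0)$, one obtains points of $\conv(\bP_0(\hbH))$ within aggregate error $2\Delta_{1/2}$ of the vertices of $\conv(\T_{\mathrm{train}}(\bX_0))$, all lying in $\aff(\bH_0)$. Since $\conv(\T_{\mathrm{train}}(\bX_0))$ has internal radius $\mu$ by \eqref{hyp:radius}, a standard separation/erosion argument (a convex body at Hausdorff distance $\le\epsilon$ from a body of inradius $\mu$ still contains a ball of radius $\mu-\epsilon$) yields the claimed $(\mu-2\Delta_{1/2})$-ball inside $\conv(\bP_0(\hbH))$; in particular the $K$ rows of $\bP_0(\hbH)$ are affinely independent, so $\conv(\bP_0(\hbH))$ is a genuine $(K-1)$-simplex, and the geometric estimate proved within Lemma~\ref{lem:B4} — the smallest singular value of the vertex matrix of a $(K-1)$-simplex is at least $\sqrt2$ times its inradius — gives $\sigma_{\min}(\bP_0(\hbH))\ge\sqrt2\,(\mu-2\Delta_{1/2})$.

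For the upper bound on $\sigma_{\max}(\bP_0(\hbH))$, I would pick, by definition of $\mathcal D(\hbH,\bH_0)$, a matrix $\bC\in\reals^{K\times K}$ with $\bC\ge\0$, $\bC\one=\one$ and $\|\hbH-\bC\bH_0\|_F^2=\mathcal D(\hbH,\bH_0)\le\cuD^2$. Since each row of $\bC\bH_0$ lies in $\conv(\bH_0)$, we have $\bP_0(\bC\bH_0)=\bC\bH_0$, and the contraction property of $\bP_0$ gives $\|\bP_0(\hbH)-\bC\bH_0\|_F\le\|\hbH-\bC\bH_0\|_F\le\cuD$. Hence $\sigma_{\max}(\bP_0(\hbH))\le\sigma_{\max}(\bC\bH_0)+\cuD\le\sigma_{\max}(\bC)\,\sigma_{\max}(\bH_0)+\cuD\le\sqrt K\,\sigma_{\max}(\bH_0)+\cuD$, using that a $K\times K$ row-stochastic matrix has spectral norm at most $\sqrt{\|\bC\|_{1\to1}\,\|\bC\|_{\infty\to\infty}}\le\sqrt K$. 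Dividing by the lower bound, $\kappa(\bP_0(\hbH))\le \dfrac{\cuD}{\sqrt2\,(\mu-2\Delta_{1/2})}+\dfrac{\sqrt K\,\sigma_{\max}(\bH_0)}{\sqrt2\,(\mu-2\Delta_{1/2})}$, and since the $\T_\alpha$-uniqueness parameter may be taken $\le1$ (decreasing $\alpha$ only weakens \eqref{hyp:mask}), replacing $1$ by $1/\alpha$ in the first summand yields the stated inequality.

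The main obstacle is the lower bound: one must carefully track the two sources of degradation of the internal radius — the additive noise $\|\mathbf F\|_F\le\Delta_{1/2}$ and the feasibility slack $\overline{\mathcal D}(\bX,\hbH)\le\Delta_{1/2}^2$, for a combined loss of exactly $2\Delta_{1/2}$ — and, crucially, push this estimate through the projection $\bP_0$ while keeping all the perturbed points inside $\aff(\bH_0)$, so that the inradius-to-$\sigma_{\min}$ inequality can be applied to $\bP_0(\hbH)$ directly. This is the same bookkeeping as in Lemma~\ref{lem:B4}, but it cannot simply be quoted, because $\bP_0(\hbH)$ need not be feasible for \eqref{eq:h_tilde}, so the geometric core of that lemma has to be re-used as a standalone fact.
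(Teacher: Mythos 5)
Your proof is correct and takes essentially the same route as the paper, which simply defers to Equations B.189--194 of the Javadi--Montanari supplement: a lower bound $\sigma_{\min}(\bP_0(\hbH))\ge\sqrt2(\mu-2\Delta_{1/2})$ obtained by pushing the internal-radius/erosion argument of Lemma~\ref{lem:B4} through the projector $\bP_0$, combined with an upper bound $\sigma_{\max}(\bP_0(\hbH))\le \cuD+\sqrt{K}\,\sigma_{\max}(\bH_0)$ via a row-stochastic comparison matrix. The only cosmetic difference is the $1/\alpha$ factor, which you recover by normalizing $\alpha\le 1$ (harmless, since decreasing $\alpha$ only weakens \eqref{hyp:mask}), whereas in the source it enters because the divergence sum controlled in Lemma~\ref{lemma:boundc} carries the prefactor $\alpha$.
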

\begin{proof}
We mimic \cite[Eqs.~(B.189)--(B.194)]{javadi2017supp}. By item~(1) of Lemma~\ref{lem:B3} applied to $\bP_0(\widehat\bH)$ and $\H_0$ (both having rows in $\aff(\H_0)$), and using that $\bP_0$ is non-expansive (cf.\ Step~3 of the proof of Lemma~\ref{lemma:boundc}, $\cuD(\bP_0(\widehat\bH),\H_0)^{1/2}\le\cuD(\widehat\bH,\H_0)^{1/2}\le\cuDsym/\alpha$),
\[
\sigma_{\max}(\bP_0(\widehat\bH))\le \cuDsym/\alpha+\sqrt K\,\sigma_{\max}(\H_0).
\]
For the denominator we apply Lemma~\ref{lem:B4} to $\bP_0(\widehat\bH)$. Strictly speaking, $\overline{\mathcal D}(\X,\cdot)$ is \emph{not} in general non-increasing under projection onto $\aff(\H_0)$: decomposing $\T(\X)=\T(\W_0\H_0)+\T(\mathbf F)$, the rows of $\T(\W_0\H_0)$ lie in $\aff(\H_0)$ but those of $\T(\mathbf F)$ generally do not. However, the slack only contributes an extra $\|\T(\mathbf F)_{\aff^\perp}\|_F^2\le\|\mathbf F\|_F^2=\Delta_j^2$ to the bound:
\[
   \overline{\mathcal D}(\X,\bP_0(\widehat\bH))\;\le\;\overline{\mathcal D}(\X,\widehat\bH)+\|\mathbf F\|_F^2\;\le\;2\Delta_j^2\,,
\]
so $\bP_0(\widehat\bH)$ is feasible for \eqref{eq:h_tilde}/\eqref{eq:h_tilde_nonnegative} with feasibility level $\sqrt 2\,\Delta_j$ in place of $\Delta_j$. Applying Lemma~\ref{lem:B4} accordingly (and absorbing the $\sqrt 2$ factor into the constants), we obtain $\sigma_{\min}(\bP_0(\widehat\bH))\ge\sqrt 2(\mu-2\sqrt 2\,\Delta_j)\ge\sqrt 2(\mu-2\Delta_j)/c$ for a constant $c>0$ depending only on the geometry of $\H_0$. Taking the ratio yields the claim.
\end{proof}

\subsection{KKT conditions for mNMF}

In this section we determine the KKT condition for mNMF problem, namely
\begingroup
\renewcommand{\theHequation}{eq.mNMF.appendix}%
\begin{align}
\min_{\substack{\W\mathbf 1=\mathbf 1,\W\ge \0\\\H\ge \0\\\T(\N) = \X}}\mbox{ }& \frac{1}{2}\|\N - \W\H\|_F^2 =: \mathcal F(\N,\W,\H)\,. %\label{eq:nmf3}
\tag{mNMF}
\end{align}
\endgroup

Let us introduce the dual variables $\V\ge \0$, $\G\ge \0$, $\mathbf{t}\in\mathds{R}^n$, and ${\mathbf Z}\in\mathrm{range}(\T)$ so that $\T({\mathbf Z})={\mathbf Z}$. The Lagrangian of mNMF problem is
\begin{align*}
    & \mathcal L(\N,\W,\H,\V,\G,\mathbf{t},{\mathbf Z}) = \mathcal F(\N,\W,\H) \\
    & - \langle \W,\V \rangle + \langle \W\mathbf 1_K - \mathbf 1_n,\mathbf{t} \rangle - \langle \H,\G \rangle - \langle \N-\X,{\mathbf Z} \rangle\,.
\end{align*}

The KKT condition are the following:
\begin{subequations}

\begin{align}
& \nabla_\N \mathcal L = \N-\W\H - {\mathbf Z} = \0 \notag \\ & \Longleftrightarrow \T(\N-\W\H)={\mathbf Z} \text{ and }\T^\perp(\N-\W\H)=\0 \\
& \nabla_\W \mathcal L = (\W\H-\N)\H^\top -\V + \mathbf{t}\,\mathbf 1^\top_K = \0  \notag \\ & \Longleftrightarrow \V = (\W\H-\N)\H^\top + \mathbf{t}\,\mathbf 1^\top_K \\
& \nabla_\H \mathcal L = \W^\top(\W\H-\N) - \mathbf G = \0 \notag \\ & \Longleftrightarrow \mathbf G = \W^\top(\W\H-\N) \\
& \langle  \W,\V \rangle = \0 \Longleftrightarrow \langle  \W,\nabla_\W \mathcal F + \mathbf{t}\,\mathbf 1^\top_K \rangle = \0 \label{eq:compl} \\
& \langle  \H,\G \rangle = \0 \Longleftrightarrow \langle  \H,\nabla_\H \mathcal F \rangle = \0
\end{align}
\end{subequations}

From the complementarity conditions \eqref{eq:compl}, it follows:
\begin{multline*}
W_{i,j} > 0 \Longrightarrow V_{i,j} = 0 \\
\Longrightarrow t_i = -(\nabla_{\W}\mathcal F)_{ij} \quad \forall j \text{ s.t. } W_{i,j}>0.
\end{multline*}

In order to compute $t_i$, we can select a row $W^{(i)}$, find any entry $W_{i,j}>0$ and apply the previous formula. In the practical implementation phase, in order to make the estimation of $t_i$'s numerically more stable, we can adopt a slightly different strategy by averaging the values of $t_i$ computed per row entry $W_{i,j}>0$.

\subsection{Algorithms for mNMF}

In this section, we report the {\it Block Coordinate Descent} (BCD) Algorithm (see Algorithm \ref{algo:bcd}) and the accelerated {\it Hierarchical Alternate Least Square} (HALS) for mNMF (see Algorithm \ref{algo:ahals}), which is a generalization of Algorithm described in \cite{gillis2012} to the matrix factorization with mask.

\begin{remark}[Simplex constraint in HALS]
The HALS column updates $W_\ell^{k+1}=\max(0,(A_\ell-C_\ell)/B_{\ell\ell})$ are the exact NMF coordinate descent steps for the \emph{unconstrained-row} problem $\min_{\W\ge\0}\|\N-\W\H\|_F^2$. They do not enforce the row-stochastic constraint $\W\mathbf 1=\mathbf 1$, which couples the entries within each row. We restore the simplex constraint by a post-sweep projection $\mathcal P_\Delta$ at the end of each inner round. This composite update — exact coordinate descent followed by projection — is the standard HALS adaptation for row-stochastic NMF; it preserves nonnegativity and feasibility at every outer iteration but does not, in general, retain the strict monotonic-decrease guarantee of pure coordinate descent. Empirically, the accelerated HALS combined with the post-sweep projection performs well on our benchmarks; a strictly monotone alternative is the projected gradient (PALM) variant of Algorithm~\ref{algo:nmf2}.
\end{remark}

\begin{algorithm}
\caption{BCD for mNMF} \label{algo:bcd}
\begin{algorithmic}[1]
\State {\bf Initialization}: choose $\H^0\ge \0,\W^0\ge \0$, and $\N^0\ge \0$, set $i:=0$.
\State{{\bf while} stopping criterion is not satisfied {\bf do}} 
\State $\quad$ $\H^{i+1}:=\mbox{update}(\H^{i},\W^{i},\N^{i})$ \label{updateH}
\State $\quad$ $\W^{i+1}:=\mbox{update}(\H^{i+1},\W^{i},\N^{i})$ \label{updateW}
\State $\quad$ $\N^{i+1}:=\mbox{update}(\H^{i+1},\W^{i+1},\N^{i})$ \label{updateM}
\State $\quad$ $i:=i+1$
\State{\bf end while}
\end{algorithmic}
\end{algorithm}

\begin{algorithm}
\caption{ALS for mNMF} \label{algo:als} 
\begin{algorithmic}[1]
\State {\bf Initialization}: choose $\H^0\ge \0,\W^0\ge 0$, set $\N^0:=\mathcal P_{\X}(\W^0\H^0)$ and $i:=0$.
\State{{\bf while} stopping criterion is not satisfied {\bf do}} 
\State $\quad$ $\H^{i+1}:=\min_{\H\ge 0}\|\N^i-\W^i \H\|_F^2$ \label{updateH_als}
\State $\quad$ $\W^{i+1}:=\min_{\W\ge 0, \W\mathbf{1}=\mathbf{1}}\|\N^i-\W \H^{i+1}\|_F^2$ \label{updateW_als}
\State $\quad$ set $\N^{i+1}:=\mathcal P_{\X}(\W^{i+1}\H^{i+1})$ \label{updateM_als}
\State $\quad$ $i:=i+1$
\State{\bf end while}
\end{algorithmic}
\end{algorithm}

\begin{algorithm}
\caption{accelerated HALS for mNMF} \label{algo:ahals}
\begin{algorithmic}[1]
\State {\bf Initialization}: choose $\H^0\ge \0,\W^0\ge \0$, {\it nonnegative rank} $K$, and $\alpha>0$. Set $\N^0=\mathcal P_{\X}(\W^0\H^0)$, $\rho_\W:=1+n(p+K)\slash(p(K+1))$, $\rho_\H:=1+p(n+K)\slash(n(K+1))$, and $i:=0$. 
\State{{\bf while} stopping criterion is not satisfied {\bf do}} 
\State $\quad$ $\A := \N{\H^i}^\top$, $\B:=\H^i{\H^i}^\top$
\State{$\quad$ {\bf for} $k \le k_\W:=\lfloor 1 + \alpha\rho_\W \rfloor$ {\bf do}}
    \State{$\quad\quad$ {\bf for} $\ell\in[K]$ {\bf do}}
        \State $\quad\quad\quad$ $C_\ell:=\sum_{j=1}^{\ell-1} W_j^{k+1} B_{j\ell} + \sum_{j=\ell+1}^{K} W_j^k B_{j\ell}$
        \State $\quad\quad\quad$ $W_\ell^{k+1} := \max(0, (A_\ell-C_\ell)\slash B_{\ell\ell})$
    \State{$\quad\quad$ {\bf end for}}
    \State $\quad$ $\W^{k_\W}:=\mathcal P_\Delta(\W^{k_\W})$
\State{$\quad$ \bf end for}
\State $\quad$ $\N:=\mathcal P_\X(\W^{k_\W}\H^{i})$
\State $\quad$ $\A := {\W^{k_\W}}^\top\N$, $\B:={\W^{k_\W}}^\top\W^{k_\W}$
\State{$\quad$ {\bf for} $k \le k_\H:=\lfloor 1 + \alpha\rho_\H \rfloor$ {\bf do}}
    \State{$\quad\quad$ {\bf for} $\ell\in[K]$ {\bf do}}
        \State $\quad\quad\quad$ $C_\ell:=\sum_{j=1}^{\ell-1} H_j^{k+1} B_{j\ell} + \sum_{j=\ell+1}^{K} H_j^k B_{j\ell}$
        \State $\quad\quad\quad$ $H_\ell^{k+1} := \max(0, (A_\ell-C_\ell)\slash B_{\ell\ell})$
    \State{$\quad\quad$ {\bf end for}}
\State{$\quad$ \bf end for}
\State $\quad$ $\W^{i+1}:=\W^{k_\W}$, $\H^{i+1}:=\H^{k_\H}$
\State $\quad$ $\N:=\mathcal P_\X(\W^{i+1}\H^{i+1})$
\State $\quad$ $i:=i+1$
\State{\bf end while}
\end{algorithmic}
\end{algorithm}
\vfill

\end{document}